\documentclass{article} %
\usepackage{iclr2027_conference,times}

\usepackage{amsmath,amssymb,amsfonts,mathtools}
\usepackage{amsthm}
\usepackage{bm}
\usepackage{booktabs}
\usepackage{multirow}
\usepackage{graphicx}
\usepackage{wrapfig}
\usepackage{xcolor}
\usepackage{enumitem}
\usepackage{algorithm}
\usepackage{algorithmic}
\usepackage{xspace}

\usepackage{amsmath,amsfonts,bm}

\def\eqref#1{equation~\ref{#1}}

\def\1{\bm{1}}

\DeclareMathAlphabet{\mathsfit}{\encodingdefault}{\sfdefault}{m}{sl}
\SetMathAlphabet{\mathsfit}{bold}{\encodingdefault}{\sfdefault}{bx}{n}

\usepackage{hyperref}
\usepackage{url}

\usepackage{booktabs}
\usepackage[table]{xcolor}

\definecolor{LossBlue}{HTML}{2171B5}
\definecolor{DeltaRed}{HTML}{CB181D}

\newtheorem{theorem}{Theorem}
\newtheorem{proposition}[theorem]{Proposition}

\newtheorem{corollary}[theorem]{Corollary}

\providecommand{\bulkboost}{\textsc{BulkBoost}\xspace}
\providecommand{\algostatic}{\textsc{BB-Static}\xspace}
\providecommand{\algoadaptive}{\textsc{BB-Adaptive}\xspace}
\providecommand{\freon}{\textsc{Freon}\xspace}
\providecommand{\deva}{\textsc{DeVA}\xspace}
\providecommand{\spectra}{\textsc{Spectra}\xspace}
\providecommand{\muon}{\textsc{Muon}\xspace}
\providecommand{\adamw}{\textsc{AdamW}\xspace}

\providecommand{\dmodel}{d_{\mathrm{model}}}

\providecommand{\FineMath}{FineMath-4}
\providecommand{\FineMathN}{FineMath-3}
\providecommand{\CodeParrot}{CodeParrot}
\providecommand{\WikiDe}{Wiki-de}
\providecommand{\WikiZh}{Wiki-zh}
\providecommand{\WikiAr}{Wiki-ar}

\usepackage{booktabs}
\usepackage[table]{xcolor}
\usepackage{pgf}

\definecolor{LossBlue}{HTML}{2171B5}

\newcommand{\heatcell}[3]{%
  \pgfmathparse{8 + 47*max(0,min(1,((#3)-(#1))/((#3)-(#2))))}%
  \edef\heatcolor{\noexpand\cellcolor{LossBlue!\pgfmathresult}}%
  \heatcolor #1%
}

\title{Does Muon Need Fine-Grained Spectral Shaping?}

\author{
Meher Chaitanya, Tianyi Zhou \& Aristides Gionis \\
\;KTH Royal Institute of Technology \\
\;Stockholm, Sweden \\
\;\texttt{\{mcpi,tzho,argioni\}@kth.se}
}

\iclrfinalcopy %
\begin{document}

\maketitle
\lhead{Preprint}
\begin{abstract}

\muon{} combines current and past gradients into matrix momentum. For $M=U\Sigma V^\top$, the idealized polar update $Q=UV^\top$ gives every singular direction the same weight. We refer to this as the flat profile. Several recent optimizers replace this flat profile with fine-grained spectral maps that give each direction its own gain. \emph{We ask how much of this spectral detail a \muon{} update needs}. Our spectral diagnostics show that approximately $94$--$97\%$ of measured singular modes lie below an estimated noise edge, yet collectively align positively with
a reference gradient.

We introduce \textsc{BulkBoost}, a two-band spectral reweighting framework with fixed-rank and noise-calibrated variants. The latter uses split-minibatch gradient differences to calibrate a Marchenko--Pastur reference edge for \muon{}'s Nesterov input, separating the \emph{bulk} below the edge from the \emph{spikes} above it. Both variants increase the bulk's relative weight through \emph{one shared gain} while preserving the Frobenius norm of each matrix's unreweighted direction. For a fixed partition, our theory gives the first-order condition under which moving weight toward the bulk lowers the loss. It also quantifies the fraction of the maximal first-order improvement rate, over all per-mode reallocations, that two bands can capture. A polar-plus-low-rank representation enables an approximation without a full SVD, tracking only the spike subspace. Across $30$ continued-pretraining settings spanning Pythia-$14$M to $410$M and
six corpora, two-band reweighting is competitive with the fine-grained
power-law profile of \textsc{Freon}~\citep{shumaylov2026freon} and outperforms \textsc{Spectra}~\citep{huang2026spectra}. Measured against \muon{}'s flat profile, \textsc{Freon} reduces final
loss by $0.022\%$ of the pre-adaptation loss on average, whereas the two-band variants achieve reductions of $0.073$--$0.147\%$. These observations suggest that useful departures from the flat profile are surprisingly low-dimensional: a single bulk-to-spike gain captures at least as much benefit as the fine-grained spectral profiles.
\end{abstract}

\section{Introduction}
\label{sec:intro}
 An optimizer for neural-network weight matrices decides two things: how large an update is, and how the update is spread across directions. \muon{}, an optimizer for language-model training, combines current and past gradients into a matrix momentum $M$ \citep{jordan2024muon,liu2025scalable}.
Its SVD, $M=U\Sigma V^\top$, writes $M$ as a sum of orthogonal directions weighted by their singular values.
The idealized polar update $Q=UV^\top$ keeps these directions but sets every singular value to \emph{one}. Each direction therefore receives the same gain. We refer to this as the \emph{flat profile}. Recent optimizers depart from this flat profile through spectral maps $Ug(\Sigma)V^\top$, where the gain varies across singular directions \citep{shumaylov2026freon,dong2026muonp,wu2026dynmuon,
pang2026htmuon,feoktistov2026softsign}. At the same time, \citet{shumaylov2026freon} show that substantially different spectra, including random and inverted ones, can remain effective. These results raise a more basic question than which
detailed profile is best: \emph{does \muon{} need fine-grained spectral shaping, or is a coarse reallocation between two groups of directions enough?}

Two observations motivate this question.
First, a direction's strength in the sampled momentum,
its separation from sampling noise, and the benefit of
increasing its update weight are distinct quantities.
A small singular value can still carry signal, and a
large one need not deserve more weight.
Second, our diagnostics reveal a broad spectral bulk
and a small leading band. Under a Gaussian white-noise reference model, random matrix theory provides an asymptotic upper spectral edge, the \emph{noise edge}
\citep{marchenko1967,baik2005phase,benaychgeorges2012singular}.
We call directions above this reference edge \emph{spikes}
and the remaining directions the \emph{bulk}.
On Pythia-70M \muon{} trajectories, approximately
$93.7$--$97.1\%$ of pooled singular modes lie in the bulk
(Appendix~\ref{app:band-content}). These below-edge singular modes, the \emph{bulk}, are weak individually, yet their aggregate signed alignment with an independently sampled reference gradient is positive in every analyzed matrix.
Thus, the broad spectral bulk makes a non-negligible collective
contribution even though its individual directions are weak.
These measurements motivate a two-band partition.

We introduce \textsc{BulkBoost}, whose exact target reweights \muon{}'s polar direction in two bands. The spike and bulk gains are $1$ and $\rho$ before normalization, and the update is then rescaled to keep the Frobenius norm of the unreweighted direction. At the exact-target level, $\rho=1$ recovers $Q$, while $\rho>1$ increases the bulk's relative weight. Once the partition is chosen, the same $\rho$ controls reallocation across all spectrally updated matrices. We consider two rules for the split. The \emph{fixed-rank rule}, \textsc{BB-Static}, takes
a prescribed fraction $p$ of the leading directions, shared across matrices and held fixed within each run. Our shared configuration uses $p=5\%$, within the pooled spike fraction of $2.9$--$6.3\%$ measured in our diagnostics (Appendix~\ref{app:band-content}).  We also test smaller and larger fractions. The \emph{noise-calibrated rule}, \textsc{BB-Adaptive}, estimates the sampling noise from two minibatch halves, propagates it through the momentum, and places the split at the resulting noise edge for each matrix and step. The edge assumes isotropic Gaussian noise. Since the real gradient noise is not isotropic, this split is an approximation. Our contributions are as follows.

\begin{enumerate}
    \item \textbf{A two-band reweighting rule.}
    \textsc{BulkBoost} reweights \muon{}'s flat update in two bands with one shared gain. The split between the bands comes either from a fixed fraction of directions or from a noise-calibrated edge. An exact polar-plus-low-rank identity lets us avoid a full SVD (Section~\ref{sec:method}).

    \item \textbf{What the two bands capture.}
    For a fixed partition, we characterize to first order when shifting weight
    toward the bulk lowers the loss, and how much of the best per-direction
    improvement two bands can reach (Section~\ref{sec:theory}).

    \item \textbf{Evidence from the spectrum.}
    Our spectral diagnostics establish that most singular directions of \muon{}'s momentum lie below the noise edge. This bulk is weak direction by direction, yet it carries useful signal in aggregate. Giving it more relative weight lowers the loss after a single update in most tested states (Appendices~\ref{app:band-content} and~\ref{app:post-update-loss}).

    \item \textbf{Competitive gains without fine-grained shaping.}
    In continued pretraining settings spanning Pythia-$14$M to $410$M and six corpora, two-band reweighting gives, on average, a slightly larger improvement over \muon{}'s flat profile than the fine-grained direction-specific spectral profiles we compare against. Thus, a single shared gain between spikes and bulk is enough in these settings (Section~\ref{sec:experiments}).
\end{enumerate}

Related work is provided in Appendix~\ref{app:related-work}.

\section{Setup and Spectral Reweighting}
\label{sec:preliminaries}

\textbf{\muon{} and notation.}
Consider a matrix parameter $W_t\in\mathbb R^{m\times n}$ with minibatch
gradient $G_t$ after any global gradient clipping.
Let $d=\min\{m,n\}$ and $[d]=\{1,\ldots,d\}$.
We use $\langle A,B\rangle_F=\operatorname{tr}(A^\top B)$, and $\|A\|_F$ and
$\|A\|_2$ denote the Frobenius and spectral norms.
For $\beta\in[0,1)$ and $B_0=0$, the exponential moving-average (EMA) buffer
and the Nesterov spectral input are, for $t\geq1$,
\begin{equation}
B_t=\beta B_{t-1}+(1-\beta)G_t,
\qquad
M_t=\beta B_t + (1-\beta)G_t.
\label{eq:muon-spectral-input}
\end{equation}
Let $M_t=U_t\Sigma_tV_t^\top$ be its thin singular value decomposition (SVD),
with singular values $\sigma_{t,1}\geq\cdots\geq\sigma_{t,d}$ and paired
singular vectors $u_{t,i},v_{t,i}$.
\muon{}'s idealized direction is $Q_t=\operatorname{polar}(M_t)=U_tV_t^\top$,
which gives unit gain to every singular direction
\citep{jordan2024muon,liu2025scalable}. In practice, \muon{} approximates $Q_t$ with a few Newton--Schulz iterations.
All spectral quantities refer to $M_t$. The exact formulas below assume full-rank inputs. Appendix~\ref{app:rank-conventions} extends them to rank-deficient inputs by restricting to nonzero singular modes.

\textbf{Two-band partitions.}
A partition, which we also call a split, divides the singular directions of
$M_t$ into a spike band $\mathcal I_{\mathrm{sp},t}\subseteq[d]$ and a bulk
band $\mathcal I_{\mathrm{bu},t}=[d]\setminus\mathcal I_{\mathrm{sp},t}$.
Both rules we consider put the leading directions in the spike band, so
$\mathcal I_{\mathrm{sp},t}=\{1,\ldots,r_t\}$.
The rules differ only in how they choose $r_t$.
The fixed-rank rule, \textsc{BB-Static}, sets $r_t=\lceil pd\rceil$ for a
fixed fraction $p\in(0,1)$.
The noise-calibrated rule, \textsc{BB-Adaptive}, compares the singular values
with a threshold $\tau_t$:
\begin{equation}
\mathcal I_{\mathrm{sp},t}
=\{i\in[d]:\sigma_{t,i}>\tau_t\},
\qquad
\mathcal I_{\mathrm{bu},t}
=\{i\in[d]:\sigma_{t,i}\leq\tau_t\}.
\label{eq:mp-partition}
\end{equation}
The threshold is based on the \emph{noise edge}.
For a large Gaussian white-noise matrix with entrywise standard deviation $s$, the top singular value approaches $s(\sqrt m+\sqrt n)$
\citep{marchenko1967,geman1980limit}.
Section~\ref{sec:method} calibrates this edge for the Nesterov input using two
minibatch halves.
Real gradient noise is not isotropic Gaussian, so the edge is an approximate
reference for the split.

\textbf{Norm-matched spectral reweighting.}
A nonzero profile $w_t\in[0,\infty)^d$ specifies unnormalized gains along the
singular directions of $M_t$.
Its exact norm-matched direction is
\begin{equation}
D_t(w_t) =c_t(w_t)U_t\operatorname{diag}(w_t)V_t^\top,
\qquad
c_t(w_t) =\sqrt{\frac{d}{\sum_{i=1}^{d}w_{t,i}^2}}.
\label{eq:norm-matched-profile}
\end{equation}
Thus $\|D_t(w_t)\|_F=\|Q_t\|_F=\sqrt d$, which fixes the per-matrix direction
norm before learning-rate scaling.
A scalar map $g:(0,\infty)\to[0,\infty)$ induces the profile
$w_{t,i}=g(\sigma_{t,i})$, and we write $D_t(g):=D_t(w_t)$ when this profile
is nonzero.
Fine-grained profiles, such as the power law of \textsc{Freon}
\citep{shumaylov2026freon}, have this form: each direction's gain depends on
its own singular value.
\textsc{BulkBoost} instead uses the singular values only to assign bands.
Its two-level profile has one bulk-to-spike gain $\rho\geq0$, shared by all
matrices:
\begin{equation}
w_{t,i}^{(\rho)}
=
\begin{cases}
1, & i\in\mathcal I_{\mathrm{sp},t},\\
\rho, & i\in\mathcal I_{\mathrm{bu},t},
\end{cases}
\qquad
c_{\rho,r_t}
:=c_t\!\left(w_t^{(\rho)}\right)
=\sqrt{\frac{d}{r_t+\rho^2(d-r_t)}}.
\label{eq:two-level-profile}
\end{equation}
For a nonzero profile, the applied spike and bulk gains are $c_{\rho,r_t}$ and
$\rho c_{\rho,r_t}$.
At $\rho=1$, the profile is flat and the direction is $Q_t$.
With both bands nonempty, $\rho>1$ gives the bulk more weight relative to the
spikes, and $0\leq\rho<1$ gives it less.
Because norm matching rescales both bands by $c_{\rho,r_t}$, $\rho$ sets only
the ratio of the two gains.
Multiplying the bulk gains by $\rho$ and dividing the spike gains by $\rho$
therefore give the same update.
For small spike fractions and $\rho>1$, the bulk gain stays close to one, and
most of the change comes from a smaller spike gain.
Appendix~\ref{app:spectral-setup} gives further details on normalization
identities, relative and effective gains, and conventions for rank-deficient
inputs and empty bands.
Section~\ref{sec:method} realizes this direction with a numerical polar
approximation $\widehat Q_t$.

\section{Noise-Edge Reweighted \muon: \textsc{BulkBoost}}
\label{sec:method}
This section describes how \textsc{BulkBoost} computes its update.

\subsection{Noise-Calibrated Partitioning}
\label{sec:split-edge}
Here we describe how \textsc{BB-Adaptive} places its threshold. At fixed parameters, let $G_t^{(1)},G_t^{(2)}$ be the raw mean gradients of two
independent, equal-sized minibatch halves from the same sampling distribution.
Define
\begin{equation}
G_t^{\mathrm{raw}}
=\frac{G_t^{(1)}+G_t^{(2)}}{2},
\qquad
Z_t^{\mathrm{split}}
=\frac{G_t^{(1)}-G_t^{(2)}}{2}.
\label{eq:split-gradients}
\end{equation}
The half-batch difference $Z_t^{\mathrm{split}}$ has zero mean.
After vectorization, it has the same covariance as the sampling noise in the
raw full-minibatch mean.

Let $\chi_t\in(0,1]$ be the full-minibatch clipping scalar computed across all
model parameters, with $\chi_t=1$ without clipping.
We use $G_t=\chi_tG_t^{\mathrm{raw}}$ in \eqref{eq:muon-spectral-input} and do
not clip the halves separately.
We apply $\chi_t^2$ to the raw variance estimate before smoothing:
\begin{equation}
\widehat s_t^{\,2}
=\frac{\|Z_t^{\mathrm{split}}\|_F^2}{mn},
\qquad
\overline s_t^{\,2}
=\gamma\overline s_{t-1}^{\,2}
+(1-\gamma)\chi_t^2\widehat s_t^{\,2}.
\label{eq:split-variance-estimate}
\end{equation}
For $\gamma\in[0,1)$, each matrix's EMA starts at
$\overline s_1^{\,2}=\chi_1^2\widehat s_1^{\,2}$, and the recursion applies for
$t\geq2$. The threshold must match the noise in the Nesterov input $M_t$, which differs
from the noise in the gradient or in the EMA buffer.
From \eqref{eq:muon-spectral-input}, $M_t=(1-\beta^2)G_t+\beta^2B_{t-1}$, so
the current gradient enters both directly and through the updated buffer.
Under a reference model with centered, temporally independent, isotropic input
noises of constant entrywise variance $s^2$ and $B_0=0$, the propagated noise
variance in $M_t$ is $\nu_t^2(\beta)s^2$, where
\begin{equation}
\nu_t^2(\beta)
=(1-\beta)^2\left[
(1+\beta)^2+
\frac{\beta^4(1-\beta^{2(t-1)})}{1-\beta^2}
\right],
\qquad t\geq1.
\label{eq:nesterov-noise-factor}
\end{equation}
Using its nonnegative square root $\nu_t(\beta)$, we set the threshold to the
estimated noise edge of $M_t$:
\begin{equation}
\tau_t=\widehat\tau_t
=\kappa_{\mathrm{edge}}\nu_t(\beta)
\sqrt{\overline s_t^{\,2}}(\sqrt m+\sqrt n).
\label{eq:estimated-noise-edge}
\end{equation}
with a fixed multiplier $\kappa_{\mathrm{edge}}>0$
($\kappa_{\mathrm{edge}}=1$ in all experiments) and the square root
taken after variance smoothing.
Proposition~\ref{prop:edge-calibration} proves the half-batch and
Nesterov noise formulas, and Appendix~\ref{app:method-calibration}
states the assumptions behind the threshold.

\subsection{Structure and SVD-Free Realization}
\label{sec:svd-free}
\textbf{Exact target.}
For either rule, the oracle evaluates $D_t^{\mathrm{svd}}=D_t(w_t^{(\rho)})$
using \eqref{eq:norm-matched-profile} and \eqref{eq:two-level-profile}.
For the spike singular-vector blocks $U_{t,\mathrm{sp}},V_{t,\mathrm{sp}}$,
define $P_{t,\mathrm{sp}}=U_{t,\mathrm{sp}}U_{t,\mathrm{sp}}^\top$ and
$R_{t,\mathrm{sp}}=V_{t,\mathrm{sp}}V_{t,\mathrm{sp}}^\top$.
Since $P_{t,\mathrm{sp}}Q_t=Q_tR_{t,\mathrm{sp}}
=U_{t,\mathrm{sp}}V_{t,\mathrm{sp}}^\top$,
\begin{equation}
D_t^{\mathrm{svd}}=
\begin{cases}
c_{\rho,r_t}[\rho Q_t-(\rho-1)P_{t,\mathrm{sp}}Q_t], & m\leq n,\\
c_{\rho,r_t}[\rho Q_t-(\rho-1)Q_tR_{t,\mathrm{sp}}], & n<m.
\end{cases}
\label{eq:oracle-polar-correction}
\end{equation}
The flat update $Q_t$ carries the bulk, and a correction of rank at most $r_t$
adjusts the spikes.
Theorem~\ref{thm:exact-decomposition} extends this identity to
direction-dependent spike gains.
Both projector expressions are exact for either orientation of the matrix.
We keep both cases as the tracking below works on the smaller side.

\textbf{Spike-subspace tracking.}
Both variants track the spike subspace on the smaller Gram side,
\begin{equation}
K_t=
\begin{cases}
M_tM_t^\top, & m\leq n,\\
M_t^\top M_t, & n<m.
\end{cases}
\label{eq:gram-operator}
\end{equation}
We evaluate $K_t$ through products with $M_t$ and $M_t^\top$, without forming
the Gram matrix.
\textsc{BB-Static} keeps a block of fixed width $\lceil pd\rceil$ and refines
it with $q$ block-power iterations at each step.
\textsc{BB-Adaptive} must also find how many directions lie above the
threshold.
It augments the previously retained basis $X_{t-1}$ with fresh standard
Gaussian probes $\Omega_t$ and sets $Y_0=\operatorname{orth}([X_{t-1},\Omega_t])$,
where $\operatorname{orth}$ returns an orthonormal basis of the column space.
After $q$ updates $Y_{j+1}=\operatorname{orth}(K_tY_j)$, the eigenvalues of
$Y_q^\top K_tY_q$ estimate squared singular values, and multiplying its
eigenvectors by $Y_q$ gives the Ritz directions.
We keep the Ritz directions whose eigenvalues exceed $\widehat\tau_t^{\,2}$,
in decreasing order, up to an absolute rank cap $r_{\max}$.
Reorthonormalizing them gives $X_t\in\mathbb R^{d\times\widehat r_t}$.
Finite iterations, the size of the trial space, and a binding rank cap can make
the retained rank $\widehat r_t$ differ from the exact rank $r_t$.
Appendix~\ref{app:spike-tracking} gives initialization, probe generation, and
selection details.

\textbf{Numerically norm-matched update.}
We compute a numerical polar factor
$\widehat Q_t=\operatorname{PolarExpress}(M_t)$ \citep{amsel2025polar} and form
\begin{equation}
\widetilde D_t=
\begin{cases}
\rho\widehat Q_t
-(\rho-1)X_t(X_t^\top\widehat Q_t), & m\leq n,\\
\rho\widehat Q_t
-(\rho-1)(\widehat Q_tX_t)X_t^\top, & n<m.
\end{cases}
\label{eq:svdfree-raw-direction}
\end{equation}
For nonzero $\widehat Q_t$ and $\widetilde D_t$, we set
\begin{equation}
\widehat D_t=\widehat c_{\rho,t}\widetilde D_t,
\qquad
\widehat c_{\rho,t}
=\frac{\|\widehat Q_t\|_F}{\|\widetilde D_t\|_F}.
\label{eq:numerical-norm-matching}
\end{equation}
In exact arithmetic, $\|\widehat D_t\|_F=\|\widehat Q_t\|_F$, and $\rho=1$
returns the unreweighted direction $\widehat Q_t$.
A zero $\widehat Q_t$ gives a zero update.
Otherwise, an orthonormal $X_t$ ensures a nonzero raw direction for $\rho>0$,
while $\rho=0$ requires a nonzero projected spike component.
Since $\|\widehat Q_t\|_F$ need not equal $\sqrt d$, $\widehat c_{\rho,t}$ can
differ from $c_{\rho,\widehat r_t}$.
Norm matching fixes the magnitude only, it does not correct polar or subspace
approximation errors (Appendix~\ref{app:numerical-norm-matching}).
Algorithm~\ref{alg:ser-muon} in Appendix~\ref{app:algorithm} summarizes the
realization.
Section~\ref{sec:theory} characterizes when this reallocation lowers the loss
to first order.

\section{Theory of Two-Band Reweighting}
\label{sec:theory}

We first give a first-order test for when two-band reweighting lowers
the loss, and quantify how much of the best per-direction improvement
two bands can capture.
We then give the exact structure behind the SVD-free realization.

\subsection{What Two Bands Can Capture}
\label{sec:theory-continuation}

\textbf{A controlled update comparison.}
Starting from the same training state, we vary only the applied gains
of one matrix's current update.
Fix step $t$, the singular frame $U_t,V_t$ of $M_t$, and a
continuation policy $\pi$, which specifies how training continues.
Let $\mathcal L^\pi_{t,K}(w)$ be the expected evaluation loss, given
the current state, after using $U_t\operatorname{diag}(w)V_t^\top$ as
the current direction and then following $\pi$ for $K\geq0$ more
steps.
The learning rate, the current updates of all other parameters, and
the rest of the current step are fixed.
Later gradients and optimizer states evolve under $\pi$.
At $K=0$, the loss is evaluated right after the current update.
Here $w$ denotes the applied gains, that is, the normalized gains
$c_t(w_t)w_t$ from \eqref{eq:norm-matched-profile}.
Norm matching keeps $w$ on
$\mathcal S_d=\{w\in[0,\infty)^d:\|w\|_2^2=d\}$, and the flat profile
$w=\mathbf 1_d$ gives \muon{}'s idealized update $Q_t$.

\textbf{Continuation utility.}
We take the loss to be defined on an open neighborhood of
$\mathbf 1_d$ in the positive orthant.
Set
\begin{equation}
u_i =-\left. \frac{\partial\mathcal L^\pi_{t,K}(w)}{\partial w_i}
\right|_{w=\mathbf 1_d},
\qquad
\overline u=\frac{1}{d}\sum_{i=1}^d u_i,
\qquad
u_\perp=u-\overline u\mathbf 1_d.
\label{eq:continuation-utility}
\end{equation}
The \emph{continuation utility} $u_i$ is the rate at which the loss
falls when the applied gain of direction $i$ increases and the other
gains stay fixed.
Norm matching forces compensating changes across directions, so
first-order improvements depend only on differences between
utilities, which $u_\perp$ captures.

Fix a partition with
$r=|\mathcal I_{\mathrm{sp}}|$,
$k=|\mathcal I_{\mathrm{bu}}|=d-r$,
and $1\leq r\leq d-1$, dropping the time index.
Let
$\overline u_{\mathrm{sp}}
=r^{-1}\sum_{i\in\mathcal I_{\mathrm{sp}}}u_i$
and
$\overline u_{\mathrm{bu}}
=k^{-1}\sum_{i\in\mathcal I_{\mathrm{bu}}}u_i$
be the band means, with gap
$\Delta=\overline u_{\mathrm{bu}}-\overline u_{\mathrm{sp}}$.
For this partition, varying $\rho$ gives the norm-matched two-level
gains
$w(\rho)=c_{\rho,r}
(\mathbf 1_{\mathrm{sp}}+\rho\mathbf 1_{\mathrm{bu}})$,
where $\mathbf 1_{\mathrm{sp}}$ and $\mathbf 1_{\mathrm{bu}}$ indicate
band membership and $c_{\rho,r}$ is defined in
\eqref{eq:two-level-profile}.

\begin{theorem}[Local spectral improvement]
\label{thm:continuation-decomposition}
Assume $\mathcal L^\pi_{t,K}$ is differentiable at
$\mathbf 1_d$.

\textbf{(i) Which allocation improves loss?}
Along the two-level family,
$-\left.\frac{d}{d\rho}
\mathcal L^\pi_{t,K}(w(\rho))\right|_{\rho=1}
=\frac{rk}{d}\Delta$.
Increasing $\rho$ from $1$ gives a strictly positive first-order
improvement if and only if $\Delta>0$.
If $\Delta=0$, the first-order test is inconclusive.

\textbf{(ii) How much improvement can two bands capture?}
At the flat profile, norm matching requires small gain changes to sum
to zero, so they lie in
$\mathcal T=\{\delta\in\mathbb R^d:\mathbf 1_d^\top\delta=0\}$.
If every gain may change separately, the best improvement rate per
unit change is
$\max_{\delta\in\mathcal T,\ \|\delta\|_2=1}
u^\top\delta=\|u_\perp\|_2$.
Thus the flat profile is first-order stationary in this fixed-frame,
norm-matched class if and only if $u_1=\cdots=u_d$.
If the change must be constant within each band, the best rate is
$\sqrt{rk/d}\,|\Delta|$.
For $u_\perp\neq0$, two bands therefore reach the fraction
$R_{\mathrm{bt}}
=\frac{\sqrt{rk/d}\,|\Delta|}{\|u_\perp\|_2}
\in[0,1]$ of the best rate.
Both rates are per unit Euclidean change in $w$, and
$R_{\mathrm{bt}}$ allows $\rho$ to move in either direction.
\end{theorem}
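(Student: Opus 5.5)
The plan is a direct first-order computation: compose the utility vector $u$ with the derivative of the curve $w(\rho)$, then solve two small constrained linear maximizations.

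\textbf{Part (i).} Differentiability of $\mathcal L^\pi_{t,K}$ at $\mathbf 1_d$ and the chain rule give
$-\frac{d}{d\rho}\mathcal L^\pi_{t,K}(w(\rho))|_{\rho=1}=u^\top w'(1)$.
First check that $w(1)=\mathbf 1_d$, since $c_{1,r}=1$. Next differentiate $c_{\rho,r}=\sqrt d\,(r+\rho^2k)^{-1/2}$:
\[
c'_{\rho,r}=-\sqrt d\,\rho k\,(r+\rho^2k)^{-3/2},\qquad c'_{1,r}=-k/d .
\]
Then
\[
w'(1)=c'_{1,r}(\mathbf 1_{\mathrm{sp}}+\mathbf 1_{\mathrm{bu}})+c_{1,r}\mathbf 1_{\mathrm{bu}}=-\tfrac{k}{d}\mathbf 1_{\mathrm{sp}}+\tfrac{r}{d}\mathbf 1_{\mathrm{bu}}.
\]
Pairing with $u$ yields
\[
u^\top w'(1)=\tfrac{r}{d}\,k\,\overline u_{\mathrm{bu}}-\tfrac{k}{d}\,r\,\overline u_{\mathrm{sp}}=\tfrac{rk}{d}\Delta .
\]
Since $rk/d>0$, this rate is strictly positive iff $\Delta>0$. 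When $\Delta=0$ the first derivative vanishes and nothing follows at first order.

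\textbf{Part (ii), unrestricted changes.} The tangent space of the sphere $\|w\|_2^2=d$ at $\mathbf 1_d$ is $\mathbf 1_d^\perp=\mathcal T$. Because $\mathbf 1_d$ lies in the interior of the positive orthant, every $\delta\in\mathcal T$ is realized by a curve in $\mathcal S_d$, for instance $\sqrt d\,(\mathbf 1_d+s\delta)/\|\mathbf 1_d+s\delta\|_2$. For $\delta\in\mathcal T$ we have $u^\top\delta=u_\perp^\top\delta$, since $\overline u\,\mathbf 1_d^\top\delta=0$. Cauchy--Schwarz then bounds the rate by $\|u_\perp\|_2$, with equality at $\delta=u_\perp/\|u_\perp\|_2$; this $\delta$ lies in $\mathcal T$. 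The rate $\|u_\perp\|_2$ is zero iff $u_\perp=0$, i.e.\ iff $u_1=\cdots=u_d$, which gives the stationarity characterization.

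\textbf{Part (ii), band-constant changes.} These form the subspace $\mathcal T_2=\mathcal T\cap\operatorname{span}\{\mathbf 1_{\mathrm{sp}},\mathbf 1_{\mathrm{bu}}\}$. It is one-dimensional and spanned by $e=k\mathbf 1_{\mathrm{sp}}-r\mathbf 1_{\mathrm{bu}}$, up to sign, with $\|e\|_2^2=k^2r+r^2k=rkd$. The best rate is therefore
\[
|u^\top e|/\|e\|_2=rk|\Delta|/\sqrt{rkd}=\sqrt{rk/d}\,|\Delta| .
\]
The same direction is the one traced by $\pm w'(1)$, so $\rho$ moving either way attains it. Because $\mathcal T_2\subseteq\mathcal T$, this rate is at most $\|u_\perp\|_2$, which gives $R_{\mathrm{bt}}\in[0,1]$. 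Equivalently, $R_{\mathrm{bt}}$ is the norm of the projection of $u_\perp$ onto $e$, divided by $\|u_\perp\|_2$.

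\textbf{Main point of care.} I expect no real obstacle, only two checks. First, the tangent directions must actually be realizable within $\mathcal S_d$ and the domain of the loss; the interiority argument above handles this. Second, the rates must be per unit Euclidean change in $w$, so the normalization of $e$ has to be done carefully.
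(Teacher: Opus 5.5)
Your proposal is correct and follows essentially the same route as the paper: the chain rule with $w'(1)=-\tfrac{k}{d}\mathbf 1_{\mathrm{sp}}+\tfrac{r}{d}\mathbf 1_{\mathrm{bu}}$, the normalized-curve realization of unit tangent directions followed by Cauchy--Schwarz, and projection onto the one-dimensional band-constant tangent line (your $e$ is $-d$ times the paper's $v$). The one difference is how you get $R_{\mathrm{bt}}\le 1$: you use the inclusion $\mathcal T_2\subseteq\mathcal T$, while the paper uses an explicit orthogonal between-band/within-band decomposition of $u_\perp$, which additionally shows that the uncaptured share is exactly the within-band utility variation.
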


Two bands can use only the difference between the band means.
Near the flat profile, any norm-matched reweighting of the same
singular directions, however fine-grained, moves $w$ along some
$\delta\in\mathcal T$.
Its first-order rate is therefore at most $\|u_\perp\|_2$.
Finer shaping can thus add at most the share $1-R_{\mathrm{bt}}$,
which comes from utility differences within the bands.
The split rule, whether the noise edge or a fixed fraction, only
decides which directions fall in each band.
The sign of $\Delta$ and the size of $R_{\mathrm{bt}}$ depend on the
utilities.

When $\Delta>0$, small enough increases in $\rho$ lower the loss.
The theorem does not say how large $\rho$ may be, so it does not
justify a specific finite gain such as $\rho=4$.
Here, first order refers to small changes in the gains; the parameter
update itself is a finite step.
With one $\rho$ shared by all matrices, each norm-matched separately,
the rates from Theorem~\ref{thm:continuation-decomposition}(i) add up
across matrices under joint differentiability
(Appendix~\ref{app:post-update-loss}).
Appendices~\ref{app:post-update-loss} and~\ref{app:omitted-ablation}
test finite values of $\rho$.
Appendix~\ref{app:proof-continuation-decomposition} gives the proof.

\subsection{Exact Polar-Plus-Low-Rank Structure}
\label{sec:theory-decomposition}

A shared bulk gain also gives the update a simple form.
The polar factor carries the bulk, and only a correction in the spike
subspace remains.

\begin{theorem}[Exact polar-plus-low-rank decomposition]
\label{thm:exact-decomposition}
Let $M\in\mathbb R^{m\times n}$ have full rank
$d=\min\{m,n\}$, thin SVD $M=U\Sigma V^\top$,
and polar factor $Q=UV^\top$.
For $r$ selected singular directions, let
$U_{\mathrm{sp}},V_{\mathrm{sp}}$ contain their
paired vectors.
Assign gains $a\in[0,\infty)^r$ to these directions
and a common gain $\rho\geq0$ to the others.
The resulting unnormalized spectral update satisfies
\begin{equation}
\widetilde D(a,\rho)
=\rho Q
+U_{\mathrm{sp}}
\operatorname{diag}(a-\rho\mathbf 1_r)
V_{\mathrm{sp}}^\top,
\label{eq:theory-polar-decomposition}
\end{equation}
and
$\operatorname{rank}(\widetilde D(a,\rho)-\rho Q)
\leq r$.
\end{theorem}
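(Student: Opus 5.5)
The plan is to write $Q$ and the unnormalized update as sums of rank-one terms over the singular directions, and then compare the two sums term by term. Since $M$ has full rank $d$, the thin SVD gives orthonormal columns $u_1,\ldots,u_d$ of $U$ and $v_1,\ldots,v_d$ of $V$, and
\[
Q=UV^\top=\sum_{i=1}^d u_iv_i^\top .
\]
Let $\mathcal I_{\mathrm{sp}}\subseteq[d]$, with $|\mathcal I_{\mathrm{sp}}|=r$, index the selected directions. Fix an ordering $\mathcal I_{\mathrm{sp}}=\{i_1,\ldots,i_r\}$ compatible with the columns of $U_{\mathrm{sp}},V_{\mathrm{sp}}$, so that gain $a_j$ belongs to the pair $(u_{i_j},v_{i_j})$. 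By definition, the unnormalized spectral update is $\widetilde D(a,\rho)=U\operatorname{diag}(w)V^\top$, where $w_{i_j}=a_j$ and $w_i=\rho$ for $i\notin\mathcal I_{\mathrm{sp}}$. This is the same construction as \eqref{eq:norm-matched-profile} without the factor $c$.

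The key step is to expand $\widetilde D(a,\rho)=\sum_i w_iu_iv_i^\top$ and then add and subtract $\rho$ on the spike indices:
\[
\widetilde D(a,\rho)=\sum_{i=1}^d \rho\,u_iv_i^\top+\sum_{j=1}^r (a_j-\rho)\,u_{i_j}v_{i_j}^\top
=\rho Q+U_{\mathrm{sp}}\operatorname{diag}(a-\rho\mathbf 1_r)V_{\mathrm{sp}}^\top .
\]
The second equality is the standard identity $\sum_j \lambda_j x_jy_j^\top=X\operatorname{diag}(\lambda)Y^\top$, applied to the spike blocks. This already gives \eqref{eq:theory-polar-decomposition}.

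The rank bound then follows from submultiplicativity of rank. The difference $\widetilde D(a,\rho)-\rho Q$ factors through the $r\times r$ matrix $\operatorname{diag}(a-\rho\mathbf 1_r)$, so its rank is at most $r$. Equivalently, it is a sum of $r$ rank-$\le 1$ terms. Equality holds exactly when all $a_j\neq\rho$, because the $u_{i_j}$ are orthonormal, but the statement only requires the upper bound.

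No step here is a genuine obstacle. The only point needing care is bookkeeping: the column order of $U_{\mathrm{sp}}$ and $V_{\mathrm{sp}}$ must pair $u_{i_j}$ with $v_{i_j}$ and with $a_j$. Repeated singular values, which make the SVD non-unique, pose no issue, because the identity is stated relative to the chosen SVD and holds for any valid choice. As a consistency check, taking $a=\mathbf 1_r$ and the spike set $\{1,\ldots,r_t\}$ gives $U_{\mathrm{sp}}V_{\mathrm{sp}}^\top=P_{\mathrm{sp}}Q=QR_{\mathrm{sp}}$, which recovers \eqref{eq:oracle-polar-correction} up to the factor $c_{\rho,r_t}$.
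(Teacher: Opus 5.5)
Your proof is correct and is essentially the paper's argument: the paper partitions $U=[U_{\mathrm{sp}}\;U_{\mathrm{bu}}]$ and $V=[V_{\mathrm{sp}}\;V_{\mathrm{bu}}]$, then substitutes $U_{\mathrm{bu}}V_{\mathrm{bu}}^\top=Q-U_{\mathrm{sp}}V_{\mathrm{sp}}^\top$, which is your add-and-subtract step written in block form, and it gets the rank bound the same way, by factoring the correction through an $r$-dimensional space. Your remark that the rank bound is attained exactly when every $a_j\neq\rho$ is a correct addition that the paper does not state.
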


With $a=\mathbf 1_r$, write
$\widetilde D_\rho=\widetilde D(\mathbf 1_r,\rho)$.
Normalizing it gives the projector form
\eqref{eq:oracle-polar-correction}, which needs no bulk singular
vectors.
The squared Frobenius norm of $\widetilde D(a,\rho)$ is
$\|a\|_2^2+\rho^2(d-r)$, so for $a=\mathbf 1_r$ the normalizer
$c_{\rho,r}$ restores $\|Q\|_F=\sqrt d$
(Corollary~\ref{cor:exact-normalization}).
The correction is low rank. Appendices~\ref{app:proof-exact-decomposition}
and~\ref{app:proof-exact-normalization} give the proofs.

\section{Experiments}
\label{sec:experiments}
This section tests whether two bands are enough in practice.
The diagnostics in Appendix~\ref{app:band-content} motivate this test.
On Pythia-70M \muon{} trajectories, 94--97\% of singular directions
lie below the noise edge, and this bulk is weak per direction but
positively aligned with a reference gradient in aggregate.
We compare both \bulkboost{} variants, \algostatic{} and
\algoadaptive{}, with \muon{}'s flat profile and with fine-grained
spectral profiles in continued pretraining.
We also compare the two variants with each other.
Extended results appear in Appendix~\ref{app:experiment-result}.

\begin{figure}[t]
\centering
\includegraphics[width=\linewidth]{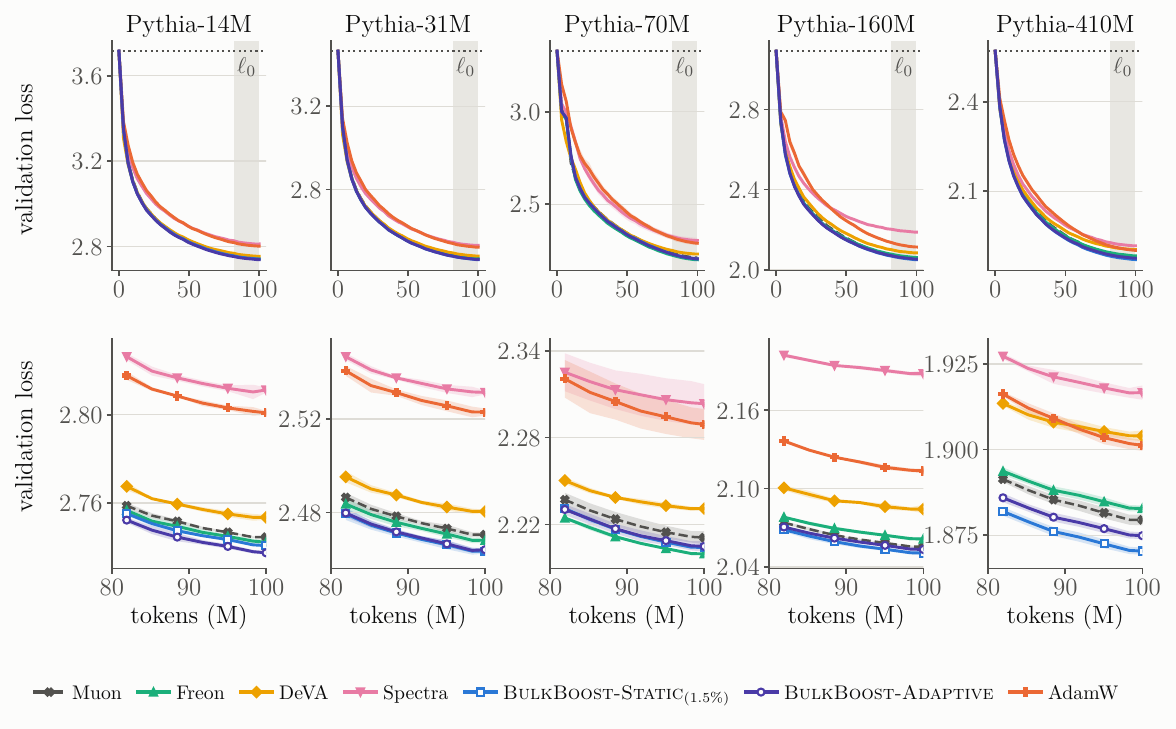}
\caption{Held-out loss versus training tokens on \WikiZh{}, one column per model scale. Top: the full run, with the pre-adaptation loss $\ell_0$ dotted and the zoomed interval shaded. Bottom: the shaded interval at a finer vertical scale. The \algostatic{} curve uses $p=1.5\%$, the leading fraction of \spectra{}, so the two can be compared directly. Appendix~\ref{app:curves} shows the other corpora.
}
\label{fig:grid-wiki_zh}
\end{figure}

\textbf{\bulkboost{} configurations.}
Unless varied in an ablation, both variants use $\rho=4$,
$\widehat Q_t=\operatorname{PolarExpress}(M_t)$~\citep{amsel2025polar},
$q=6$ warm-started block iterations for spike tracking, and the norm
matching in \eqref{eq:numerical-norm-matching}.
Like Newton--Schulz, Polar Express uses only matrix products.
It approximates the polar factor more closely, which the correction
in \eqref{eq:svdfree-raw-direction} needs.
\muon{} and both variants use Nesterov momentum $\beta=0.95$.
\algoadaptive{} splits at $\widehat\tau_t$ (Section~\ref{sec:split-edge})
with $\gamma=0.95$, $\kappa_{\mathrm{edge}}=1$, and the rank cap of
Appendix~\ref{app:spike-tracking}.
\algostatic{} keeps $r_t=\lceil pd\rceil$ spike directions, with one
fraction $p$ for all matrices and the whole run.
We test $p\in\{0.015,0.02,0.05,0.15,0.20\}$ (Table~\ref{tab:fracsweep}).
Our shared configuration, \algostatic{}$_{(5\%)}$, uses $p=5\%$,
which lies within the 2.9--6.3\% spike fraction measured in
Appendix~\ref{app:band-content}.
\algostatic{}$^{\dagger}$ instead reports, for each setting, the
fraction with the lowest mean final held-out loss.
Because this choice uses the reported loss, \algostatic{}$^{\dagger}$
shows what per-setting tuning can reach.

\textbf{Compared methods.}
We compare both \bulkboost{} variants with five baselines.
\muon{}~\citep{jordan2024muon} approximates the flat profile with
Newton--Schulz iterations and is the reference for all relative
comparisons.
Two baselines use fine-grained spectral profiles.
\freon{}~\citep{shumaylov2026freon} uses a power-law profile with
$c=0.667$; $c=\tfrac12$ recovers the flat profile.
\spectra{}~\citep{huang2026spectra} rescales the leading singular
values to the root-mean-square level of the tail.
It keeps the tail's unequal singular values, whereas \bulkboost{}
gives the whole bulk one gain.
We use its recommended leading fraction of 1.5\%. We also run \algostatic{} at $p=1.5\%$, so the two methods can be compared with the same number of spike directions (Figure~\ref{fig:grid-wiki_zh}).
The other two baselines are \adamw{}~\citep{loshchilov2017decoupled}
and \deva{}~\citep{song2026deva}, which separates variance adaptation
from scale-invariant matrix updates.

\textbf{Models and datasets.}
We continue pretraining \emph{converged} Pythia
checkpoints~\citep{biderman2023pythia} at five scales, 14M, 31M, 70M, 160M, and 410M, on six corpora: math (\FineMath{}, \FineMathN{}), Code (\CodeParrot{}), and multilingual text (\WikiDe{}, \WikiZh{},
\WikiAr{}). See Appendix~\ref{app:corpus-adaptation}.

\textbf{Token sampling and evaluation.}
Each corpus is tokenized once with Pythia's GPT-NeoX tokenizer, so all scales and methods reuse
the same tokenized data. We reserve the first $1$M tokens for held-out evaluation and train on
the remainder. Training samples uniformly random contiguous $1024$-token windows with
a run-specific seed. A run takes $3,051$ steps of $32$ sequences ($32,768$ tokens per step), or
about $100$M tokens, with a cosine learning-rate schedule. Every $100$ steps, we evaluate on $128$ deterministic, evenly spaced held-out windows, shared across steps and methods. The main comparisons use three seeds.

\textbf{Pre-adaptation loss.}
For each model--corpus pair, let $\ell_0$ and $\ell_{\mathrm{final}}$
denote the held-out losses before and after continued pretraining,
and define $\Delta=\ell_{\mathrm{final}}-\ell_0$.
All methods start from the same pretrained checkpoint and are
evaluated on the same held-out windows.
We report loss differences between methods as a percentage of~$\ell_0$.
Table~\ref{tab:datasets} in Appendix~\ref{app:corpus-adaptation}
reports $\ell_0$ and $\Delta$ for \muon{}.

\textbf{Learning-rate calibration.}
We tune one learning rate per model scale on \FineMath{} and reuse it for every method and all six corpora. To choose it, we run a short learning-rate sweep over multiples of a reference rate, which is $2\times10^{-3}$ for Pythia-14M, 31M, and 70M and $10^{-3}$ for 160M and 410M. The multipliers are $\{0.01,0.05,0.1,0.25,0.5,1,2,4\}$. Each sweep run trains one method for 20M tokens with one seed, and we record its final validation loss. The sweep covers every method except \adamw{}~\footnote{Because Pythia models are trained with \adamw{} and their learning rate decays to $0.1\times$ the base rate at convergence, we omit \adamw{} from the sweep.}. In it, \algostatic{} uses $p=20\%$, and both \bulkboost{} variants use $\rho=4$.

At each scale, we then fix one shared multiplier: the one that is best for most swept methods. It is $0.1\times$ for 14M, 31M, and 160M, $0.25\times$ for 70M, and $0.05\times$ for 410M. The resulting peak learning rates are $2\times10^{-4}$ for 14M and 31M, $5\times10^{-4}$ for 70M, $10^{-4}$ for 160M, and $5\times10^{-5}$ for 410M. For every swept method, the shared multiplier is also its own best, with two exceptions. At 70M, \muon{} does best at $0.1\times$ ($2\times10^{-4}$), with a
sweep loss of $2.779$ against $2.783$ at the shared $0.25\times$. Its loss then jumps to $2.919$ at the next multiplier, $0.5\times$, so the shared rate stays in \muon{}'s near-optimal range.
At 160M, \spectra{} does best at $0.05\times$ ($5\times10^{-5}$), with a sweep loss of $2.552$ against $2.553$ at the shared $0.1\times$. Figure~\ref{fig:lrsweep} and Table~\ref{tab:lrsweep} in
Appendix~\ref{app:omitted-ablation} report the full sweep.

\textbf{Shared rate in the 100M-token runs.} Every 100M-token run uses the shared rate of its scale, with no further tuning. This includes \adamw{}, which the sweep does not cover. It also includes all five \algostatic{} fractions in Table~\ref{tab:fracsweep}, so \algostatic{}$_{(5\%)}$ and
\algostatic{}$^{\dagger}$ use it too. Tables~\ref{tab:main} and~\ref{tab:tokensaved} and
Figure~\ref{fig:grid-wiki_zh} report these runs.

\subsection {Comparison of all optimization methods}
\label{sec:exp:arms}
Table~\ref{tab:main} reports final validation losses in all 30
model--corpus settings.
Table~\ref{tab:tokensaved} compares each method with \muon{} and also includes the shared configuration \algostatic{}$_{(5\%)}$.
Final loss decreases with model size for every method and corpus. Pre-adaptation losses $\ell_0$ follow the same trend (Table~\ref{tab:datasets}).

\begin{table}[t]
\centering\footnotesize
\setlength{\tabcolsep}{2.8pt}
\scalebox{0.9}{
\begin{tabular}{@{}llrrrrrr@{}}
\toprule
Model & Method & \FineMath{} & \FineMathN{} & \CodeParrot{} & \WikiDe{} & \WikiZh{} & \WikiAr{} \\
\midrule
\multirow{7}{*}{14M} & \adamw{} & $3.102{\scriptstyle\,\pm\,0.003}$ & $2.650{\scriptstyle\,\pm\,0.002}$ & $1.929{\scriptstyle\,\pm\,0.003}$ & $3.324{\scriptstyle\,\pm\,0.001}$ & $2.801{\scriptstyle\,\pm\,0.001}$ & $2.055{\scriptstyle\,\pm\,0.001}$ \\
 & \muon{} & $3.086{\scriptstyle\,\pm\,0.002}$ & $2.640{\scriptstyle\,\pm\,0.002}$ & $1.902{\scriptstyle\,\pm\,0.004}$ & $3.283{\scriptstyle\,\pm\,0.001}$ & $2.745{\scriptstyle\,\pm\,0.001}$ & $2.006{\scriptstyle\,\pm\,0.000}$ \\
 & \freon{} & $\mathbf{3.083}{\scriptstyle\,\pm\,0.002}$ & $\mathbf{2.638}{\scriptstyle\,\pm\,0.002}$ & $\underline{1.899}{\scriptstyle\,\pm\,0.004}$ & $3.279{\scriptstyle\,\pm\,0.001}$ & $2.742{\scriptstyle\,\pm\,0.001}$ & $2.008{\scriptstyle\,\pm\,0.001}$ \\
 & \deva{} & $3.110{\scriptstyle\,\pm\,0.004}$ & $2.661{\scriptstyle\,\pm\,0.002}$ & $1.917{\scriptstyle\,\pm\,0.005}$ & $3.286{\scriptstyle\,\pm\,0.001}$ & $2.753{\scriptstyle\,\pm\,0.001}$ & $2.013{\scriptstyle\,\pm\,0.001}$ \\
 & \spectra{} & $3.138{\scriptstyle\,\pm\,0.004}$ & $2.682{\scriptstyle\,\pm\,0.002}$ & $1.955{\scriptstyle\,\pm\,0.004}$ & $3.328{\scriptstyle\,\pm\,0.003}$ & $2.811{\scriptstyle\,\pm\,0.001}$ & $2.058{\scriptstyle\,\pm\,0.002}$ \\
 & \algostatic{}$^{\dagger}$ & $\mathbf{3.083}{\scriptstyle\,\pm\,0.002}$ & $\mathbf{2.638}{\scriptstyle\,\pm\,0.002}$ & $\mathbf{1.898}{\scriptstyle\,\pm\,0.003}$ & $\underline{3.278}{\scriptstyle\,\pm\,0.001}$ & $\underline{2.738}{\scriptstyle\,\pm\,0.001}$ & $\mathbf{1.998}{\scriptstyle\,\pm\,0.001}$ \\
 & \algoadaptive{} & $\underline{3.085}{\scriptstyle\,\pm\,0.002}$ & $\underline{2.639}{\scriptstyle\,\pm\,0.001}$ & $\underline{1.899}{\scriptstyle\,\pm\,0.003}$ & $\mathbf{3.277}{\scriptstyle\,\pm\,0.002}$ & $\mathbf{2.737}{\scriptstyle\,\pm\,0.001}$ & $\underline{2.000}{\scriptstyle\,\pm\,0.001}$ \\
\midrule
\multirow{7}{*}{31M} & \adamw{} & $2.794{\scriptstyle\,\pm\,0.001}$ & $2.443{\scriptstyle\,\pm\,0.002}$ & $1.674{\scriptstyle\,\pm\,0.002}$ & $2.998{\scriptstyle\,\pm\,0.001}$ & $2.523{\scriptstyle\,\pm\,0.002}$ & $1.897{\scriptstyle\,\pm\,0.000}$ \\
 & \muon{} & $2.776{\scriptstyle\,\pm\,0.001}$ & $2.431{\scriptstyle\,\pm\,0.002}$ & $1.649{\scriptstyle\,\pm\,0.001}$ & $2.959{\scriptstyle\,\pm\,0.002}$ & $2.471{\scriptstyle\,\pm\,0.001}$ & $1.855{\scriptstyle\,\pm\,0.001}$ \\
 & \freon{} & $\mathbf{2.771}{\scriptstyle\,\pm\,0.001}$ & $\underline{2.429}{\scriptstyle\,\pm\,0.001}$ & $\underline{1.648}{\scriptstyle\,\pm\,0.002}$ & $\underline{2.956}{\scriptstyle\,\pm\,0.001}$ & $2.468{\scriptstyle\,\pm\,0.001}$ & $1.861{\scriptstyle\,\pm\,0.001}$ \\
 & \deva{} & $2.800{\scriptstyle\,\pm\,0.001}$ & $2.462{\scriptstyle\,\pm\,0.003}$ & $1.668{\scriptstyle\,\pm\,0.003}$ & $2.966{\scriptstyle\,\pm\,0.003}$ & $2.480{\scriptstyle\,\pm\,0.001}$ & $1.862{\scriptstyle\,\pm\,0.002}$ \\
 & \spectra{} & $2.830{\scriptstyle\,\pm\,0.002}$ & $2.482{\scriptstyle\,\pm\,0.004}$ & $1.697{\scriptstyle\,\pm\,0.003}$ & $3.009{\scriptstyle\,\pm\,0.005}$ & $2.531{\scriptstyle\,\pm\,0.001}$ & $1.899{\scriptstyle\,\pm\,0.002}$ \\
 & \algostatic{}$^{\dagger}$ & $\underline{2.772}{\scriptstyle\,\pm\,0.001}$ & $\mathbf{2.428}{\scriptstyle\,\pm\,0.002}$ & $\underline{1.648}{\scriptstyle\,\pm\,0.002}$ & $\mathbf{2.953}{\scriptstyle\,\pm\,0.001}$ & $\mathbf{2.463}{\scriptstyle\,\pm\,0.001}$ & $\mathbf{1.848}{\scriptstyle\,\pm\,0.001}$ \\
 & \algoadaptive{} & $2.773{\scriptstyle\,\pm\,0.001}$ & $2.430{\scriptstyle\,\pm\,0.003}$ & $\mathbf{1.647}{\scriptstyle\,\pm\,0.003}$ & $\mathbf{2.953}{\scriptstyle\,\pm\,0.001}$ & $\underline{2.464}{\scriptstyle\,\pm\,0.001}$ & $\underline{1.849}{\scriptstyle\,\pm\,0.000}$ \\
\midrule
\multirow{7}{*}{70M} & \adamw{} & $2.617{\scriptstyle\,\pm\,0.002}$ & $2.343{\scriptstyle\,\pm\,0.004}$ & $1.547{\scriptstyle\,\pm\,0.005}$ & $2.738{\scriptstyle\,\pm\,0.007}$ & $2.289{\scriptstyle\,\pm\,0.011}$ & $1.838{\scriptstyle\,\pm\,0.014}$ \\
 & \muon{} & $2.551{\scriptstyle\,\pm\,0.002}$ & $2.297{\scriptstyle\,\pm\,0.004}$ & $1.484{\scriptstyle\,\pm\,0.003}$ & $2.680{\scriptstyle\,\pm\,0.003}$ & $2.211{\scriptstyle\,\pm\,0.005}$ & $1.702{\scriptstyle\,\pm\,0.000}$ \\
 & \freon{} & $\mathbf{2.536}{\scriptstyle\,\pm\,0.001}$ & $\underline{2.288}{\scriptstyle\,\pm\,0.006}$ & $\underline{1.471}{\scriptstyle\,\pm\,0.003}$ & $\mathbf{2.668}{\scriptstyle\,\pm\,0.002}$ & $\mathbf{2.200}{\scriptstyle\,\pm\,0.000}$ & $\underline{1.694}{\scriptstyle\,\pm\,0.001}$ \\
 & \deva{} & $2.565{\scriptstyle\,\pm\,0.001}$ & $2.310{\scriptstyle\,\pm\,0.002}$ & $1.493{\scriptstyle\,\pm\,0.004}$ & $2.707{\scriptstyle\,\pm\,0.001}$ & $2.231{\scriptstyle\,\pm\,0.001}$ & $1.723{\scriptstyle\,\pm\,0.003}$ \\
 & \spectra{} & $2.617{\scriptstyle\,\pm\,0.003}$ & $2.339{\scriptstyle\,\pm\,0.001}$ & $1.545{\scriptstyle\,\pm\,0.001}$ & $2.768{\scriptstyle\,\pm\,0.012}$ & $2.303{\scriptstyle\,\pm\,0.014}$ & $1.813{\scriptstyle\,\pm\,0.021}$ \\
 & \algostatic{}$^{\dagger}$ & $\underline{2.537}{\scriptstyle\,\pm\,0.004}$ & $\mathbf{2.287}{\scriptstyle\,\pm\,0.007}$ & $\mathbf{1.469}{\scriptstyle\,\pm\,0.004}$ & $\underline{2.670}{\scriptstyle\,\pm\,0.002}$ & $\underline{2.203}{\scriptstyle\,\pm\,0.003}$ & $\mathbf{1.690}{\scriptstyle\,\pm\,0.001}$ \\
 & \algoadaptive{} & $2.545{\scriptstyle\,\pm\,0.001}$ & $2.291{\scriptstyle\,\pm\,0.003}$ & $1.473{\scriptstyle\,\pm\,0.003}$ & $2.674{\scriptstyle\,\pm\,0.002}$ & $2.205{\scriptstyle\,\pm\,0.004}$ & $\mathbf{1.690}{\scriptstyle\,\pm\,0.000}$ \\
\midrule
\multirow{7}{*}{160M} & \adamw{} & $2.292{\scriptstyle\,\pm\,0.002}$ & $\underline{2.072}{\scriptstyle\,\pm\,0.001}$ & $1.285{\scriptstyle\,\pm\,0.003}$ & $2.481{\scriptstyle\,\pm\,0.002}$ & $2.114{\scriptstyle\,\pm\,0.002}$ & $1.664{\scriptstyle\,\pm\,0.001}$ \\
 & \muon{} & $\mathbf{2.272}{\scriptstyle\,\pm\,0.003}$ & $\mathbf{2.066}{\scriptstyle\,\pm\,0.002}$ & $\mathbf{1.270}{\scriptstyle\,\pm\,0.003}$ & $\mathbf{2.452}{\scriptstyle\,\pm\,0.002}$ & $2.055{\scriptstyle\,\pm\,0.001}$ & $1.608{\scriptstyle\,\pm\,0.000}$ \\
 & \freon{} & $2.282{\scriptstyle\,\pm\,0.001}$ & $2.077{\scriptstyle\,\pm\,0.002}$ & $1.282{\scriptstyle\,\pm\,0.003}$ & $2.461{\scriptstyle\,\pm\,0.002}$ & $2.061{\scriptstyle\,\pm\,0.001}$ & $1.613{\scriptstyle\,\pm\,0.001}$ \\
 & \deva{} & $2.297{\scriptstyle\,\pm\,0.003}$ & $2.092{\scriptstyle\,\pm\,0.002}$ & $1.287{\scriptstyle\,\pm\,0.004}$ & $2.472{\scriptstyle\,\pm\,0.002}$ & $2.084{\scriptstyle\,\pm\,0.003}$ & $1.639{\scriptstyle\,\pm\,0.001}$ \\
 & \spectra{} & $2.353{\scriptstyle\,\pm\,0.003}$ & $2.125{\scriptstyle\,\pm\,0.006}$ & $1.325{\scriptstyle\,\pm\,0.004}$ & $2.590{\scriptstyle\,\pm\,0.003}$ & $2.188{\scriptstyle\,\pm\,0.002}$ & $1.710{\scriptstyle\,\pm\,0.001}$ \\
 & \algostatic{}$^{\dagger}$ & $\underline{2.281}{\scriptstyle\,\pm\,0.003}$ & $2.077{\scriptstyle\,\pm\,0.003}$ & $\underline{1.280}{\scriptstyle\,\pm\,0.004}$ & $\underline{2.457}{\scriptstyle\,\pm\,0.003}$ & $\mathbf{2.051}{\scriptstyle\,\pm\,0.001}$ & $\mathbf{1.597}{\scriptstyle\,\pm\,0.002}$ \\
 & \algoadaptive{} & $\underline{2.281}{\scriptstyle\,\pm\,0.004}$ & $2.077{\scriptstyle\,\pm\,0.002}$ & $1.281{\scriptstyle\,\pm\,0.004}$ & $2.460{\scriptstyle\,\pm\,0.002}$ & $\underline{2.053}{\scriptstyle\,\pm\,0.001}$ & $\underline{1.604}{\scriptstyle\,\pm\,0.001}$ \\
\midrule
\multirow{7}{*}{410M} & \adamw{} & $2.058{\scriptstyle\,\pm\,0.005}$ & $1.895{\scriptstyle\,\pm\,0.002}$ & $\underline{1.119}{\scriptstyle\,\pm\,0.001}$ & $2.198{\scriptstyle\,\pm\,0.001}$ & $1.901{\scriptstyle\,\pm\,0.002}$ & $1.469{\scriptstyle\,\pm\,0.000}$ \\
 & \muon{} & $2.046{\scriptstyle\,\pm\,0.005}$ & $1.891{\scriptstyle\,\pm\,0.002}$ & $\mathbf{1.111}{\scriptstyle\,\pm\,0.002}$ & $2.181{\scriptstyle\,\pm\,0.001}$ & $1.879{\scriptstyle\,\pm\,0.002}$ & $\underline{1.452}{\scriptstyle\,\pm\,0.001}$ \\
 & \freon{} & $\underline{2.043}{\scriptstyle\,\pm\,0.005}$ & $\mathbf{1.889}{\scriptstyle\,\pm\,0.001}$ & $\mathbf{1.111}{\scriptstyle\,\pm\,0.002}$ & $2.179{\scriptstyle\,\pm\,0.001}$ & $1.883{\scriptstyle\,\pm\,0.001}$ & $1.460{\scriptstyle\,\pm\,0.001}$ \\
 & \deva{} & $2.087{\scriptstyle\,\pm\,0.005}$ & $1.929{\scriptstyle\,\pm\,0.004}$ & $1.143{\scriptstyle\,\pm\,0.002}$ & $2.209{\scriptstyle\,\pm\,0.003}$ & $1.904{\scriptstyle\,\pm\,0.002}$ & $1.467{\scriptstyle\,\pm\,0.001}$ \\
 & \spectra{} & $2.099{\scriptstyle\,\pm\,0.004}$ & $1.933{\scriptstyle\,\pm\,0.006}$ & $1.151{\scriptstyle\,\pm\,0.001}$ & $2.224{\scriptstyle\,\pm\,0.002}$ & $1.917{\scriptstyle\,\pm\,0.002}$ & $1.472{\scriptstyle\,\pm\,0.001}$ \\
 & \algostatic{}$^{\dagger}$ & $\underline{2.043}{\scriptstyle\,\pm\,0.005}$ & $\underline{1.890}{\scriptstyle\,\pm\,0.002}$ & $\mathbf{1.111}{\scriptstyle\,\pm\,0.002}$ & $\mathbf{2.174}{\scriptstyle\,\pm\,0.001}$ & $\mathbf{1.870}{\scriptstyle\,\pm\,0.001}$ & $\mathbf{1.446}{\scriptstyle\,\pm\,0.001}$ \\
 & \algoadaptive{} & $\mathbf{2.042}{\scriptstyle\,\pm\,0.005}$ & $\underline{1.890}{\scriptstyle\,\pm\,0.002}$ & $\mathbf{1.111}{\scriptstyle\,\pm\,0.002}$ & $\underline{2.176}{\scriptstyle\,\pm\,0.001}$ & $\underline{1.875}{\scriptstyle\,\pm\,0.001}$ & $1.453{\scriptstyle\,\pm\,0.002}$ \\
\bottomrule
\end{tabular}
}

\caption{Final validation loss after about 100M tokens, reported as
mean $\pm$ standard deviation over three seeds.
All runs use the shared learning rate of their scale.
For each model--corpus setting, \algostatic{}$^{\dagger}$ reports the
fraction with the lowest mean loss among the five in
Table~\ref{tab:fracsweep}.
\textbf{Bold} and \underline{Underline} mark the lowest and second-lowest means, respectively in each setting, including ties. Appendix~\ref{app:corpus-adaptation} reports the pre-adaptation losses
$\ell_0$.}
\label{tab:main}
\end{table}

\begin{table}[t]
\centering\small
\begin{tabular}{@{}lccccccc@{}}
\toprule
 & \multicolumn{2}{c}{vs.\ \muon{}} & \multicolumn{2}{c}{best of all} & $\Delta$ vs.\ \muon{} & \multicolumn{2}{c}{tokens saved (\%)} \\
\cmidrule(lr){2-3}\cmidrule(lr){4-5}\cmidrule(lr){7-8}
Method & wins & tied & wins & tied & (pp of $\ell_0$) & mean & median \\
\midrule
\muon{} (reference) & -- & -- & 4 & 1 & -- & -- & -- \\
\adamw{} & 0 & 0 & 0 & 0 & $+1.299 \pm 0.010$ & -- & -- \\
\freon{} & 19 & 1 & $\underline{5}$ & 3 & $-0.022 \pm 0.010$ & $7.78$ & $7.71$ \\
\deva{} & 0 & 0 & 0 & 0 & $+0.811 \pm 0.015$ & -- & -- \\
\spectra{} & 0 & 0 & 0 & 0 & $+2.321 \pm 0.047$ & -- & -- \\
\algostatic{}$_{(5\%)}$ & 23 & 2 & 0 & 3 & $-0.073 \pm 0.010$ & $7.81$ & $7.80$ \\
\algoadaptive{} & $\underline{24}$ & 1 & 4 & 3 & $\underline{-0.077} \pm 0.010$ & $\underline{7.99}$ & $\underline{8.01}$ \\
\algostatic{}$^{\dagger}$ & $\mathbf{25}$ & 1 & $\mathbf{12}$ & 5 & $\mathbf{-0.147} \pm 0.021$ & $\mathbf{10.00}$ & $\mathbf{10.28}$ \\
\bottomrule
\end{tabular}

\caption{Summary over the 30 model--corpus settings after about 100M
tokens.
\algostatic{}$^{\dagger}$ uses the fraction selection of
Table~\ref{tab:main}.
Counts compare three-seed mean final losses.
 \emph{vs.\ \muon{}}: settings with lower (\emph{wins}) or equal
(\emph{tied}) loss than \muon{}.
\emph{Best of all}: settings with the lowest loss of all rows, alone
(\emph{wins}) or shared (\emph{tied}).
\algostatic{}$_{(5\%)}$ can at best tie \algostatic{}$^{\dagger}$,
which includes its runs.
\emph{$\Delta$ vs.\ \muon{}}: the loss difference from \muon{} as a
percentage of $\ell_0$, averaged over settings within each seed.
We report the mean $\pm$ std dev of the three seed averages. Negative values favor the method.
\emph{Tokens saved}: the mean and median share of the 100M-token
budget saved in reaching \muon{}'s final loss, over the winning
settings and the seeds that end below \muon{}
(Table~\ref{tab:tokensaved-detail}).}
\label{tab:tokensaved}
\end{table}

\textbf{Two bands match or beat fine-grained profiles.}
Measured against \muon{}'s flat profile, \algostatic{}$_{(5\%)}$,
\algoadaptive{}, and \algostatic{}$^{\dagger}$ lower the final loss in $23$, $24$, and $25$ of the $30$ settings, by $0.073\%$, $0.077\%$, and $0.147\%$ of $\ell_0$ on average (Table~\ref{tab:tokensaved}). The fine-grained \freon{} does so in $19$ settings, by $0.022\%$.
 \algostatic{}$_{(5\%)}$ beats \freon{} in $13$ settings and loses in $13$, and \algoadaptive{} beats it in $17$ and loses in $10$. \spectra{} and \deva{} trail \muon{} in all $30$ settings, and every \bulkboost{} configuration beats both. This includes \algostatic{} at \spectra{}'s leading fraction of $1.5\%$
(Table~\ref{tab:fracsweep}). The original \deva{} and \spectra{} studies pretrain from scratch~\citep{song2026deva,huang2026spectra}, which differs from our continued pretraining setup. These results suggest that a single bulk-to-spike gain captures at least as much benefit as the fine-grained spectral profiles.

\textbf{Static and adaptive splits.}
\algostatic{}$^{\dagger}$ beats \algoadaptive{} in $20$ settings and
loses in $4$, but it selects its fraction on the reported loss.
Without any fraction search, \algoadaptive{} matches
\algostatic{}$_{(5\%)}$, with $10$ wins, $6$ losses, and $14$ ties.

\textbf{Token efficiency and runtime trade-offs.}
Among settings where it outperforms \muon{},
\algoadaptive{} reaches \muon{}'s final loss using
$7.99\%$ fewer tokens on average, with a median
saving of $8.01\%$ (Table~\ref{tab:tokensaved}).
The extensively tuned \algostatic{}$^{\dagger}$ increases
these savings to $10\%$ and $10.28\%$,
respectively.
These token savings come with additional per-step computation.
At 70M--410M, measured per-step runtimes are
$1.36$--$1.50\times$ \muon{}'s for \algoadaptive{},
compared with $1.19$--$1.27\times$ for the measured
static configurations
(Appendix~\ref{app:experiment-result}).
Adaptive selection avoids fraction search, while static
selection offers cheaper execution once a suitable
fraction is identified.
Neither \deva{} nor \spectra{} achieves lower mean final
validation loss than \muon{} in any of the $30$ settings
(Table~\ref{tab:main}); their token-saving averages over
wins are therefore undefined
(``--'' in Table~\ref{tab:tokensaved}).
\deva{} outperforms \adamw{} in $18$ settings and
trails it in $12$. Table~\ref{tab:tokensaved-detail} in
Appendix~\ref{app:omitted-ablation} identifies the winning
model--corpus settings and reports their token savings
and variation across seeds.

\textbf{Ablations.}
Appendix~\ref{app:omitted-ablation} compares the static and adaptive
splits, sweeps the gain $\rho$, and shows that a third band does not
improve on two.  In exact-SVD single-update tests on Pythia-70M, $\rho=1.25$ and $1.5$ lower the loss relative to the flat update ($\rho=1$) in five of six settings (Appendix~\ref{app:post-update-loss}).

\textbf{Conclusion.} Our diagnostics reveal that \muon{}'s momentum has a broad spectral
bulk whose individual directions are weak but collectively aligned with a reference gradient. \bulkboost{} exploits this structure with only two bands and a single shared bulk-to-spike gain.
Our theory shows when this coarse reallocation is locally beneficial
and what finer shaping can add through within-band variation.
An exact polar-plus-low-rank identity enables an SVD-free realization. Across $30$ continued-pretraining settings, the two-band variants match or exceed the finer spectral baselines in aggregate. Single-update experiments show that bulk emphasis lowers the loss relative to the exact flat update in five of six states. Together, these results suggest that useful departures from \muon{}'s flat spectrum can be surprisingly low-dimensional: \emph{a single bulk-to-spike gain can capture at least
as much benefit as the fine-grained spectral profiles}.

\textbf{Limitations.} The strongest \algostatic{} results select the fraction separately for each setting. A shared fraction avoids this search, but the best fraction varies across settings (Table~\ref{tab:fracsweep}). \algoadaptive{} instead sets the split online from the noise edge. It needs no fraction search, but it costs more per step. The noise edge comes from a white Gaussian noise reference model, so it separates signal from noise only approximately. Our evaluation is restricted to Pythia continued pretraining under the reported tuning protocol. Extending the evaluation to other architectures and training regimes remains future work.

\section*{AI Use Statement}
Generative AI tools were used during the development of this manuscript. Their use included drafting and debugging portions of experimental code, assisting with the interpretation and presentation of experimental results, and drafting and editing portions of the manuscript. All implementations, experimental protocols, and numerical results included in the final paper were reviewed and verified by the authors. Experiments reported in the paper were executed by the authors on the stated models and datasets. The authors take full responsibility for the correctness and originality of the paper and for parts of the code produced with the assistance of generative AI.

\bibliography{iclr2027_conference.bib}
\bibliographystyle{iclr2027_conference}

\clearpage
\appendix

\section{Related Work}
\label{app:related-work}

We position \textsc{BulkBoost} within matrix-aware optimization,
spectral reweighting, and subspace selection.
Our focus is the allocation of a fixed per-matrix direction
norm between two spectral bands, separating partition selection
--by fixed rank or a noise-calibrated edge--from the relative
gain assigned to the bulk.

\subsection{Matrix-Aware Optimization and Spectral Geometry}

Stochastic spectral descent exploits matrix geometry rather
than treating a parameter matrix as an unstructured vector
\citep{carlson2015stochastic}.
\citet{bernstein2024old} organize several optimizer
constructions through norm-constrained steepest descent.
\muon{} applies an approximate polar map to matrix momentum
\citep{jordan2024muon}; its large-scale realization addresses
update scaling, weight decay, and distributed execution for
language-model training \citep{liu2025scalable}.
Scion develops a broader family of norm-constrained linear
minimization oracles \citep{pethick2025training}, while
PolarGrad places polar-based updates within a matrix-aware
preconditioning framework \citep{lau2025polargrad}.
These perspectives distinguish the geometry defining an update
from its scalar normalization.
Within this setting, \textsc{BulkBoost}'s exact target retains
the singular frame of $M_t$ and reallocates gains between
two bands while preserving the Frobenius norm of its polar
factor $Q_t$.

A related line uses accumulated gradient statistics for
matrix preconditioning.
Shampoo constructs tensor-structured preconditioners
\citep{gupta2018shampoo}; SOAP performs Adam-style adaptation
in Shampoo's eigenbasis \citep{vyas2024soap}.
DeVA separates variance adaptation from scale-invariant
vector and matrix update rules \citep{song2026deva}.
COSMOS combines SOAP-style adaptation in a leading
eigensubspace with \muon{} on its complement
\citep{liu2025cosmos}, illustrating the use of distinct
optimization rules in different spectral subspaces.
\textsc{BulkBoost} instead applies one norm-matched
reweighting rule with a common gain within each band.
Its noise-calibrated variant uses split-minibatch statistics
to select band membership, not to assign direction-specific
second-moment gains; its fixed-rank variant requires no
noise estimate.

\subsection{Spectral Shaping Beyond \muon{}'s Flat Profile}

Several recent studies replace \muon{}'s equal-gain profile
with a nonconstant spectral map.
\citet{shumaylov2026freon} show that random and inverted
spectra can remain effective and introduce Freon through
a family of Schatten-based update rules.
\muon{}$^p$ uses fractional spectral powers
\citep{dong2026muonp}, while DynMuon varies the shaping
profile over training \citep{wu2026dynmuon}.
HTMuon uses heavy-tailed spectral correction
\citep{pang2026htmuon}, and Softsign introduces smooth
alternatives to sign updates, including the matrix optimizer
SoftMuon \citep{feoktistov2026softsign}.
These studies show that exact spectral flattening is
one effective allocation rather than a necessary choice.

Spectral clipping provides another alternative.
The \textsc{Spectra} framework of \citet{jiang2026spectralclipping}
studies clipping before and after preconditioning, with
efficient soft spectral clipping via Newton--Schulz
iterations. \textsc{Musec} replaces full flattening with momentum spectral clipping and implements smooth saturation using coupled
iterations \citep{liu2026musec}. These approaches retain or modify variation along the spectrum. \textsc{BulkBoost} instead asks whether useful reallocation
can be captured by a norm-matched two-band profile,
\emph{without} fine-grained spectral shaping.
Its exact target assigns equal gains within each band,
with fixed-rank and noise-calibrated rules determining
band membership separately from the relative
bulk-to-spike gain.

\subsection{Head--Bulk Allocation and the Closest Related Methods}

The Spectra optimizer of \citet{huang2026spectra} is distinct
from the similarly named \textsc{Spectra} framework of
\citet{jiang2026spectralclipping}.
It tracks a leading momentum subspace, replaces its singular
values with a tail-scale estimate, and retains the residual
tail before RMS-normalizing the update.
Its leading-subspace intervention and warm-started iteration
are closely related to our construction.
The central difference is the reference spectrum:
Spectra retains the raw tail's unequal singular magnitudes,
whereas \textsc{BulkBoost}'s exact target reweights the
polar factor $Q_t$ with equal gains within each band.
Both fixed-rank and noise-calibrated \textsc{BulkBoost}
use this two-level rule, differing in band selection.
Under norm matching, relative head suppression and relative
bulk emphasis are not opposite reallocations.
The meaningful distinctions are the underlying spectral
profile, partition rule, and normalization, rather than
the terminology of suppression versus amplification.

\subsection{Random-Matrix Noise Edges and Spectral Estimation}

The Marchenko--Pastur law describes the limiting spectrum
of appropriate random covariance matrices
\citep{marchenko1967}.
Spiked-model theory characterizes transitions in leading
empirical eigenvalues \citep{baik2005phase}, while rectangular
perturbation theory analyzes singular values and
singular-vector alignment
\citep{benaychgeorges2012singular}.
Optimal singular-value shrinkage studies reconstruction
of an underlying matrix under specified noise and loss
models \citep{gavish2017optimal}.
These results provide statistical reference scales and
estimation criteria, but do not determine which spectral
allocation best reduces evaluation loss after an optimizer
update.

Gradient-noise measurements have also been used to
characterize useful batch sizes and training efficiency
\citep{mccandlish2018empirical}.
The noise-calibrated variant of \textsc{BulkBoost} uses
split-minibatch differences to estimate the raw
full-minibatch noise variance.
It rescales this estimate by the squared global clipping
factor before smoothing and uses the reference-model
Nesterov correction to calibrate an edge for $M_t$
(Section~\ref{sec:split-edge}).
The resulting white-noise reference selects band membership;
the relative gain $\rho$ remains a separate optimization
choice.
The edge is neither an exact signal--noise separator nor
a prescription to discard below-edge modes.
The fixed-rank variant selects its partition without
estimating a noise edge.

\subsection{Polar Computation, Subspace Tracking, and Low-Rank Updates}

Iterative polar decomposition has a classical numerical
foundation \citep{higham1986polar}.
Polar Express develops optimized matrix-sign iterations
and their use in \muon{} \citep{amsel2025polar}.
Randomized range finding and subspace iteration provide
standard tools for estimating leading singular spaces
without a full decomposition \citep{halko2011finding}.
Spectral scaling measurements further highlight the role
of finite-iteration accuracy: momentum singular-value
quantiles vary with layer and model scale, affecting which
tail directions Newton--Schulz iterations resolve
\citep{magakyan2026spectral}.
Our realization combines \muon{}'s numerical polar
approximation $\widehat Q_t$ with warm-started block
iteration to approximate the selected spike subspace. The noise-calibrated tracker augments the retained basis
with fresh probes, whereas fixed-rank tracking uses a
block of prescribed width.
Neither construction explicitly computes individual
bulk singular vectors.  The shared bulk gain determines how these numerical
ingredients combine into the prescribed reweighted target.

Low-rank optimization methods address related computational
objectives.
GaLore projects gradients to reduce optimizer-state memory
\citep{zhao2024galore}, while Dion uses distributed,
amortized low-rank orthonormalized updates with error
feedback \citep{ahn2025dion}.
Dion3 combines algorithmic and systems techniques for
efficient orthogonal updates \citep{amsel2026dion3}.
In \textsc{BulkBoost}, the low-rank object is instead a
correction to a scaled polar update
(Theorem~\ref{thm:exact-decomposition}).
For full-rank $M_t$ and $\rho>0$, the exact two-level target
remains full rank; only its correction is confined to the
selected spike subspace.
The numerical realization matches $\|\widehat Q_t\|_F$,
rather than imposing the exact-polar norm $\sqrt d$.
Its tracking cost depends on the trial-space width and
iteration count as well as the retained rank, so avoiding
an explicit SVD does not by itself establish a runtime
advantage.

\subsection{Curvature, Noise Geometry, and Continuation Utility}

Pre-update gradient alignment captures only part of update
quality.
\citet{wang2026curvature} study \muon{}'s advantage through
directional curvature, distinguishing first-order gain from
the second-order penalty.
Quadratic Spectral Descent incorporates a quadratic surrogate
under a spectral-norm constraint
\citep{zhang2026quadratic}, while
\citet{beneventano2026spectral} analyze spectral dynamics
and noise geometry under \muon{}.
These studies complement spectral-allocation comparisons
by examining how curvature and transformed noise affect
optimization, beyond singular magnitudes alone.

Differentiating evaluation loss through training has a
history in hyperparameter optimization and bilevel learning
\citep{maclaurin2015gradient,franceschi2018bilevel}.
Our continuation utility applies this sensitivity viewpoint
to a fixed-frame spectral intervention: it is the negative
derivative of post-update or later evaluation loss with
respect to a current applied singular-direction gain,
evaluated at \muon{}'s equal-gain allocation.
Subsequent training follows a fixed policy, while gradients
and optimizer states can respond to the intervention.
For either fixed-rank or noise-calibrated band selection,
Theorem~\ref{thm:continuation-decomposition} characterizes
local allocation preferences and the fraction of the maximal
first-order improvement rate accessible to two bands
under norm matching.
This is an analysis of the exact spectral target, not an
online hypergradient procedure for selecting $\rho$.

Recent work on optimizer-memory schedules further
emphasizes the roles of training horizon and memory
timescale in optimizer comparisons
\citep{everett2026memory}.
Our pre-update alignment diagnostics, single-update loss
measurements, and continued-pretraining experiments
examine distinct aspects of spectral allocation:
immediate alignment, the loss response to a current
intervention, and performance over a training run.

\subsection{Controlled Continued-Pretraining Evaluation}

Pythia provides a public model suite designed for analysis
across training and scale \citep{biderman2023pythia}.
Continued pretraining on domains and tasks is an established
adaptation setting \citep{gururangan2020dont}.
We use this setting to compare fixed-rank and noise-calibrated
spectral reweighting from common pretrained checkpoints,
under matched training-token budgets and a common
validation protocol.
This complements the from-scratch language-model training
settings used by several related optimizers and evaluates
both allocation and partition-selection rules during
adaptation.
Our comparisons distinguish validation loss and token
efficiency from per-step computation and fraction-selection
effort (Section~\ref{sec:experiments}).

\section{Supplementary Setup and Normalization}
\label{app:spectral-setup}

This appendix supplements Section~\ref{sec:preliminaries}
with normalization identities, gain diagnostics, and rank
conventions, and derives the numerical norm matching used
in Section~\ref{sec:method}.

\subsection{Extended Notation and Spectral Profiles}
\label{app:spectral-profiles}

\paragraph{Extended notation.}
For $A\in\mathbb R^{m\times n}$, let $d=\min\{m,n\}$ and
write $A=U_A\Sigma_AV_A^\top$ for a thin SVD, where
$U_A\in\mathbb R^{m\times d}$ and
$V_A\in\mathbb R^{n\times d}$ have orthonormal columns,
$\Sigma_A=\operatorname{diag}(\sigma_{A,1},\ldots,\sigma_{A,d})$,
and $\sigma_{A,1}\geq\cdots\geq\sigma_{A,d}\geq0$.
For $\mathcal I\subseteq[d]$, let $U_{A,\mathcal I}$
and $V_{A,\mathcal I}$ collect the selected singular vectors.
Their left and right orthogonal projectors are
\[
P_{A,\mathcal I}=U_{A,\mathcal I}U_{A,\mathcal I}^\top,
\qquad
R_{A,\mathcal I}=V_{A,\mathcal I}V_{A,\mathcal I}^\top.
\]

\paragraph{Noise-scale convention.}
At fixed parameters, write the raw full-minibatch gradient as
$G^{\mathrm{raw}}=\overline G+Z$, where
$\overline G=\mathbb E[G^{\mathrm{raw}}]$ and $\mathbb E[Z]=0$.
In the white-noise reference, $s$ denotes the common entrywise standard
deviation of $Z$.
The gradient entering momentum is $G_t=\chi_tG_t^{\mathrm{raw}}$, where
$\chi_t$ is the global clipping scalar.
\textsc{BB-Adaptive} estimates the raw variance from two minibatch halves and
applies $\chi_t^2$ before smoothing.
Section~\ref{sec:split-edge} combines this running estimate with the
reference-model Nesterov correction to calibrate the noise edge for $M_t$.

\paragraph{Normalization identity.}
For the full-rank spectral input
$M_t=U_t\Sigma_tV_t^\top$, the rank-one singular modes
are Frobenius-orthonormal:
\[
\left\langle
u_{t,i}v_{t,i}^\top,\,
u_{t,j}v_{t,j}^\top
\right\rangle_F
=
(u_{t,i}^\top u_{t,j})(v_{t,i}^\top v_{t,j})
=
\mathbf 1\{i=j\}.
\]
Consequently, for a nonzero unnormalized profile $w_t$
as in \eqref{eq:norm-matched-profile},
\[
\left\|
U_t\operatorname{diag}(w_t)V_t^\top
\right\|_F^2
=
\sum_{i=1}^{d}w_{t,i}^2.
\]
The normalizer $c_t(w_t)$ therefore gives
$\|D_t(w_t)\|_F=\sqrt d=\|Q_t\|_F$.
The applied gain vector is $c_t(w_t)w_t$, denoted by
$w$ in Section~\ref{sec:theory-continuation}.
For either partition, the two-level profile satisfies
\[
\sum_{i=1}^{d}\bigl(w_{t,i}^{(\rho)}\bigr)^2
=
r_t+\rho^2(d-r_t),
\]
yielding $c_t(w_t^{(\rho)})=c_{\rho,r_t}$
whenever the profile is nonzero.
For the noise-calibrated partition, the notation
$w_t^{(\rho,\tau_t)}$ makes the threshold dependence explicit.

For a scalar-induced profile $w_{t,i}=g(\sigma_{t,i})$,
the equivalent matrix-function notation is
\[
g(\Sigma_t)=
\operatorname{diag}(g(\sigma_{t,1}),\ldots,g(\sigma_{t,d})),
\qquad
D_t(g)=c_t(g)U_tg(\Sigma_t)V_t^\top,
\]
where $c_t(g)=c_t(w_t)$ and the induced profile is nonzero.

\subsection{Relative and Effective Gains}
\label{app:gain-interpretation}

\paragraph{Absolute gains under a relative reallocation.}
For a full-rank input with both bands nonempty, let
$q_t=r_t/d\in(0,1)$ denote the spike fraction.
For \textsc{BB-Static}, $q_t=\lceil pd\rceil/d$, which can differ slightly
from $p$ because of rounding.
The applied gains relative to the flat profile are
\begin{equation}
g_{\mathrm{sp}}
=c_{\rho,r_t}
=\frac{1}{\sqrt{q_t+\rho^2(1-q_t)}},
\qquad
g_{\mathrm{bu}}=\rho g_{\mathrm{sp}}.
\label{eq:normalized-gains}
\end{equation}
At fixed $q_t$, increasing $\rho$ above $1$ raises $g_{\mathrm{bu}}$ toward
$1/\sqrt{1-q_t}$ and lowers $g_{\mathrm{sp}}$ toward zero.
For a small spike fraction, the bulk gain therefore stays close to one, while
the spike gain can fall substantially.
With $q_t=0.05$ and $\rho=4$, $g_{\mathrm{bu}}\approx1.02$ and
$g_{\mathrm{sp}}\approx0.26$.
At the largest tested fraction, $q_t=0.2$, the same $\rho$ gives
$g_{\mathrm{bu}}\approx1.11$ and $g_{\mathrm{sp}}\approx0.28$.
Thus $\rho$ sets only the ratio of the two gains, and the absolute bulk gain
changes much less than $\rho$.

More generally, multiplying every unnormalized gain by
the same positive scalar leaves the norm-matched direction
unchanged.
For $\rho>0$, profiles with spike and bulk gains
$(1,\rho)$ and $(1/\rho,1)$ therefore give the same update.
Relative bulk emphasis and relative spike suppression
describe the same norm-matched reallocation.

\paragraph{Effective bulk-to-spike gain.}
For a general spectral profile, define the ratio of
root-mean-square gains across the selected bands by
\begin{equation}
\rho_{\mathrm{eff}}(w_t;\mathcal I_{\mathrm{sp},t})
=
\sqrt{
\frac{
\frac{1}{d-r_t}\sum_{i\in\mathcal I_{\mathrm{bu},t}}w_{t,i}^2
}{
\frac{1}{r_t}\sum_{i\in\mathcal I_{\mathrm{sp},t}}w_{t,i}^2
}}.
\label{eq:effective-gain}
\end{equation}
This statistic is defined when both bands are nonempty
and the denominator is positive.
The common normalizer cancels, so the statistic is the
same for $w_t$ and its applied gains $c_t(w_t)w_t$.
In particular, the exact two-level target satisfies
$\rho_{\mathrm{eff}}=\rho$.
For \textsc{BB-Adaptive}, we abbreviate it as
$\rho_{\mathrm{eff}}(w_t;\tau_t)$.
The partition specifies band membership, whereas
$\rho_{\mathrm{eff}}$ summarizes the relative allocation;
it does not describe gain variation within a band or
measure subspace-recovery accuracy.

\subsection{Rank Conventions and Degenerate Partitions}
\label{app:rank-conventions}

\paragraph{Empty bands.}
For a full-rank input, reallocation between bands requires
$0<r_t<d$.
If $r_t=d$, all unnormalized gains equal one and the
target is $Q_t$.
If $r_t=0$ and $\rho>0$, all unnormalized gains equal
$\rho$, and normalization again gives $Q_t$.
When $r_t=0$ and $\rho=0$, the profile is zero and cannot
be normalized to $\sqrt d$.
This case is excluded by the nonzero-profile condition
in Section~\ref{sec:preliminaries}.

\paragraph{Rank-deficient inputs.}
For a rank-deficient spectral input, restrict the exact
construction to the active indices
$\mathcal I_{+,t}=\{i\in[d]:\sigma_{t,i}>0\}$,
with active rank $d_t^+=|\mathcal I_{+,t}|$.
The partial polar factor is
\[
Q_t=\sum_{i\in\mathcal I_{+,t}}u_{t,i}v_{t,i}^\top,
\qquad
\|Q_t\|_F=\sqrt{d_t^+}.
\]
Intersect each partition from
Section~\ref{sec:preliminaries} with $\mathcal I_{+,t}$,
and let $r_t$ count the selected active spike modes.
In particular, the fixed-rank rule gives
$r_t=\min\{\lceil pd\rceil,d_t^+\}$:
the prescribed fraction still refers to
$d=\min\{m,n\}$, while zero singular modes receive no gains.

For $d_t^+>0$ and a nonzero active gain vector, the exact
normalizers are
\[
c_t(w_t)=
\sqrt{\frac{d_t^+}
{\sum_{i\in\mathcal I_{+,t}}w_{t,i}^2}},
\qquad
c_{\rho,r_t}
=
\sqrt{\frac{d_t^+}
{r_t+\rho^2(d_t^+-r_t)}}.
\]
The gain diagnostics in
Appendix~\ref{app:gain-interpretation} likewise use the
active band sizes.
The matrix dimensions remain $m,n$; replacing $d$ by
$d_t^+$ in these identities does not change the dimensions
used to calibrate the noise edge.
For a zero spectral input, the partial polar factor and
exact update are zero, without evaluating normalization
ratios.

\subsection{Derivation of Numerical Norm Matching}
\label{app:numerical-norm-matching}

The numerical update in
\eqref{eq:svdfree-raw-direction} and
\eqref{eq:numerical-norm-matching}
uses an approximate polar factor $\widehat Q_t$ and
a tracked basis
$X_t\in\mathbb R^{d\times\widehat r_t}$.
We first derive the identities in exact arithmetic,
assuming that $X_t$ has orthonormal columns;
neither an exact polar factor nor exact spike recovery
is required for the projection calculation.

Define
\begin{equation}
\widehat Q_{\mathrm{sp},t}
=
\begin{cases}
X_t(X_t^\top\widehat Q_t), & m\leq n,\\
(\widehat Q_tX_t)X_t^\top, & n<m,
\end{cases}
\qquad
\widehat Q_{\mathrm{bu},t}
=\widehat Q_t-\widehat Q_{\mathrm{sp},t}.
\label{eq:numerical-band-components}
\end{equation}
These components use the tracked subspace and its orthogonal complement,
which can differ from the exact spike and bulk bands.
The raw direction is
\[
\widetilde D_t
=
\widehat Q_{\mathrm{sp},t}
+\rho\widehat Q_{\mathrm{bu},t}
=
\rho\widehat Q_t-(\rho-1)\widehat Q_{\mathrm{sp},t}.
\]

Since $X_tX_t^\top$ is an orthogonal projector,
\[
\left\langle
\widehat Q_{\mathrm{sp},t},
\widehat Q_{\mathrm{bu},t}
\right\rangle_F=0.
\]
Writing
\[
a_t=\|\widehat Q_t\|_F^2,
\qquad
b_t=\|\widehat Q_{\mathrm{sp},t}\|_F^2
\]
therefore gives
$\|\widehat Q_{\mathrm{bu},t}\|_F^2=a_t-b_t$.
For nonzero $\widehat Q_t$ and $\widetilde D_t$,
\begin{equation}
\|\widetilde D_t\|_F^2
=b_t+\rho^2(a_t-b_t),
\qquad
\widehat c_{\rho,t}
=
\sqrt{\frac{a_t}{b_t+\rho^2(a_t-b_t)}}
=
\frac{\|\widehat Q_t\|_F}{\|\widetilde D_t\|_F}.
\label{eq:numerical-normalization-identity}
\end{equation}
Thus $\widehat D_t=\widehat c_{\rho,t}\widetilde D_t$
satisfies $\|\widehat D_t\|_F=\|\widehat Q_t\|_F$.
The projected norm can be evaluated on the smaller side:
\[
b_t=
\begin{cases}
\|X_t^\top\widehat Q_t\|_F^2, & m\leq n,\\
\|\widehat Q_tX_t\|_F^2, & n<m.
\end{cases}
\]

For $a_t>0$ and $\rho>0$, the raw norm is positive because
$0\leq b_t\leq a_t$.
At $\rho=0$, a nonzero projected spike component is required.
A zero $\widehat Q_t$ gives a zero numerical update,
without evaluating the norm ratio.
At $\rho=1$, $\widetilde D_t=\widehat Q_t$, so the update reduces to the
unreweighted numerical polar direction.

\paragraph{Relation to the exact target.}
If $\widehat Q_t=Q_t$ and $M_t$ has full rank, then
\[
Q_tQ_t^\top=I_m
\quad\text{when }m\leq n,
\qquad
Q_t^\top Q_t=I_n
\quad\text{when }n<m.
\]
Consequently, $a_t=d$ and $b_t=\widehat r_t$ for any
orthonormal tracked basis, giving
$\widehat c_{\rho,t}=c_{\rho,\widehat r_t}$
whenever the raw direction is nonzero.
This equality of normalizers does not require an exact
spike subspace.
If $X_t$ also spans the selected spike subspace, then
$\widehat r_t=r_t$ and the numerical construction recovers
the exact target $D_t(w_t^{(\rho)})$.
For an approximate polar factor, $a_t$ need not equal
$d$ and $b_t$ need not equal $\widehat r_t$, so the
normalizers can differ.
For rank-deficient inputs, the direct norm ratio remains
applicable, but the full-rank simplification above does not
apply in general.

\paragraph{Finite-precision computation.}
The direct norm ratio in
\eqref{eq:numerical-norm-matching}
requires only Frobenius norms and rescaling.
Its norm-matching property does not rely on exact
orthogonality of the computed components, unlike the
expression involving $a_t-b_t$.
For a nonzero computed raw direction, exact evaluation of
the ratio matches its norm to $\|\widehat Q_t\|_F$;
finite-precision evaluation is subject to rounding error.
Norm matching controls magnitude without correcting
polar or subspace approximation errors.

\section{Implementation Details and SVD-Free Algorithm}
\label{app:algorithm}

This appendix gives the implementation details behind
Section~\ref{sec:method}.
It covers noise calibration, the exact-SVD target used in oracle
comparisons, spike tracking for both variants, and the full
\textsc{BB-Adaptive} algorithm.
\textsc{BB-Static} and \textsc{BB-Adaptive} share the reweighting and
normalization of Section~\ref{sec:preliminaries}.
They differ only in how they choose and track the spike band.
Appendix~\ref{app:numerical-norm-matching} derives numerical norm
matching, and Proposition~\ref{prop:edge-calibration} derives the
half-batch noise estimate and the Nesterov noise factor.

\subsection{Calibration and Estimator State}
\label{app:method-calibration}

\paragraph{Half-batch noise estimate.}
The two halves in \eqref{eq:split-gradients} are sampled
independently at the same parameters, so they have the same mean.
This mean cancels in the half-batch difference
$Z_t^{\mathrm{split}}$.
After vectorization, $Z_t^{\mathrm{split}}$ has the same covariance
as the sampling noise in $G_t^{\mathrm{raw}}$.
Hence
$\widehat s_t^{\,2}=\|Z_t^{\mathrm{split}}\|_F^2/(mn)$
estimates the average entrywise noise variance of the
full-minibatch gradient.
For isotropic noise, this is the common entrywise variance.
For anisotropic noise, one scalar cannot describe the full
covariance.

\paragraph{Clipping and variance smoothing.}
Let $\chi_t\in(0,1]$ be the global clipping scalar.
It is computed from the raw full-minibatch gradient of all model
parameters, and all matrices share it.
Each matrix uses $G_t=\chi_tG_t^{\mathrm{raw}}$ in the momentum
recursion.
The halves are not clipped separately.
Instead, each matrix's raw variance estimate is scaled by
$\chi_t^2$ before smoothing:
\[
\overline s_t^{\,2}
=
\begin{cases}
\chi_t^2\widehat s_t^{\,2},
& \text{if the variance state is uninitialized},\\[2pt]
\gamma\overline s_{t-1}^{\,2}
+(1-\gamma)\chi_t^2\widehat s_t^{\,2},
& \text{otherwise}.
\end{cases}
\]
The EMA averages variances.
The square root $\sqrt{\overline s_t^{\,2}}$ in
\eqref{eq:estimated-noise-edge} is taken afterwards.
For a matrix first observed at step~$1$, this gives
$\overline s_1^{\,2}=\chi_1^2\widehat s_1^{\,2}$.
Without clipping, $\chi_t=1$.

\paragraph{Nesterov factor.}
Proposition~\ref{prop:edge-calibration} derives the factor
$\nu_t(\beta)$ in \eqref{eq:nesterov-noise-factor} under a
reference model: centered, isotropic noise that is independent over
time and has constant variance, with $B_0=0$.
Writing $M_t=(1-\beta^2)G_t+\beta^2B_{t-1}$ gives an equivalent
two-term form:
\[
\nu_t^2(\beta)
=
(1-\beta^2)^2
+
\beta^4\frac{1-\beta}{1+\beta}
\left(1-\beta^{2(t-1)}\right).
\]
The first term comes from the current gradient, which enters $M_t$
both directly and through the updated buffer.
The second term comes from the past gradients stored in $B_{t-1}$.

\paragraph{Approximations.}
The implementation replaces the reference variance with the
clipping-scaled running estimate $\overline s_t^{\,2}$.
This puts the estimate on the scale of the applied gradient.
Because $\chi_t$ depends on the data, the scaled estimate is in
general biased for the noise in the clipped gradient.
The running estimate also assumes that the noise level changes
slowly.
Finally, the noise edge assumes Gaussian white noise
(Section~\ref{sec:preliminaries}).
The multiplier $\kappa_{\mathrm{edge}}$ can shift the threshold, but
a single scalar cannot correct for anisotropic or correlated noise in
general.
The experiments use $\gamma=0.95$ and $\kappa_{\mathrm{edge}}=1$.

\subsection{Exact Target and Oracle Comparison}
\label{sec:exact-oracle}

For a full-rank input $M_t$ and a nonzero two-level profile, the
exact-SVD oracle evaluates, for either rule,
\begin{equation}
D_t^{\mathrm{svd}}
=
D_t\!\left(w_t^{(\rho)}\right)
=
c_{\rho,r_t}U_t
\operatorname{diag}\!\left(w_t^{(\rho)}\right)
V_t^\top.
\label{eq:exact-oracle-update}
\end{equation}
For \textsc{BB-Adaptive}, $r_t$ counts all singular values above the
threshold $\tau_t=\widehat\tau_t$.
For \textsc{BB-Static}, $r_t=\lceil pd\rceil$.
In both cases, $c_{\rho,r_t}$ is the normalizer in
\eqref{eq:two-level-profile}.
Rank-deficient and zero inputs follow
Appendix~\ref{app:rank-conventions}.

Here, \emph{oracle} only means that the rule is evaluated exactly
with an SVD.
It does not choose the partition or the gain.
An oracle comparison fixes $M_t$, the rule, and $\rho$.
Both realizations therefore use the same threshold for
\textsc{BB-Adaptive} and the same rank for \textsc{BB-Static}.
The exact oracle matches $\|Q_t\|_F$, and the numerical realization
matches $\|\widehat Q_t\|_F$.
If $\widehat Q_t=Q_t$ and the orthonormal tracked basis spans the
selected spike subspace, the numerical update equals the oracle
(Appendix~\ref{app:numerical-norm-matching}).

For \textsc{BB-Adaptive}, the rank cap applies only to the numerical
tracker.
The exact target keeps every direction above the threshold.
Both sides use the same spectral rule, so a comparison at a common
input measures only numerical error: polar approximation, imperfect
subspace recovery, and, for \textsc{BB-Adaptive}, truncation by the
cap.

\subsection{Warm-Started Block Iteration}
\label{app:spike-tracking}

This subsection first gives the tracker for \textsc{BB-Adaptive} and
then the simpler tracker for \textsc{BB-Static}.
Both use the smaller-side Gram operator
$K_t\in\mathbb R^{d\times d}$ from \eqref{eq:gram-operator}.
Its action on a block $Y$ is
\[
K_tY=
\begin{cases}
M_t(M_t^\top Y), & m\leq n,\\
M_t^\top(M_tY), & n<m,
\end{cases}
\]
so the Gram matrix is never formed.
As in Section~\ref{sec:svd-free}, $\operatorname{orth}$ returns an
orthonormal basis of the column space of its argument.
An empty column space gives an empty basis.

\paragraph{Initialization and trial-space construction.}
Let $X_{t-1}$ be the spike basis that \textsc{BB-Adaptive} retained
at the previous step, with $\widehat r_{t-1}$ columns.
Before the first update, $X_0$ is empty and $\widehat r_0=0$.
For positive initial and refresh probe widths
$n_{\mathrm{init}}$ and $n_{\mathrm{probe}}$, set
\[
n_{\mathrm{fresh},t}
=
\begin{cases}
\min\{d,n_{\mathrm{init}}\},
& \widehat r_{t-1}=0,\\
\min\{d,n_{\mathrm{probe}}\},
& \widehat r_{t-1}>0.
\end{cases}
\]
The initial width is also used whenever the previous step selected
no directions.

Draw
$\Omega_t\in\mathbb R^{d\times n_{\mathrm{fresh},t}}$
with independent standard Gaussian entries and set
\[
Y_0=\operatorname{orth}([X_{t-1},\Omega_t]).
\]
The concatenated block may have more than $d$ columns, but its
orthonormal basis has at most $d$.
The previous basis gives a warm start, and the fresh probes let new
directions enter the trial space.
At the first step, $Y_0$ comes from the probes alone.
The first retained basis $X_1$ is obtained only after iteration and
threshold selection.

\paragraph{Block iteration and Ritz extraction.}
Within training step $t$, perform $q$ block-power iterations:
\begin{equation}
Y_{j+1}=\operatorname{orth}(K_tY_j),
\qquad j=0,\ldots,q-1.
\label{eq:block-power-tracking}
\end{equation}
Let $\ell_t$ be the number of columns of $Y_q$.
For a nonempty trial basis, form
\[
T_t=Y_q^\top K_tY_q=O_t\Lambda_tO_t^\top,
\qquad
\lambda_{t,1}\geq\cdots\geq\lambda_{t,\ell_t}\geq0,
\]
where
$\Lambda_t=\operatorname{diag}
(\lambda_{t,1},\ldots,\lambda_{t,\ell_t})$.
The Ritz directions are the columns of $Y_qO_t$,
with singular-value estimates $\sqrt{\lambda_{t,i}}$.
The eigenvalues are nonnegative in exact arithmetic
because $K_t$ is positive semidefinite.

For a nonnegative integer rank cap $r_{\max}$, let
$n_{\mathrm{cand},t}$ count the Ritz values above the threshold:
\begin{equation}
n_{\mathrm{cand},t}
=
\bigl|\{i\in[\ell_t]:
\lambda_{t,i}>\widehat\tau_t^{\,2}\}\bigr|,
\qquad
\widehat r_t=\min\{n_{\mathrm{cand},t},r_{\max}\},
\qquad
X_t=\operatorname{orth}\!\left(
Y_qO_{t,[\widehat r_t]}
\right).
\label{eq:ritz-spike-selection}
\end{equation}
Here $O_{t,[\widehat r_t]}$ holds the first $\widehat r_t$
eigenvectors in decreasing eigenvalue order.
If the cap binds, the largest estimates above the threshold are
kept.
An empty selection gives an empty spike basis.
If $\ell_t=0$, set $\widehat r_t=0$ and leave $X_t$ empty; no
eigendecomposition is needed.
The experiments use this hard threshold without hysteresis.

The selected Ritz block is reorthonormalized before it is stored.
The projector identities in
Appendix~\ref{app:numerical-norm-matching}
assume orthonormal columns, so in finite precision they hold only up
to orthonormalization error.

\paragraph{Experimental settings and random-number state.}
All \textsc{BB-Adaptive} training runs use
\[
q=6,
\qquad
n_{\mathrm{init}}=128,
\qquad
n_{\mathrm{probe}}=128,
\qquad
r_{\max}=256.
\]
Probe widths are limited by $d$.
The cap $r_{\max}$ is an absolute number of directions and does not
scale with $d$.
The retained rank satisfies
$\widehat r_t\leq\min\{\ell_t,r_{\max}\}$, with $\ell_t\leq d$.
The first retained basis has at most $\min\{d,128\}$ columns, below
the cap.
The cap applies only to the retained basis.
In these runs, the trial space has up to
$\min\{d,\widehat r_{t-1}+128\}$ columns, which can exceed the cap.

Gaussian probes are generated in float32 with a dedicated
pseudorandom-number generator for each parameter matrix, separate
from the training random-number stream.
Its state carries over from step to step and is never reseeded.

\paragraph{Fixed-rank tracking (\textsc{BB-Static}).}
\textsc{BB-Static} tracks a block of fixed width
$r=\lceil pd\rceil$.
The fraction $p$ is shared across matrices and fixed for the whole
run.
If a stored basis $X_{t-1}\in\mathbb R^{d\times r}$ exists, it is
the warm start, $Y_0=X_{t-1}$.
Otherwise, as at the first step, we draw a Gaussian block
$\Omega_t\in\mathbb R^{d\times r}$ and set
$Y_0=\operatorname{orth}(\Omega_t)$.
At each training step, we run
\[
Y_{j+1}=\operatorname{orth}(K_tY_j),
\qquad j=0,\ldots,q-1,
\qquad X_t=Y_q,
\]
with $q=6$ in the experiments.
The whole final basis enters \eqref{eq:svdfree-raw-direction} and is
stored for the next step.
Unlike the \textsc{BB-Adaptive} tracker, this tracker adds no fresh
probes, does no Ritz extraction, and applies no threshold or rank
cap.
The parameters $n_{\mathrm{init}}$, $n_{\mathrm{probe}}$, and
$r_{\max}$ therefore do not apply.
For example, $p=0.20$ and $d=1024$ give $r=205$, so initialization
and all six iterations use a $1024\times205$ block.

\subsection{Full Algorithm}
\label{app:positive-gain-realization}

Algorithm~\ref{alg:ser-muon} gives one \textsc{BB-Adaptive} step for
one matrix.
It combines noise calibration, spike tracking, and numerical norm
matching.
It returns the update direction before learning-rate scaling.
The clipping scalar $\chi_t$ is computed once from the full-model
gradient and passed in.
For \textsc{BB-Static}, the noise-estimate and threshold steps are
skipped, and the fixed-rank tracker of
Appendix~\ref{app:spike-tracking} replaces the adaptive one.

Before the first update, $B_0=0$, $X_0$ is empty,
and the variance state is uninitialized.
The algorithm returns $B_t$, $X_t$, and
$\overline s_t^{\,2}$ along with $\widehat D_t$.
The probe generator's state is kept separately for each matrix.
Since $\rho>0$, a nonzero $\widehat Q_t$ and an orthonormal $X_t$
give a nonzero raw direction in exact arithmetic, so the norm ratio
is defined.
Appendix~\ref{app:numerical-norm-matching} covers $\rho=0$ and
finite-precision effects.

\begin{algorithm}[t]
\caption{SVD-free \textsc{BB-Adaptive} update for one matrix}
\label{alg:ser-muon}

{\small
\noindent\textbf{Inputs:}
Mean gradients $G_t^{(1)},G_t^{(2)}\in\mathbb R^{m\times n}$ of the
two minibatch halves, before clipping;
global clipping scalar $\chi_t\in(0,1]$;
previous states $B_{t-1},X_{t-1}$;
previous variance state $\overline s_{t-1}^{\,2}$
(possibly uninitialized);
the matrix's probe generator;
step $t\geq1$.
Write $d=\min\{m,n\}$.
\par
\noindent\textbf{Parameters:}
Bulk-to-spike gain $\rho>0$; momentum $\beta\in[0,1)$;
smoothing $\gamma\in[0,1)$;
edge multiplier $\kappa_{\mathrm{edge}}>0$;
power iterations $q$;
initial probe width $n_{\mathrm{init}}$;
refresh probe width $n_{\mathrm{probe}}$;
absolute rank cap $r_{\max}$.
\par}
\medskip

\noindent
\begin{minipage}[t]{0.61\linewidth}
\small
\begin{algorithmic}[1]

\STATE $Z_t^{\mathrm{split}}
    \gets (G_t^{(1)}-G_t^{(2)})/2$
\STATE $\widehat s_t^{\,2}
    \gets \|Z_t^{\mathrm{split}}\|_F^2/(mn)$
\STATE $G_t^{\mathrm{raw}}
    \gets (G_t^{(1)}+G_t^{(2)})/2$
\STATE $G_t\gets\chi_tG_t^{\mathrm{raw}}$

\IF{the variance state is uninitialized}
    \STATE $\overline s_t^{\,2}
        \gets\chi_t^2\widehat s_t^{\,2}$
\ELSE
    \STATE $\overline s_t^{\,2}
        \gets\gamma\overline s_{t-1}^{\,2}
        +(1-\gamma)\chi_t^2\widehat s_t^{\,2}$
\ENDIF

\STATE $B_t\gets\beta B_{t-1}+(1-\beta)G_t$
\STATE $M_t\gets(1-\beta)G_t+\beta B_t$

\STATE Compute $\nu_t(\beta)$ from
\eqref{eq:nesterov-noise-factor}
\STATE $\widehat\tau_t\gets
\kappa_{\mathrm{edge}}\nu_t(\beta)
\sqrt{\overline s_t^{\,2}}(\sqrt m+\sqrt n)$

\STATE $\widehat Q_t
    \gets\operatorname{PolarExpress}(M_t)$

\STATE Define the Gram action using
\eqref{eq:gram-operator}

\IF{$X_{t-1}$ is empty}
    \STATE $f\gets\min\{d,n_{\mathrm{init}}\}$
\ELSE
    \STATE $f\gets\min\{d,n_{\mathrm{probe}}\}$
\ENDIF

\STATE Draw $\Omega_t\in\mathbb R^{d\times f}$
with independent $\mathcal N(0,1)$ entries
using this matrix's probe generator
\STATE $Y_0
    \gets\operatorname{orth}([X_{t-1},\Omega_t])$

\FOR{$j=0,\ldots,q-1$}
    \STATE $Y_{j+1}\gets\operatorname{orth}(K_tY_j)$
\ENDFOR

\STATE $\ell_t\gets$ number of columns of $Y_q$

\IF{$\ell_t=0$}
    \STATE $\widehat r_t\gets0$; set $X_t$ empty
\ELSE
    \STATE Compute
    $Y_q^\top K_tY_q=O_t\Lambda_tO_t^\top$
    \STATE Order eigenpairs by decreasing eigenvalue

    \STATE $n_{\mathrm{cand},t}\gets
    |\{i\in[\ell_t]:
    (\Lambda_t)_{ii}>\widehat\tau_t^{\,2}\}|$

    \STATE $\widehat r_t
        \gets\min\{n_{\mathrm{cand},t},r_{\max}\}$

    \STATE $X_t\gets
    \operatorname{orth}(Y_qO_{t,[\widehat r_t]})$
\ENDIF

\IF{$\|\widehat Q_t\|_F=0$}
    \STATE $\widehat D_t\gets0$
\ELSE
    \STATE Form $\widetilde D_t$ using
    \eqref{eq:svdfree-raw-direction}
    \STATE $\widehat D_t\gets
    \dfrac{\|\widehat Q_t\|_F}{\|\widetilde D_t\|_F}
    \widetilde D_t$
\ENDIF

\RETURN $\widehat D_t,B_t,X_t,\overline s_t^{\,2}$

\end{algorithmic}
\end{minipage}\hfill
\begin{minipage}[t]{0.35\linewidth}
\small
\raggedright
\setlength{\parindent}{0pt}

\textbf{Noise calibration.}
The half-batch difference estimates the average entrywise noise
variance of the full-minibatch mean gradient.
The clipping scalar rescales the gradient by $\chi_t$ and the
variance estimate by $\chi_t^2$.
The EMA averages variances; the square root and the Nesterov factor
are applied afterwards.
\par\medskip

\textbf{Spike tracking.}
Fresh Gaussian probes are added at every step:
$n_{\mathrm{init}}$ when the previous basis is empty and
$n_{\mathrm{probe}}$ otherwise.
A nonempty previous basis also gives a warm start.
The trial basis has at most $d$ columns.
Hard-threshold selection keeps at most $r_{\max}$ Ritz directions,
which are then reorthonormalized.
\par\medskip

\textbf{Persistent state.}
The momentum buffer, retained basis, and variance estimate carry
over between steps.
The probe generator advances when probes are drawn and is never
reseeded.
\par\medskip

\textbf{Norm matching.}
In exact arithmetic, $\|\widehat D_t\|_F=\|\widehat Q_t\|_F$.
At $\rho=1$, the update is the unreweighted direction
$\widehat Q_t$.
Norm matching fixes the magnitude only; it does not correct polar or
subspace approximation errors.
\par\medskip

\end{minipage}
\end{algorithm}

\section{Band Content Along \muon{} Trajectories}
\label{app:band-content}
\label{sec:band-content}

\emph{Does the bulk, whose singular values lie at or below the
noise edge, contribute meaningfully to reference-gradient
alignment under the flat profile?}
We examine this question along \muon{} trajectories of
continued pretraining, measuring band sizes, per-direction alignment,
and signed contributions across matrices.
The measured bulk is \emph{weak per direction but substantial
in aggregate}: it contains 93.7--97.1\% of pooled singular
directions and has positive signed aggregate alignment in every
analyzed matrix.
These observations motivate a broad-bulk/small-spike
partition and the fixed fractions tested in
Section~\ref{sec:experiments}.
Whether increasing the bulk's relative gain improves an
update is a separate question, which
Appendix~\ref{app:band-content-immediate} examines.

\subsection{Protocol and Frame-Consistent Calibration}
\label{app:band-content-protocol}

\paragraph{Training trajectories and checkpoints.}
For each of \FineMath{}, \CodeParrot{}, and \WikiAr{}, we
analyze one Pythia-70M \muon{} trajectory of continued
pretraining.
Each run uses approximately 100M tokens, with batches of
32 sequences of length 1024, momentum $\beta=0.95$,
global gradient clipping at norm 1, and Newton--Schulz
iterations for the polar map, as in the \muon{} baseline of
Section~\ref{sec:experiments}.
The matrix optimizer and the auxiliary AdamW optimizer both
use peak learning rate $5\times10^{-4}$, the shared
Pythia-70M rate of Section~\ref{sec:experiments}.
These diagnostics observe the \muon{} trajectories without
changing their updates.

We report four checkpoints as 0.0M, 25.0M, 50.0M, and
99.9M consumed training tokens.
The initial measurement is a transient probe before
continued pretraining; subsequent measurements precede
updates after 24,969,216, 49,971,200, and 99,942,400 tokens.
At each checkpoint, we analyze all 24 hidden weight matrices,
excluding embeddings and the language-model head.
For matrix $\ell$ with dimensions $m_\ell\times n_\ell$,
write $d_\ell=\min\{m_\ell,n_\ell\}$.
Here each matrix supplies 512 thin-SVD directions, giving
$N_{\mathrm{dir}}=\sum_\ell d_\ell=12{,}288$ directions per
corpus--checkpoint--frame combination.

\paragraph{An independent reference gradient.}
At fixed model parameters, construct the large-batch reference
\begin{equation}
H_t=\frac{1}{16}\sum_{j=1}^{16}G_t^{\mathrm{ref},j},
\label{eq:sec33-reference}
\end{equation}
where each $G_t^{\mathrm{ref},j}$ is an unclipped mean
gradient from an independently sampled full minibatch of
the same corpus.
The reference contains 512 sequences, or 524,288 input tokens;
$H_{t,\ell}$ denotes its block for matrix $\ell$.
It is neither clipped nor used for parameter updates.
Reference sampling uses a separate data random-number stream,
and the PyTorch random-number state is restored afterward.
Full thin SVDs are computed only at diagnostic checkpoints.

\paragraph{Raw-gradient and Nesterov frames.}
As in Section~\ref{sec:split-edge}, distinguish
the raw gradient $G_{t,\ell}^{\mathrm{raw}}$ from the clipped
gradient $G_{t,\ell}=\chi_tG_{t,\ell}^{\mathrm{raw}}$,
where $\chi_t$ is the full-model clipping scalar.
Using the actual previous momentum buffer, reconstruct
\begin{equation}
\begin{aligned}
B_{t,\ell}
&=\beta B_{t-1,\ell}+(1-\beta)G_{t,\ell},\\
M_{t,\ell}
&=(1-\beta)G_{t,\ell}+\beta B_{t,\ell}.
\end{aligned}
\label{eq:sec33-frames}
\end{equation}
We analyze $A^G_{t,\ell}=G_{t,\ell}^{\mathrm{raw}}$ in the
\emph{raw-gradient frame} and $A^M_{t,\ell}=M_{t,\ell}$ in
the \emph{Nesterov frame}, against the same $H_{t,\ell}$.
The first is the current minibatch gradient.
The second is the input to \muon{}'s polar map, which
combines the current gradient with the momentum buffer.

\paragraph{A noise edge matched to each frame.}
For each training minibatch, compute the raw mean gradients
of the two halves and form
\begin{equation}
G_{t,\ell}^{\mathrm{raw}}
=\frac{G_{t,\ell}^{(1)}+G_{t,\ell}^{(2)}}{2},
\qquad
Z_{t,\ell}^{\mathrm{split}}
=\frac{G_{t,\ell}^{(1)}-G_{t,\ell}^{(2)}}{2}.
\label{eq:sec33-split}
\end{equation}
Let $\widehat v_{t,\ell}
=\|Z_{t,\ell}^{\mathrm{split}}\|_F^2/(m_\ell n_\ell)$
be the raw half-batch variance estimate, which corresponds to
$\widehat s_t^{\,2}$ in Section~\ref{sec:split-edge}.
The diagnostics keep both a raw and a clipping-scaled
variance EMA:
\begin{equation}
\begin{aligned}
v_{t,\ell}^{\mathrm{raw}}
&=\gamma v_{t-1,\ell}^{\mathrm{raw}}
 +(1-\gamma)\widehat v_{t,\ell},\\
v_{t,\ell}^{\mathrm{clip}}
&=\gamma v_{t-1,\ell}^{\mathrm{clip}}
 +(1-\gamma)\chi_t^2\widehat v_{t,\ell}.
\end{aligned}
\label{eq:sec33-noise-ema}
\end{equation}
Initialize each EMA directly from its first observation,
then apply these recursions every training step with
$\gamma=0.95$.
The clipping-scaled state is $\overline s_t^{\,2}$ of
Section~\ref{sec:split-edge} for matrix $\ell$; the raw state
serves the raw-gradient frame.
The squared clipping factor is applied before smoothing.

Using the Nesterov variance multiplier
\begin{equation}
\nu_t^2(\beta)
=(1-\beta^2)^2+
\beta^4\frac{1-\beta}{1+\beta}
\bigl(1-\beta^{2(t-1)}\bigr),
\qquad t\geq1,
\label{eq:sec33-nesterov-factor}
\end{equation}
from Proposition~\ref{prop:edge-calibration}, set
\begin{equation}
\begin{aligned}
\tau_{t,\ell}^{G}
&=\kappa_{\mathrm{edge}}
\sqrt{v_{t,\ell}^{\mathrm{raw}}}
(\sqrt{m_\ell}+\sqrt{n_\ell}),\\
\tau_{t,\ell}^{M}
&=\kappa_{\mathrm{edge}}
\nu_t(\beta)\sqrt{v_{t,\ell}^{\mathrm{clip}}}
(\sqrt{m_\ell}+\sqrt{n_\ell}),
\end{aligned}
\label{eq:sec33-thresholds}
\end{equation}
with $\kappa_{\mathrm{edge}}=1$ and $\nu_t(\beta)\geq0$.
Thus the raw-gradient frame uses the raw noise scale, and
the Nesterov frame uses the clipping-scaled noise scale
adjusted for momentum.
In each frame, singular values strictly above the threshold
are spikes; the remaining directions form the bulk.
Selection uses the full computed spectrum, without a
tracker rank cap.
As in Section~\ref{sec:preliminaries}, these thresholds are
approximate references based on white noise.

The initial Nesterov measurement uses a transient first-step
probe with a zero buffer.
Direct initialization of the variance states gives
\[
M_{1,\ell}=(1-\beta^2)\chi_1G_{1,\ell}^{\mathrm{raw}},
\qquad
\tau^M_{1,\ell}=(1-\beta^2)\chi_1\tau^G_{1,\ell}.
\]
The matched positive rescaling preserves the initial
partition and polar direction, which explains the agreement
between frames at 0.0M tokens.
The probe leaves the training buffer and variance states
unchanged.

\paragraph{What per-direction alignment measures.}
For $F\in\{G,M\}$, write
$A^F_{t,\ell}=U^F_{t,\ell}\Sigma^F_{t,\ell}
(V^F_{t,\ell})^\top$ and define
\begin{equation}
\alpha^F_{t,\ell,i}
=(u^F_{t,\ell,i})^\top H_{t,\ell}v^F_{t,\ell,i}.
\label{eq:sec33-alpha}
\end{equation}
This is the signed projection of the reference gradient
onto the unit-Frobenius matrix
$u^F_{t,\ell,i}(v^F_{t,\ell,i})^\top$, without normalization
by $\|H_{t,\ell}\|_F$, so it is not a cosine similarity.
A positive value means that a step along
$-u^F_{t,\ell,i}(v^F_{t,\ell,i})^\top$ lowers the reference
objective to first order.
Under the flat profile, the scores sum to the polar alignment:
\begin{equation}
\sum_i\alpha^F_{t,\ell,i}
=\left\langle H_{t,\ell},
\operatorname{polar}(A^F_{t,\ell})\right\rangle_F.
\label{eq:sec33-polar-descent}
\end{equation}
These identities concern the exact polar targets of
Section~\ref{sec:preliminaries}, with the rank conventions
of Appendix~\ref{app:rank-conventions}.
Alignment with the raw input matrix instead weights each
$\alpha^F_{t,\ell,i}$ by its singular value.
The raw-gradient-frame polar direction is only a diagnostic
comparison; \muon{} uses the Nesterov frame.

\paragraph{Signed and absolute contributions within a matrix.}
Suppress the frame, time, and matrix indices temporarily.
For a nonempty band $\mathcal I$, its mean absolute
alignment is
$\overline a(\mathcal I)=|\mathcal I|^{-1}
\sum_{i\in\mathcal I}|\alpha_i|$.
Its signed and absolute shares are
\begin{equation}
S(\mathcal I)=
\frac{\sum_{i\in\mathcal I}\alpha_i}
     {\sum_{j=1}^{d_\ell}\alpha_j},
\qquad
S_{\mathrm{abs}}(\mathcal I)=
\frac{\sum_{i\in\mathcal I}|\alpha_i|}
     {\sum_{j=1}^{d_\ell}|\alpha_j|},
\label{eq:band-descent-shares}
\end{equation}
when the respective denominators are nonzero.
The signed share measures the net contribution after positive
and negative projections cancel; the absolute share measures
projection magnitude without cancellation.
A signed share can be negative or exceed one, and it can be
sensitive to a small total signed denominator.
For applied gains $w=(w_i)$, the contribution of direction $i$
becomes $w_i\alpha_i$.
A common positive normalizer cancels from these within-matrix
shares.
In general, it does not cancel from sums across matrices with
different normalizers.

\paragraph{How the tables pool matrices.}
The tables pool counts and alignment sums first and then form
ratios; they do not average matrix-wise shares.
Suppressing checkpoint and frame indices, define for
$b\in\{\mathrm{sp},\mathrm{bu}\}$
\begin{equation}
\begin{aligned}
N_b&=\sum_\ell|\mathcal I_{b,\ell}|,\\
A_b&=\sum_\ell\sum_{i\in\mathcal I_{b,\ell}}
\alpha_{\ell,i},
\qquad
B_b=\sum_\ell\sum_{i\in\mathcal I_{b,\ell}}
|\alpha_{\ell,i}|.
\end{aligned}
\label{eq:sec33-pooled-sums}
\end{equation}
We report the bulk fraction $N_{\mathrm{bu}}/N_{\mathrm{dir}}$,
the mean absolute per-direction alignments $a_b=B_b/N_b$,
and their spike-to-bulk ratio
$\Gamma=a_{\mathrm{sp}}/a_{\mathrm{bu}}$.
The pooled bulk shares are
\begin{equation}
S_{\mathrm{bu}}
=\frac{A_{\mathrm{bu}}}{A_{\mathrm{sp}}+A_{\mathrm{bu}}},
\qquad
S_{\mathrm{bu}}^{\mathrm{abs}}
=\frac{B_{\mathrm{bu}}}{B_{\mathrm{sp}}+B_{\mathrm{bu}}}.
\label{eq:sec33-shares}
\end{equation}
To quantify cancellation, we also report
\begin{equation}
C=\frac{|A_{\mathrm{all}}|}{B_{\mathrm{all}}},
\qquad
A_{\mathrm{all}}=A_{\mathrm{sp}}+A_{\mathrm{bu}},
\qquad
B_{\mathrm{all}}=B_{\mathrm{sp}}+B_{\mathrm{bu}}.
\label{eq:sec33-cancellation}
\end{equation}
Small $C$ means that opposing projections largely cancel.
All pooled denominators in the reported measurements are
positive.
No layer-specific learning-rate or shape multipliers enter
these sums, so they describe the alignment of polar directions
only.
Neither \textsc{BulkBoost} variant uses $H_t$ online.

\subsection{Results: A Broad Bulk with Positive Aggregate Alignment}
\label{app:band-content-results}

\paragraph{Many weak directions contribute collectively.}
Tables~\ref{tab:sec33-gradient} and
\ref{tab:sec33-momentum} report the raw-gradient and
Nesterov frames, respectively.
Across the three corpora and four checkpoints, the bulk
contains 93.7--97.1\% of pooled directions.
On average, spike directions have much larger absolute
reference-gradient projections: $\Gamma$ ranges from
33.2 to 92.3 in the raw-gradient frame and from 18.7 to
86.7 in the Nesterov frame.
Despite these per-direction differences, the bulk contributes
14.6--26.0\% of pooled signed polar alignment in the
raw-gradient frame and 10.0--63.5\% in the Nesterov frame.
Thus, \emph{weak average per-direction alignment does not imply
a negligible collective contribution}.

\begin{table}[t]
\centering
\small
\setlength{\tabcolsep}{6pt}
\begin{tabular}{@{}lrrrrr@{}}
\toprule
Corpus & Tokens (M) & Bulk (\%) & $\Gamma$ ($\times$)
& $S_{\mathrm{bu}}$ (\%) & $S_{\mathrm{bu}}^{\mathrm{abs}}$ (\%) \\
\midrule
\FineMath{}   &  0.0 & 95.3 & 66.4 & 22.6 & 23.5 \\
              & 25.0 & 97.1 & 85.1 & 25.8 & 28.0 \\
              & 50.0 & 96.6 & 92.3 & 19.7 & 23.6 \\
              & 99.9 & 96.2 & 33.2 & 18.6 & 43.5 \\
\midrule
\CodeParrot{} &  0.0 & 95.5 & 74.8 & 22.6 & 22.3 \\
              & 25.0 & 96.5 & 72.0 & 26.0 & 27.8 \\
              & 50.0 & 96.6 & 64.9 & 25.6 & 30.2 \\
              & 99.9 & 96.6 & 58.8 & 16.6 & 32.9 \\
\midrule
\WikiAr{}     &  0.0 & 93.7 & 86.7 & 14.6 & 14.6 \\
              & 25.0 & 94.8 & 64.8 & 21.4 & 22.0 \\
              & 50.0 & 94.5 & 47.0 & 25.5 & 26.9 \\
              & 99.9 & 94.6 & 40.7 & 18.1 & 30.3 \\
\bottomrule
\end{tabular}
\caption{\textbf{Raw-gradient-frame band content.}
Each row pools 24 matrices and 12,288 singular directions
from a Pythia-70M \muon{} trajectory.
Bulk (\%) is the pooled fraction of directions whose singular
values lie at or below the raw-gradient noise edge
$\tau^G_{t,\ell}$; $\Gamma$ is the ratio of mean absolute
per-direction alignment in spikes to bulk.
$S_{\mathrm{bu}}$ and $S_{\mathrm{bu}}^{\mathrm{abs}}$
are the signed and absolute shares in
\eqref{eq:sec33-shares}, computed from pooled sums.
Every analyzed matrix has positive signed bulk alignment.
Token labels are rounded.}
\label{tab:sec33-gradient}
\end{table}

\begin{table}[t]
\centering
\small
\setlength{\tabcolsep}{6pt}
\begin{tabular}{@{}lrrrrr@{}}
\toprule
Corpus & Tokens (M) & Bulk (\%) & $\Gamma$ ($\times$)
& $S_{\mathrm{bu}}$ (\%) & $S_{\mathrm{bu}}^{\mathrm{abs}}$ (\%) \\
\midrule
\FineMath{}   &  0.0 & 95.3 & 66.4 & 22.6 & 23.5 \\
              & 25.0 & 97.1 & 56.8 & 24.7 & 36.8 \\
              & 50.0 & 96.8 & 78.1 & 10.0 & 28.0 \\
              & 99.9 & 96.6 & 18.9 & 63.5 & 59.8 \\
\midrule
\CodeParrot{} &  0.0 & 95.5 & 74.8 & 22.6 & 22.3 \\
              & 25.0 & 96.5 & 60.7 & 19.4 & 31.3 \\
              & 50.0 & 96.5 & 37.4 & 20.5 & 42.4 \\
              & 99.9 & 96.3 & 18.7 & 45.8 & 58.5 \\
\midrule
\WikiAr{}     &  0.0 & 93.7 & 86.7 & 14.6 & 14.6 \\
              & 25.0 & 95.1 & 72.6 & 18.9 & 21.3 \\
              & 50.0 & 94.7 & 41.5 & 23.5 & 30.3 \\
              & 99.9 & 94.8 & 25.3 & 25.3 & 41.8 \\
\bottomrule
\end{tabular}
\caption{\textbf{Nesterov-frame band content.}
The trajectories, checkpoints, and reference gradients
are paired with Table~\ref{tab:sec33-gradient}, but the
singular frame and noise edge $\tau^M_{t,\ell}$ are those of
\muon{}'s Nesterov input $M_{t,\ell}$.
Each row pools 24 matrices and 12,288 directions, using the
same statistics as Table~\ref{tab:sec33-gradient}.
Matched scalar rescaling explains the agreement at 0.0M.
Every analyzed matrix has positive signed bulk alignment.}
\label{tab:sec33-momentum}
\end{table}

\paragraph{Positive alignment is present in every measured matrix.}
All 24 matrices have positive signed bulk sums at every
checkpoint in both frames and all three corpora:
$3\times4\times2\times24=576$ recorded combinations.
Positive bulk alignment is therefore spread across all layers.

\paragraph{The optimizer's frame changes the relative contributions.}
At 99.9M tokens, the Nesterov bulk supplies 63.5\% of
pooled signed alignment on \FineMath{}, 45.8\% on
\CodeParrot{}, and 25.3\% on \WikiAr{}.
The paired raw-gradient shares are 18.6\%, 16.6\%, and
18.1\%.
A larger share does not necessarily mean a larger absolute
bulk contribution: in Table~\ref{tab:sec33-cancellation},
the final Nesterov bulk sums are smaller than their
raw-gradient counterparts, but the spike sums decrease
by a greater proportion.
The share is also not monotone over training: on \FineMath{},
the Nesterov bulk share is 24.7\% at 25M, 10.0\% at
50M, and 63.5\% at 99.9M.
What holds in every case is positive net bulk alignment;
the share itself varies across corpora and checkpoints.

\paragraph{The contribution remains after signed cancellation.}
Table~\ref{tab:sec33-cancellation} reports raw signed
and absolute sums at the final checkpoint.
For example, the Nesterov bulk on \CodeParrot{}
contributes approximately $0.0957$ to total signed
alignment $0.2088$, while total absolute alignment is
$0.8246$, giving $C\approx0.253$.
The Nesterov cancellation ratios are $0.238$ on
\FineMath{} and $0.587$ on \WikiAr{}.
The positive bulk sums therefore survive the cancellation
between opposing directions.

\begin{table}[t]
\centering
\small
\setlength{\tabcolsep}{5pt}
\begin{tabular}{@{}llrrrrr@{}}
\toprule
Corpus & Frame & $A_{\mathrm{sp}}$ & $A_{\mathrm{bu}}$
& $A_{\mathrm{all}}$ & $B_{\mathrm{all}}$ & $C$ \\
\midrule
\FineMath{}
 & Gradient & 0.5946 & 0.1363 & 0.7308 & 1.2160 & 0.601 \\
 & Nesterov & 0.0749 & 0.1300 & 0.2049 & 0.8594 & 0.238 \\
\midrule
\CodeParrot{}
 & Gradient & 1.0155 & 0.2028 & 1.2183 & 1.6182 & 0.753 \\
 & Nesterov & 0.1132 & 0.0957 & 0.2088 & 0.8246 & 0.253 \\
\midrule
\WikiAr{}
 & Gradient & 0.9131 & 0.2013 & 1.1144 & 1.3890 & 0.802 \\
 & Nesterov & 0.4201 & 0.1422 & 0.5623 & 0.9577 & 0.587 \\
\bottomrule
\end{tabular}
\caption{\textbf{Signed contributions and cancellation at 99.9M tokens.}
$A_{\mathrm{sp}}$ and $A_{\mathrm{bu}}$ are pooled signed
alignments; $A_{\mathrm{all}}$ is their sum and
$B_{\mathrm{all}}$ sums absolute alignments.
$C=|A_{\mathrm{all}}|/B_{\mathrm{all}}$ measures the fraction
remaining after signed cancellation.
Every row pools 24 matrices, and every matrix has positive
signed bulk alignment.
Displayed sums are rounded independently.}
\label{tab:sec33-cancellation}
\end{table}

\subsection{Implications for \textsc{BulkBoost}}
\label{app:band-content-scope}

\paragraph{Motivation for the fixed-fraction sweep.}
The complementary spike band contains only 2.9--6.3\%
of the pooled singular directions in these measurements.
This broad-bulk/small-spike structure motivates testing
small fixed spike fractions, alongside larger ones, in
Table~\ref{tab:fracsweep}
(Appendix~\ref{app:omitted-ablation}).
The tested grid is $p\in\{0.015,0.02,0.05,0.15,0.20\}$:
5\% lies within the observed pooled range and is the
default for \textsc{BB-Static}, 1.5\% and 2\% examine
smaller fractions, and 15\% and 20\% examine larger spike
bands.
The diagnostics inform the grid; we did not convert the
noise edge into five calibrated fractions.
The loss comparisons in Table~\ref{tab:fracsweep} decide
which fractions perform best.

The 2.9--6.3\% range is computed after pooling matrices
within each corpus--checkpoint--frame combination.
Individual matrices can fall outside it, and it is not an
estimate of the best fixed rank.
\textsc{BB-Static} uses $r_{t,\ell}=\lceil p d_\ell\rceil$
with a fixed fraction for the whole run, whereas
\textsc{BB-Adaptive} selects the spike band separately for
each matrix and step.
The two rules give different partitions even when their
pooled band sizes are similar.

\paragraph{Link to the two-band design.}
Theorem~\ref{thm:exact-decomposition} shows that a shared bulk
gain needs only the polar factor plus a correction in the
spike subspace.
The band-content measurements show why the bulk matters in
this allocation: it carries positive reference alignment in
every matrix.
Appendix~\ref{app:post-update-loss} evaluates the post-update
effect of reweighting, and Section~\ref{sec:experiments}
evaluates training performance and computational cost.

\section{Pre-Update Alignment and Post-Update Loss}
\label{app:band-content-immediate}

This appendix connects the band-content measurements in
Appendix~\ref{app:band-content} to the effect of spectral
reweighting on validation loss.
We first compute the pre-update reference-alignment
preference and then evaluate the model produced by a
controlled update.
The single-update measurements show that finite
bulk-emphasizing reweighting can improve on the exact flat
update $Q_t$, even when paired pre-update predictions favor
less bulk weight.

\subsection{Pre-Update Reference Alignment}
\label{app:pre-update-alignment}

\paragraph{A norm-matched alignment score.}
Use the same singular frames, reference gradients, and
partitions as Appendix~\ref{app:band-content}, holding
them fixed while varying $\rho$.
Suppress the checkpoint and frame indices.
For matrix $\ell$, let $r_\ell=|\mathcal I_{\mathrm{sp},\ell}|$,
$k_\ell=d_\ell-r_\ell$, and $q_\ell=r_\ell/d_\ell$,
and set
$A_{b,\ell}=\sum_{i\in\mathcal I_{b,\ell}}\alpha_{\ell,i}$.
The exact norm-matched two-level direction has alignment
\begin{equation}
J_\ell(\rho)
=c_{\rho,r_\ell}
\bigl(A_{\mathrm{sp},\ell}+\rho A_{\mathrm{bu},\ell}\bigr),
\qquad
c_{\rho,r_\ell}
=\bigl[q_\ell+\rho^2(1-q_\ell)\bigr]^{-1/2},
\label{eq:sec33-immediate-score}
\end{equation}
whenever the profile is nonzero.
A larger score predicts a larger first-order decrease
in the reference objective under a common step size.
This gives the pre-update comparison for the loss
measurements below.

For nonempty bands, differentiating at $\rho=1$ gives
\begin{equation}
\left.\frac{dJ_\ell(\rho)}{d\rho}\right|_{\rho=1}
=\frac{r_\ell k_\ell}{d_\ell}
\left(
\overline\alpha_{\mathrm{bu},\ell}
-\overline\alpha_{\mathrm{sp},\ell}
\right),
\label{eq:sec33-immediate-criterion}
\end{equation}
where $\overline\alpha_{\mathrm{bu},\ell}
=A_{\mathrm{bu},\ell}/k_\ell$ and
$\overline\alpha_{\mathrm{sp},\ell}
=A_{\mathrm{sp},\ell}/r_\ell$.
This uses the same norm-matched differentiation as
Theorem~\ref{thm:continuation-decomposition}, with
pre-update reference projections in place of continuation
utilities.
The local preference therefore depends on the difference in
\emph{mean per-direction} alignment between bands.
Appendix~\ref{app:band-content} instead measures their
\emph{aggregate} contributions under the flat profile.

Normalize each matrix separately before pooling:
\begin{equation}
J(\rho)=\sum_\ell J_\ell(\rho),
\qquad
R_{\mathrm{mat}}(\rho)=\frac{J(\rho)}{J(1)}.
\label{eq:sec33-matrixwise-score-ratio}
\end{equation}
As in Appendix~\ref{app:band-content}, no learning-rate
or shape multipliers are included.
The reference $\rho=1$ is the flat profile in the same frame;
in the Nesterov frame, it gives \muon{}'s idealized update
$Q_t$.
The measured $J(1)$ is positive in every pooled comparison.

\paragraph{The pre-update score favors less bulk weight.}
The aggregate derivative $J'(1)$ is negative in all
24 corpus--checkpoint--frame combinations.
At the matrix level, the derivative is negative in 564
of 576 recorded combinations and positive in the remaining 12.
The pooled signed per-direction ratio
$(A_{\mathrm{bu}}/N_{\mathrm{bu}})/
(A_{\mathrm{sp}}/N_{\mathrm{sp}})$ ranges from
approximately $0.0037$ to $0.0617$.
Unlike $\Gamma$ in Appendix~\ref{app:band-content}, this
ratio keeps the signs of the projections.
The derivatives and reweighted scores are computed
matrix by matrix; the pooled ratio only summarizes them.
Across the recorded comparisons, $R_{\mathrm{mat}}(2)$
ranges from 0.556 to 0.828, and
$R_{\mathrm{mat}}(4)$ from 0.330 to 0.738.
Table~\ref{tab:sec33-criterion} reports the final checkpoint.

\begin{table}[t]
\centering
\small
\setlength{\tabcolsep}{5pt}
\begin{tabular}{@{}llrrr@{}}
\toprule
Corpus & Frame &
$\overline\alpha_{\mathrm{bu}}/\overline\alpha_{\mathrm{sp}}$
& $R_{\mathrm{mat}}(2)$ & $R_{\mathrm{mat}}(4)$ \\
\midrule
\FineMath{}   & Gradient & 0.009 & 0.601 & 0.397 \\
              & Nesterov & 0.062 & 0.828 & 0.738 \\
\CodeParrot{} & Gradient & 0.007 & 0.590 & 0.381 \\
              & Nesterov & 0.032 & 0.739 & 0.604 \\
\WikiAr{}     & Gradient & 0.012 & 0.602 & 0.395 \\
              & Nesterov & 0.019 & 0.638 & 0.450 \\
\bottomrule
\end{tabular}
\caption{\textbf{Pre-update reference-alignment trade-off
at 99.9M tokens.}
The per-direction ratio uses pooled signed means
$\overline\alpha_b=A_b/N_b$; the ratio $\Gamma$ in
Appendix~\ref{app:band-content} uses absolute alignments.
$R_{\mathrm{mat}}(\rho)$ applies each matrix's own
norm-matching factor before summing, as in
\eqref{eq:sec33-matrixwise-score-ratio}.
Values below one indicate lower alignment than the
flat profile in the same frame.
Each row uses the same 24 matrices and reference gradient
as the preceding diagnostics, without learning-rate or
shape multipliers.}
\label{tab:sec33-criterion}
\end{table}

\paragraph{Interpreting the norm-matched reweighting.}
Increasing $\rho$ moves the fixed direction norm from the
spikes toward the bulk.
For a matrix with 95\% bulk directions, the applied bulk
gains at $\rho=2$ and $4$ are approximately $1.019$ and
$1.024$, whereas the spike gains are $0.510$ and $0.256$.
In the pre-update score, the loss of spike alignment
outweighs the additional bulk alignment, which explains the
preference for smaller $\rho$.
Relative bulk emphasis and relative spike suppression
therefore describe the same norm-matched intervention.
The next subsection evaluates its effect directly on
post-update validation loss.

\subsection{Single-Update Loss Measurements}
\label{app:post-update-loss}

\paragraph{Question and controlled comparison.}
\emph{Can a finite increase in the bulk-to-spike gain lower
validation loss after one update, even when a pre-update
linearized prediction favors less bulk weight?}
We use the Pythia-70M \muon{} trajectories of
Appendix~\ref{app:band-content} on \FineMath{},
\CodeParrot{}, and \WikiAr{}, immediately before updates
763 and 1526, after 24,969,216 and 49,971,200 training
tokens, respectively.
The protocol uses batches of 32 sequences of length 1024,
$\beta=0.95$, $\gamma=0.95$, $\kappa_{\mathrm{edge}}=1$,
global clipping at norm 1, and peak learning rate
$5\times10^{-4}$ for both the matrix and auxiliary optimizers.
Validation uses 128 fixed held-out windows.

At each checkpoint, the branches share the current
Nesterov singular frames and the noise-edge partitions of
Appendix~\ref{app:band-content}.
We vary one common gain across the 24 hidden weight
matrices, using the exact-SVD target and separate
Frobenius norm matching for each matrix.
The tested gains are $\rho\in\{0.5,0.75,1,1.25,1.5\}$,
with two duplicate $\rho=1$ branches.
The comparator is the exact flat update $Q_t$ at the same
input, computed from the same SVD.
All branches compute their current update from the same
exact SVD, so they differ only in $\rho$.
We report $K=0$: validation immediately after the current
update, without additional training steps.
The duplicate $\rho=1$ branches have identical recorded
losses.

\paragraph{Direct improvement and allocation preference.}
Let $F_{t,K}(\rho)$ denote the expected evaluation loss,
given the current state, under a current common gain $\rho$
and $K$ additional steps following a fixed continuation
policy.
Write $\widehat F_{t,0}(\rho)$ for the measured post-update
validation loss on the fixed windows.
For $\epsilon\in\{0.25,0.5\}$, define
\begin{equation}
\widehat B_{\epsilon}
=\widehat F_{t,0}(1)-\widehat F_{t,0}(1+\epsilon),
\qquad
\widehat S_{\epsilon}
=\frac{\widehat F_{t,0}(1-\epsilon)
       -\widehat F_{t,0}(1+\epsilon)}{2\epsilon}.
\label{eq:post-update-measurements}
\end{equation}
The loss reduction $\widehat B_\epsilon$ directly measures
improvement over the flat update $\rho=1$: positive values
mean that the bulk-emphasizing update $\rho=1+\epsilon$
gives lower validation loss.
The raw central finite difference $\widehat S_\epsilon$
instead compares the bulk-emphasizing and bulk-suppressing
branches, $\rho=1+\epsilon$ and $\rho=1-\epsilon$.
A positive value indicates lower loss for the
bulk-emphasizing branch.
Together, these statistics give the benefit over the flat
update and the preference between opposing reweightings.
Table~\ref{tab:post-update-loss} reports both quantities
for all six corpus--checkpoint settings.

\begin{table}[t]
\centering
\small
\setlength{\tabcolsep}{4pt}
\begin{tabular}{@{}lrrrrr@{}}
\toprule
& & \multicolumn{2}{c}{Loss reduction $\widehat B_{\epsilon}$}
  & \multicolumn{2}{c}{Slope $\widehat S_{\epsilon}$} \\
& & \multicolumn{2}{c}{($10^{-4}$)}
  & \multicolumn{2}{c}{($10^{-3}$)} \\
\cmidrule(lr){3-4}\cmidrule(l){5-6}
Corpus & Tokens (M) & $\epsilon=0.25$ & $\epsilon=0.5$
& $\epsilon=0.25$ & $\epsilon=0.5$ \\
\midrule
\FineMath{}   & 25.0 & +2.524 & +3.725 & +1.709 & +2.774 \\
\CodeParrot{} & 25.0 & +1.842 & +2.574 & +1.350 & +2.320 \\
\WikiAr{}     & 25.0 & +6.593 & +10.226 & +4.082 & +5.944 \\
\midrule
\FineMath{}   & 50.0 & -0.604 & -1.191 & -0.177 & +0.032 \\
\CodeParrot{} & 50.0 & +0.844 & +1.262 & +0.554 & +0.866 \\
\WikiAr{}     & 50.0 & +1.530 & +2.318 & +0.978 & +1.471 \\
\bottomrule
\end{tabular}
\caption{\textbf{Single-update validation-loss improvement
($K=0$).}
Both $\rho=1.25$ and $\rho=1.5$ lower post-update validation
loss relative to the exact flat update $\rho=1$ in five of
six tested states.
Each row is a Pythia-70M corpus--checkpoint state with
a common gain across 24 hidden matrices and separate
per-matrix Frobenius norm matching.
$\widehat B_\epsilon$ measures the loss reduction relative
to $\rho=1$, whereas $\widehat S_\epsilon$ compares the
bulk-emphasizing and bulk-suppressing branches through
\eqref{eq:post-update-measurements}.
Loss reductions and raw central finite-difference slopes
are reported in units of $10^{-4}$ and $10^{-3}$,
respectively.
All six states are shown; token labels are rounded,
and the duplicate $\rho=1$ branches have identical
recorded losses.}
\label{tab:post-update-loss}
\end{table}

\paragraph{Bulk emphasis improves the post-update model.}
Both $\rho=1.25$ and $\rho=1.5$ lower validation loss
relative to $\rho=1$ in five of six corpus--checkpoint
settings: all three corpora at 25M tokens, and
\CodeParrot{} and \WikiAr{} at 50M tokens.
The positive loss reductions $\widehat B_\epsilon$ measure
these improvements directly.
\FineMath{} at 50M is the exception, where the flat update
has lower loss than both bulk-emphasizing updates.

The improvements occur despite paired pre-update
linearized predictions favoring less bulk weight at
both perturbation widths in every tested state.
These predictions are recorded for the intervention
branches and differ from the pooled score $J(\rho)$ in
Appendix~\ref{app:pre-update-alignment}.
Thus, finite, norm-matched spectral reweighting can improve
the post-update model even when the paired pre-update
approximation favors the opposite allocation.
The benefit is already visible at $K=0$, without
additional continuation steps.

\paragraph{Connection to Theorem~\ref{thm:continuation-decomposition}.}
The theorem varies the gains of one matrix, whereas this
experiment changes a common $\rho$ across matrices.
For matrix $\ell$, let $r_{t,\ell}$ be its spike rank,
$k_{t,\ell}=d_\ell-r_{t,\ell}$, and
$\Delta_{\ell,t,K}$ its bulk-minus-spike mean continuation
utility when every matrix uses the flat profile.
Each matrix's utilities are partial derivatives with the
other matrices' current gains held at one.
Define $\omega_{t,\ell}=r_{t,\ell}k_{t,\ell}/d_\ell$,
the matrix-specific version of the prefactor $rk/d$ in
the theorem.
Under the corresponding joint differentiability assumption,
the chain rule gives
\begin{equation}
S_{t,K}:=-F_{t,K}'(1)
=\sum_{\ell:\,0<r_{t,\ell}<d_\ell}
\omega_{t,\ell}\Delta_{\ell,t,K}.
\label{eq:joint-continuation-slope}
\end{equation}
Indeed, the derivative along the common-gain curve is
the sum of its matrix-wise partial derivatives, and
Theorem~\ref{thm:continuation-decomposition} gives
$\omega_{t,\ell}\Delta_{\ell,t,K}$ for each nonempty
partition.
Empty-band matrices have constant normalized targets and
contribute zero.
The identity applies to the joint loss, including
interactions across matrices.

When $\sum_\ell\omega_{t,\ell}>0$, the exact ratio
$S_{t,K}/\sum_\ell\omega_{t,\ell}$ is a weighted mean
of the matrix-wise continuation-utility gaps.
Its finite-difference counterpart is
$\widehat S_\epsilon/\sum_\ell\omega_{t,\ell}$.
Table~\ref{tab:post-update-loss} reports the raw
central finite differences $\widehat S_\epsilon$ in units
of $10^{-3}$.
The theorem characterizes the derivative at $\rho=1$;
$\widehat S_\epsilon$ is its finite-difference counterpart,
and $\widehat B_\epsilon$ directly tests improvement at the
chosen gains.
These diagnostics measure the joint response to a shared
gain.
They do not measure individual utilities or
$R_{\mathrm{bt}}$.

\paragraph{Continuation utility evaluates the updated model.}
Even at $K=0$, the evaluation point has changed.
For an additive intervention with fixed effective step
multiplier $\eta_{t,\ell}$, let $\theta_1^+$ be the complete
parameter state after the flat update, and let
$\mathcal E$ be the evaluation objective defining the loss.
Differentiation with respect to an applied singular-direction
gain gives
\begin{equation}
u_{\ell,i,t,0}
=\eta_{t,\ell}
\left\langle
\nabla_{W_\ell}\mathcal E(\theta_1^+),
u_{t,\ell,i}v_{t,\ell,i}^{\top}
\right\rangle_F.
\label{eq:post-update-utility-identity}
\end{equation}
The singular vectors remain those of the fixed
pre-intervention Nesterov input, but the evaluation
gradient is taken \emph{after} the flat update.
The chain rule uses
\[
\frac{\partial W_\ell^+}{\partial w_{\ell,i}}
=-\eta_{t,\ell}u_{t,\ell,i}v_{t,\ell,i}^{\top},
\]
and continuation utility is the negative loss derivative.
The multiplier $\eta_{t,\ell}$ includes the learning rate
and fixed shape scaling.
Thus, ``first order'' describes changes in the applied
gains around the flat update, with the actual
parameter-update step size held fixed.
This identifies the evaluation point relevant to local
allocation preferences and explains the difference from
pre-update reference alignment.

\paragraph{What the experiment establishes.}
\emph{The experiment provides direct evidence that finite,
norm-matched two-band reweighting can improve the model
produced by a single update, already at $K=0$}.
The measurements cover $\rho=1.25$ and $1.5$ at checkpoints
near 25M and 50M tokens; Appendix~\ref{app:band-content}
also measures band content at initialization and 99.9M.
Section~\ref{sec:experiments} separately evaluates the
full training rule with the shared default $\rho=4$.
Because norm matching couples relative bulk emphasis with
spike suppression, the measured benefit concerns their
combined reweighting.

\section{Proofs for Sections~\ref{sec:method} and~\ref{sec:theory}}
\label{app:theory-proofs}

This appendix proves Theorems~\ref{thm:continuation-decomposition}
and~\ref{thm:exact-decomposition}, and states and proves
Corollary~\ref{cor:exact-normalization}.
It also proves the half-batch and Nesterov noise identities used in
Section~\ref{sec:split-edge} (Proposition~\ref{prop:edge-calibration}),
and treats the Gaussian model behind the noise edge separately.
Appendix~\ref{app:numerical-norm-matching} derives numerical norm
matching.

\subsection{Proof of Theorem~\ref{thm:continuation-decomposition}}
\label{app:proof-continuation-decomposition}
\paragraph{Theorem~\ref{thm:continuation-decomposition} (restated).}
Fix the current training state, singular frame, nonempty two-band
partition, and continuation policy as in
Section~\ref{sec:theory-continuation}.
The intervention varies the applied gains of one matrix's current
update; the current updates of all other parameters and the rest of
the current step stay fixed.
Assume $\mathcal L^\pi_{t,K}$ is differentiable at $\mathbf 1_d$.
Along the two-level family,
\begin{equation}
-\left.\frac{d}{d\rho}
\mathcal L^\pi_{t,K}(w(\rho))\right|_{\rho=1}
=
\frac{rk}{d}\Delta.
\label{eq:two-band-continuation-derivative}
\end{equation}
Increasing $\rho$ from $1$ gives a strictly positive first-order
improvement if and only if $\Delta>0$.
If $\Delta=0$, the first-order test is inconclusive.

Small gain changes at the flat profile lie in the tangent space
$\mathcal T=\{\delta\in\mathbb R^d:
\mathbf 1_d^\top\delta=0\}$.
If every gain may change separately, the best first-order
improvement rate is
\begin{equation}
\max_{\substack{\delta\in\mathcal T\\\|\delta\|_2=1}}
u^\top\delta
=
\|u_\perp\|_2.
\label{eq:maximal-spectral-improvement}
\end{equation}
Thus the flat profile is first-order stationary in this
fixed-frame, norm-matched class if and only if $u_1=\cdots=u_d$.

If the change must be constant within each band, the best rate is
$\sqrt{rk/d}\,|\Delta|$.
For $u_\perp\neq0$, two bands therefore reach the fraction
\begin{equation}
R_{\mathrm{bt}}
=
\frac{\sqrt{rk/d}\,|\Delta|}{\|u_\perp\|_2}
\in[0,1]
\label{eq:between-band-capture-ratio}
\end{equation}
of the best rate.
Both rates are per unit Euclidean change in $w$, and
$R_{\mathrm{bt}}$ allows $\rho$ to move in either direction.

\paragraph{Between-band and within-band components.}
Use the fixed state, frame, partition, and utilities from
Section~\ref{sec:theory-continuation}. Define
\begin{equation}
\begin{aligned}
(u_{\mathrm{bt}})_i
&=\begin{cases}
\overline u_{\mathrm{sp}}-\overline u,
& i\in\mathcal I_{\mathrm{sp}},\\
\overline u_{\mathrm{bu}}-\overline u,
& i\in\mathcal I_{\mathrm{bu}},
\end{cases}
\\
(u_{\mathrm{wi}})_i
&=\begin{cases}
u_i-\overline u_{\mathrm{sp}},
& i\in\mathcal I_{\mathrm{sp}},\\
u_i-\overline u_{\mathrm{bu}},
& i\in\mathcal I_{\mathrm{bu}}.
\end{cases}
\end{aligned}
\label{eq:between-within-definitions}
\end{equation}
The decomposition $u_\perp=u_{\mathrm{bt}}+u_{\mathrm{wi}}$
is orthogonal and satisfies
\begin{equation}
\begin{aligned}
\|u_\perp\|_2^2
&=\frac{rk}{d}\Delta^2
+\sum_{i\in\mathcal I_{\mathrm{sp}}}
(u_i-\overline u_{\mathrm{sp}})^2
+\sum_{i\in\mathcal I_{\mathrm{bu}}}
(u_i-\overline u_{\mathrm{bu}})^2.
\end{aligned}
\label{eq:utility-energy-decomposition}
\end{equation}
In particular, $\|u_{\mathrm{bt}}\|_2=\sqrt{rk/d}\,|\Delta|$, so for
$u_\perp\neq0$ the capture ratio can also be written as
$R_{\mathrm{bt}}=\|u_{\mathrm{bt}}\|_2/\|u_\perp\|_2$.
Its square is the share of squared tangent utility that lies
between the bands; the rest lies within the bands.
This share says nothing directly about the size of a finite loss
reduction.
The proof below establishes these identities together with the
theorem.

\begin{proof}
Write $\mathcal L=\mathcal L^\pi_{t,K}$ for brevity.
As specified in Section~\ref{sec:theory-continuation},
$\mathcal L$ is defined on an open neighborhood of
$\mathbf 1_d$ in the positive orthant and is differentiable
at $\mathbf 1_d$, with
$u=-\nabla\mathcal L(\mathbf 1_d)$.
All gain comparisons below lie on the norm-matched sphere.
The singular frame, current training state, partition, and
continuation policy are fixed throughout the argument.
The policy is fixed as a rule: subsequent gradients and optimizer
states may respond to the current intervention.
We proceed in four steps.

\paragraph{Step 1: first-order rates on the norm-matched sphere.}
The feasible set is
\[
\mathcal S_d=\{w\in[0,\infty)^d:\|w\|_2^2=d\}.
\]
Differentiating the norm constraint along a differentiable
feasible curve through $\mathbf 1_d$ shows that its velocity
$\delta$ satisfies $\mathbf 1_d^\top\delta=0$.
Conversely, for any unit vector $\delta$ with
$\mathbf 1_d^\top\delta=0$, define
\[
w_\delta(\varepsilon)
=
\sqrt d\,
\frac{\mathbf 1_d+\varepsilon\delta}
     {\|\mathbf 1_d+\varepsilon\delta\|_2}.
\]
Since
\[
\|\mathbf 1_d+\varepsilon\delta\|_2^2=d+\varepsilon^2,
\]
this curve lies in $\mathcal S_d$ for sufficiently small
$|\varepsilon|$, has positive coordinates there, and satisfies
\[
w_\delta(0)=\mathbf 1_d,
\qquad
w_\delta'(0)=\delta.
\]
Thus the tangent space is
$\mathcal T=\{\delta:\mathbf 1_d^\top\delta=0\}$,
and every unit tangent direction is realized by such a curve.

Differentiability gives
\[
\mathcal L(w_\delta(\varepsilon))
=
\mathcal L(\mathbf 1_d)
-\varepsilon u^\top\delta+o(|\varepsilon|).
\]
Hence $u^\top\delta$ is the first-order improvement rate
per unit gain-vector displacement. For tangent $\delta$,
\[
u^\top\delta
=
(u-\overline u\mathbf 1_d)^\top\delta
=
u_\perp^\top\delta.
\]

\paragraph{Step 2: orthogonal between-band and within-band decomposition.}
For $i\in\mathcal I_{\mathrm{sp}}$,
\[
(u_{\mathrm{bt}})_i+(u_{\mathrm{wi}})_i
=
(\overline u_{\mathrm{sp}}-\overline u)
+(u_i-\overline u_{\mathrm{sp}})
=
u_i-\overline u.
\]
The same identity holds on $\mathcal I_{\mathrm{bu}}$, giving
$u_\perp=u_{\mathrm{bt}}+u_{\mathrm{wi}}$.
Moreover,
\[
\sum_{i\in\mathcal I_{\mathrm{sp}}}
(u_i-\overline u_{\mathrm{sp}})=0,
\qquad
\sum_{i\in\mathcal I_{\mathrm{bu}}}
(u_i-\overline u_{\mathrm{bu}})=0.
\]
Therefore $u_{\mathrm{wi}}$ is orthogonal to both band
indicator vectors. Since $u_{\mathrm{bt}}$ is in their span,
\[
u_{\mathrm{bt}}\perp u_{\mathrm{wi}},
\qquad
\|u_\perp\|_2^2
=
\|u_{\mathrm{bt}}\|_2^2+\|u_{\mathrm{wi}}\|_2^2.
\]
By definition,
\[
\|u_{\mathrm{wi}}\|_2^2
=
\sum_{i\in\mathcal I_{\mathrm{sp}}}
(u_i-\overline u_{\mathrm{sp}})^2
+
\sum_{i\in\mathcal I_{\mathrm{bu}}}
(u_i-\overline u_{\mathrm{bu}})^2.
\]
Let $\Delta=\overline u_{\mathrm{bu}}-\overline u_{\mathrm{sp}}$.
Using $d=r+k$ and
$\overline u=(r\overline u_{\mathrm{sp}}+
k\overline u_{\mathrm{bu}})/d$, we obtain
\[
\overline u_{\mathrm{sp}}-\overline u=-\frac{k}{d}\Delta,
\qquad
\overline u_{\mathrm{bu}}-\overline u=\frac{r}{d}\Delta.
\]
Consequently,
\begin{align*}
\|u_{\mathrm{bt}}\|_2^2
&=
r\frac{k^2}{d^2}\Delta^2
+k\frac{r^2}{d^2}\Delta^2\\
&=
\frac{rk(r+k)}{d^2}\Delta^2
=
\frac{rk}{d}\Delta^2.
\end{align*}
Combining these identities proves~\eqref{eq:utility-energy-decomposition}.

\paragraph{Step 3: maximal first-order rate and stationarity.}
For every unit tangent direction, Cauchy--Schwarz gives
\[
u^\top\delta=u_\perp^\top\delta\leq\|u_\perp\|_2.
\]
If $u_\perp\neq0$, equality holds at
\[
\delta^\star=\frac{u_\perp}{\|u_\perp\|_2},
\]
which is a feasible unit tangent direction because
$\mathbf 1_d^\top u_\perp=0$.
If $u_\perp=0$, every tangent direction has zero rate.
It follows that
\[
\max_{\substack{\delta\in\mathcal T\\\|\delta\|_2=1}}
u^\top\delta
=
\|u_\perp\|_2.
\]
This proves~\eqref{eq:maximal-spectral-improvement}.
In particular, the flat profile is first-order stationary
with respect to the fixed-frame, norm-matched interventions
if and only if
\[
u_\perp=0
\quad\Longleftrightarrow\quad
u\in\operatorname{span}\{\mathbf 1_d\}
\quad\Longleftrightarrow\quad
u_1=\cdots=u_d.
\]

\paragraph{Step 4: the two-level direction and capture ratio.}
A band-constant tangent direction has the form
$a\mathbf 1_{\mathrm{sp}}+b\mathbf 1_{\mathrm{bu}}$
with $ra+kb=0$. Since $r,k>0$, these directions form
the line spanned by
\[
v=
\frac{r}{d}\mathbf 1_{\mathrm{bu}}
-\frac{k}{d}\mathbf 1_{\mathrm{sp}},
\qquad
\|v\|_2^2=\frac{rk}{d}>0.
\]
For the two-level curve
$w(\rho)=c_{\rho,r}
(\mathbf 1_{\mathrm{sp}}+\rho\mathbf 1_{\mathrm{bu}})$,
the exact normalizer satisfies
\[
c_{\rho,r}=\sqrt{\frac{d}{r+k\rho^2}},
\qquad
c_{1,r}=1,
\qquad
\left.\frac{d}{d\rho}c_{\rho,r}\right|_{\rho=1}
=-\frac{k}{d}.
\]
Therefore
\begin{align*}
w'(1)
&=
-\frac{k}{d}
(\mathbf 1_{\mathrm{sp}}+\mathbf 1_{\mathrm{bu}})
+\mathbf 1_{\mathrm{bu}}\\
&=
-\frac{k}{d}\mathbf 1_{\mathrm{sp}}
+\frac{r}{d}\mathbf 1_{\mathrm{bu}}
=v.
\end{align*}
The chain rule now yields
\begin{align*}
-\left.\frac{d}{d\rho}\mathcal L(w(\rho))\right|_{\rho=1}
&=u^\top v\\
&=
\frac{r}{d}\sum_{i\in\mathcal I_{\mathrm{bu}}}u_i
-\frac{k}{d}\sum_{i\in\mathcal I_{\mathrm{sp}}}u_i\\
&=
\frac{rk}{d}
(\overline u_{\mathrm{bu}}-\overline u_{\mathrm{sp}})
=
\frac{rk}{d}\Delta.
\end{align*}
This proves~\eqref{eq:two-band-continuation-derivative}.
More explicitly, as $h\to0$,
\[
\mathcal L(w(1+h))
=
\mathcal L(\mathbf 1_d)
-h\frac{rk}{d}\Delta+o(|h|).
\]
If $\Delta>0$, every sufficiently small positive $h$
reduces the loss. If $\Delta<0$, every sufficiently small
positive $h$ increases it, and a sufficiently small negative
$h$ reduces it. When $\Delta=0$, the first derivative
vanishes and does not determine whether a higher-order
improvement occurs. Thus the strict sign criterion is
an if-and-only-if statement about first-order improvement.

For the capture ratio, observe that
\[
\operatorname{proj}_{\operatorname{span}\{v\}}u
=
\frac{u^\top v}{\|v\|_2^2}v
=
\Delta v
=
u_{\mathrm{bt}}.
\]
The two unit directions in this line are
$\pm v/\|v\|_2$, and hence
\[
\max_{\substack{\delta\in\operatorname{span}\{v\}\\
                 \|\delta\|_2=1}}
u^\top\delta
=
\frac{|u^\top v|}{\|v\|_2}
=
\sqrt{\frac{rk}{d}}\,|\Delta|
=
\|u_{\mathrm{bt}}\|_2.
\]
Both signs can be realized along the two-level family:
the curves $w(1\pm\varepsilon/\|v\|_2)$ have unit
velocities $\pm v/\|v\|_2$ at $\varepsilon=0$.
Thus, when $u_\perp\neq0$, the ratio of the restricted
and unrestricted unit-tangent optima is
\[
R_{\mathrm{bt}}
=
\frac{\sqrt{rk/d}\,|\Delta|}{\|u_\perp\|_2}
=
\frac{\|u_{\mathrm{bt}}\|_2}{\|u_\perp\|_2},
\qquad
R_{\mathrm{bt}}^2
=
\frac{\|u_{\mathrm{bt}}\|_2^2}
     {\|u_{\mathrm{bt}}\|_2^2+\|u_{\mathrm{wi}}\|_2^2}
\in[0,1].
\]
This proves~\eqref{eq:between-band-capture-ratio}.
The ratio is not defined when $u_\perp=0$, since both
first-order optima then vanish.
\end{proof}

\paragraph{What finer shaping can add.}
Two bands can use $u_{\mathrm{bt}}$, the differences between band
means.
They cannot use $u_{\mathrm{wi}}$, the variation within bands.
By Step~1, every differentiable curve of norm-matched gains through
$\mathbf 1_d$, however fine-grained, has its velocity in
$\mathcal T$.
By Step~3, its first-order rate per unit velocity is therefore at
most $\|u_\perp\|_2$.
The best two-band change reaches $\|u_{\mathrm{bt}}\|_2$.
Finer shaping can thus add at most
\[
\|u_\perp\|_2-\|u_{\mathrm{bt}}\|_2
=(1-R_{\mathrm{bt}})\|u_\perp\|_2
\leq\|u_{\mathrm{wi}}\|_2
\]
to the first-order rate.
In particular, if utilities are constant within each band, finer
shaping cannot improve on two bands to first order.
The discussion after Theorem~\ref{thm:continuation-decomposition}
covers the remaining caveats.

\subsection{Proof of Theorem~\ref{thm:exact-decomposition}}
\label{app:proof-exact-decomposition}

\begin{proof}
After a common reordering of the singular directions, write
\[
U=[U_{\mathrm{sp}}\;U_{\mathrm{bu}}],
\qquad
V=[V_{\mathrm{sp}}\;V_{\mathrm{bu}}].
\]
Then
\[
Q=UV^\top
=
U_{\mathrm{sp}}V_{\mathrm{sp}}^\top
+U_{\mathrm{bu}}V_{\mathrm{bu}}^\top.
\]
The prescribed unnormalized update is
\[
\widetilde D(a,\rho)
=
U_{\mathrm{sp}}\operatorname{diag}(a)V_{\mathrm{sp}}^\top
+\rho U_{\mathrm{bu}}V_{\mathrm{bu}}^\top.
\]
Substituting
$U_{\mathrm{bu}}V_{\mathrm{bu}}^\top
=Q-U_{\mathrm{sp}}V_{\mathrm{sp}}^\top$ gives
\begin{align*}
\widetilde D(a,\rho)
&=
U_{\mathrm{sp}}\operatorname{diag}(a)V_{\mathrm{sp}}^\top
+\rho\bigl(Q-U_{\mathrm{sp}}V_{\mathrm{sp}}^\top\bigr)\\
&=
\rho Q
+U_{\mathrm{sp}}
\bigl(\operatorname{diag}(a)-\rho I_r\bigr)
V_{\mathrm{sp}}^\top\\
&=
\rho Q
+U_{\mathrm{sp}}
\operatorname{diag}(a-\rho\mathbf 1_r)
V_{\mathrm{sp}}^\top.
\end{align*}
This is~\eqref{eq:theory-polar-decomposition}.
The correction factors through an $r$-dimensional space, so
\[
\operatorname{rank}\bigl(\widetilde D(a,\rho)-\rho Q\bigr)
\leq r.
\]
For $a=\mathbf 1_r$, it follows that
\[
\widetilde D_\rho
=
\rho Q-(\rho-1)U_{\mathrm{sp}}V_{\mathrm{sp}}^\top.
\]

Define
\[
P_{\mathrm{sp}}=U_{\mathrm{sp}}U_{\mathrm{sp}}^\top,
\qquad
R_{\mathrm{sp}}=V_{\mathrm{sp}}V_{\mathrm{sp}}^\top.
\]
Orthonormality within each singular-vector matrix gives
\begin{align*}
P_{\mathrm{sp}}Q
&=
U_{\mathrm{sp}}U_{\mathrm{sp}}^\top
\bigl(
U_{\mathrm{sp}}V_{\mathrm{sp}}^\top
+U_{\mathrm{bu}}V_{\mathrm{bu}}^\top
\bigr)\\
&=U_{\mathrm{sp}}V_{\mathrm{sp}}^\top,
\end{align*}
and
\begin{align*}
QR_{\mathrm{sp}}
&=
\bigl(
U_{\mathrm{sp}}V_{\mathrm{sp}}^\top
+U_{\mathrm{bu}}V_{\mathrm{bu}}^\top
\bigr)
V_{\mathrm{sp}}V_{\mathrm{sp}}^\top\\
&=U_{\mathrm{sp}}V_{\mathrm{sp}}^\top.
\end{align*}
Both identities hold for either rectangular orientation, giving
\begin{equation}
\widetilde D_\rho=
\begin{cases}
\rho Q-(\rho-1)P_{\mathrm{sp}}Q, & m\leq n,\\
\rho Q-(\rho-1)QR_{\mathrm{sp}}, & n<m.
\end{cases}
\label{eq:theory-two-level-decomposition}
\end{equation}
With time indices restored, multiplication by $c_{\rho,r_t}$
recovers the main-method identity
\eqref{eq:oracle-polar-correction}.
No approximation or random-matrix assumption is used.
\end{proof}

\subsection{Exact Frobenius Normalization}
\label{app:proof-exact-normalization}

Section~\ref{sec:theory-decomposition} uses the following
consequence of Theorem~\ref{thm:exact-decomposition}.

\begin{corollary}[Exact Frobenius normalization]
\label{cor:exact-normalization}
Under Theorem~\ref{thm:exact-decomposition},
\begin{equation}
\|\widetilde D(a,\rho)\|_F^2
=\|a\|_2^2+\rho^2(d-r).
\label{eq:exact-profile-norm}
\end{equation}
For $r+\rho^2(d-r)>0$, the normalizer $c_{\rho,r}$
from \eqref{eq:two-level-profile} gives
$\|c_{\rho,r}\widetilde D_\rho\|_F
=\|Q\|_F=\sqrt d$.
\end{corollary}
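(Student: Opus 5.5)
The plan is to start from the block form of the unnormalized update rather than from the polar-plus-low-rank expression in Theorem~\ref{thm:exact-decomposition}. Write $U=[U_{\mathrm{sp}}\;U_{\mathrm{bu}}]$ and $V=[V_{\mathrm{sp}}\;V_{\mathrm{bu}}]$ after a common reordering, as in the proof of Theorem~\ref{thm:exact-decomposition}. Then
\[
\widetilde D(a,\rho)=U\operatorname{diag}(a,\rho\mathbf 1_{d-r})V^\top
=\sum_{i\in\mathrm{sp}}a_i u_iv_i^\top+\rho\sum_{i\in\mathrm{bu}}u_iv_i^\top .
\]
This is equivalent to \eqref{eq:theory-polar-decomposition}: substitute $Q=U_{\mathrm{sp}}V_{\mathrm{sp}}^\top+U_{\mathrm{bu}}V_{\mathrm{bu}}^\top$. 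Working in this form avoids cross terms between $\rho Q$ and the low-rank correction.

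Next I will invoke the normalization identity of Appendix~\ref{app:spectral-profiles}. Because $M$ has full rank, $U$ and $V$ have $d$ orthonormal columns, so the rank-one modes $u_iv_i^\top$ are Frobenius-orthonormal, with $\langle u_iv_i^\top,u_jv_j^\top\rangle_F=\mathbf 1\{i=j\}$. The squared Frobenius norm of the sum is therefore the sum of the squared coefficients, which gives $\|a\|_2^2+\rho^2(d-r)$ and proves \eqref{eq:exact-profile-norm}. Equivalently, $\|U\operatorname{diag}(w)V^\top\|_F^2=\operatorname{tr}(V\operatorname{diag}(w)^2V^\top)=\|w\|_2^2$, using $U^\top U=I_d$ and $V^\top V=I_d$.

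For the second claim, specialize to $a=\mathbf 1_r$, which gives $\|\widetilde D_\rho\|_F^2=r+\rho^2(d-r)$. This quantity is positive by hypothesis, so $c_{\rho,r}=\sqrt{d/(r+\rho^2(d-r))}$ from \eqref{eq:two-level-profile} is well defined. Scaling by it gives $\|c_{\rho,r}\widetilde D_\rho\|_F^2=d$. Finally, $\|Q\|_F^2=\operatorname{tr}(VU^\top UV^\top)=\operatorname{tr}(V^\top V)=d$, which is the case $a=\mathbf 1_r$, $\rho=1$ of the same identity.

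There is no real obstacle here. The only point that needs care is that full rank ensures exactly $d$ orthonormal singular pairs, which makes the mode sum orthonormal and gives $\|Q\|_F=\sqrt d$. The positivity condition $r+\rho^2(d-r)>0$ excludes only the degenerate case $r=0$, $\rho=0$, where the profile is zero, consistent with Appendix~\ref{app:rank-conventions}.
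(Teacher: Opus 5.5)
Your proposal is correct and follows essentially the same route as the paper. Both arguments expand $\widetilde D(a,\rho)$ in the Frobenius-orthonormal rank-one modes $u_iv_i^\top$, read off $\|\widetilde D(a,\rho)\|_F^2$ as the sum of squared gains, and then specialize to $a=\mathbf 1_r$ and compare with $\|Q\|_F^2=d$; your trace computation is only an equivalent restatement of that orthonormality.
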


\begin{proof}
The rank-one singular modes are Frobenius-orthonormal:
\begin{align*}
\left\langle u_iv_i^\top,u_jv_j^\top\right\rangle_F
&=
\operatorname{tr}
\bigl(v_i u_i^\top u_j v_j^\top\bigr)\\
&=
(u_i^\top u_j)(v_i^\top v_j)
=
\mathbf 1\{i=j\}.
\end{align*}
Consequently, the squared Frobenius norm of a spectral
profile is the sum of its squared gains. In particular,
\[
\|\widetilde D(a,\rho)\|_F^2
=
\sum_{i=1}^r a_i^2+\rho^2(d-r)
=
\|a\|_2^2+\rho^2(d-r),
\]
which proves~\eqref{eq:exact-profile-norm}.
For $a=\mathbf 1_r$,
\[
\|\widetilde D_\rho\|_F^2=r+\rho^2(d-r),
\qquad
\|Q\|_F^2
=
\left\|\sum_{i=1}^d u_iv_i^\top\right\|_F^2=d.
\]
Whenever $r+\rho^2(d-r)>0$, the stated normalizer is
well-defined and satisfies
\[
\|c_{\rho,r}\widetilde D_\rho\|_F^2
=
\frac{d}{r+\rho^2(d-r)}
\bigl[r+\rho^2(d-r)\bigr]
=d
=\|Q\|_F^2.
\]
Taking nonnegative square roots proves the corollary.
If $r+\rho^2(d-r)=0$, the raw two-level direction is zero
and cannot be rescaled to norm $\sqrt d$; this case is
excluded by the corollary's hypothesis.
\end{proof}

\subsection{Online Calibration of the Noise Edge}
\label{app:theory-edge-calibration}

\textsc{BB-Adaptive} estimates the noise scale online from two
minibatch halves, without a large-batch reference gradient.
The proposition below proves the exact half-batch and Nesterov
covariance identities.
The Gaussian model behind the noise edge is treated separately after
the proof.

\begin{proposition}[Half-batch calibration of \muon{}'s spectral input]
\label{prop:edge-calibration}
Let $X_1,\ldots,X_b\in\mathbb R^{m\times n}$ be
independent per-example gradients with common mean
$\overline G$ and
\[
\operatorname{Cov}(\operatorname{vec}X_i)
=
s_{\mathrm{ex}}^2I_{mn},
\]
where $b$ is a positive even integer.
Let $G^{(1)},G^{(2)}$ be the two half-minibatch means,
and define
\[
G=\frac{G^{(1)}+G^{(2)}}{2},
\qquad
Z^{\mathrm{split}}
=
\frac{G^{(1)}-G^{(2)}}{2}.
\]
Then
\begin{equation}
\operatorname{Cov}(\operatorname{vec}(G-\overline G))
=
\operatorname{Cov}(\operatorname{vec}Z^{\mathrm{split}})
=
s_b^2I_{mn},
\qquad
s_b^2=\frac{s_{\mathrm{ex}}^2}{b},
\label{eq:split-noise-covariance}
\end{equation}
and the variance estimator satisfies
\begin{equation}
\widehat s_b^{\,2}
=
\frac{\|Z^{\mathrm{split}}\|_F^2}{mn},
\qquad
\mathbb E[\widehat s_b^{\,2}]=s_b^2.
\label{eq:split-variance-unbiased}
\end{equation}

Across training steps, suppose
$G_t=\overline G_t+\varepsilon_t$,
where the noises $\varepsilon_t$ are centered,
independent across time, and have the same covariance
$s_b^2I_{mn}$ after vectorization.
For $\beta\in[0,1)$ and $B_0=0$, define
\begin{equation}
B_t=\beta B_{t-1}+(1-\beta)G_t,
\qquad
M_t=(1-\beta)G_t+\beta B_t.
\label{eq:theory-nesterov-input}
\end{equation}
The component obtained by propagating the sampling noises
through these recursions is
\[
N_t^M
=
(1-\beta^2)\varepsilon_t
+
\beta^2(1-\beta)
\sum_{j=1}^{t-1}
\beta^{t-1-j}\varepsilon_j,
\]
where the sum is empty for $t=1$.
For every $t\geq1$,
\begin{equation}
\operatorname{Cov}(\operatorname{vec}N_t^M)
=
\nu_t^2(\beta)s_b^2I_{mn},
\label{eq:nesterov-noise-covariance}
\end{equation}
with
\begin{equation}
\nu_t^2(\beta)
=
(1-\beta^2)^2
+
\beta^4\frac{1-\beta}{1+\beta}
\bigl(1-\beta^{2(t-1)}\bigr).
\label{eq:theory-nesterov-variance-factor}
\end{equation}
Here $\nu_t(\beta)$ denotes the nonnegative square root
of this variance multiplier.
\end{proposition}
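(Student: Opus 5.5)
The proof splits into two independent computations: the half-batch identities, then the propagation of noise through the Nesterov recursion. For the first part, I will write each half mean as $G^{(h)}=\overline G+\xi^{(h)}$, where $\xi^{(h)}$ is the average of $b/2$ centered per-example deviations. Independence across examples gives $\operatorname{Cov}(\operatorname{vec}\xi^{(h)})=(2s_{\mathrm{ex}}^2/b)I_{mn}$, and $\xi^{(1)},\xi^{(2)}$ are independent because the halves use disjoint examples. Then $G-\overline G=(\xi^{(1)}+\xi^{(2)})/2$ and $Z^{\mathrm{split}}=(\xi^{(1)}-\xi^{(2)})/2$. Each has covariance $\tfrac14(2s_{\mathrm{ex}}^2/b+2s_{\mathrm{ex}}^2/b)I_{mn}=s_b^2I_{mn}$, since the cross-covariance terms vanish; this holds for both signs. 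The mean $\overline G$ cancels in $Z^{\mathrm{split}}$. Unbiasedness of $\widehat s_b^{\,2}$ follows from $\mathbb E\|Z^{\mathrm{split}}\|_F^2=\operatorname{tr}\operatorname{Cov}(\operatorname{vec}Z^{\mathrm{split}})=mn\,s_b^2$, which uses only zero mean and no Gaussianity.

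For the Nesterov part, I will unroll the recursion. Since $B_0=0$, induction gives $B_t=(1-\beta)\sum_{j=1}^{t}\beta^{t-j}G_j$. Substituting into $M_t=(1-\beta)G_t+\beta B_t$ gives the coefficient of $G_t$ as $(1-\beta)+\beta(1-\beta)=1-\beta^2$. For $j<t$ the coefficient of $G_j$ is $\beta(1-\beta)\beta^{t-j}=\beta^2(1-\beta)\beta^{t-1-j}$. This recursion is linear, so applying it to $\varepsilon_j$ alone yields exactly the stated $N_t^M$, and the deterministic means separate out. Because the $\varepsilon_j$ are independent and centered with common covariance $s_b^2I_{mn}$, the covariance of $N_t^M$ is $s_b^2I_{mn}$ times the sum of squared coefficients:
\[
(1-\beta^2)^2+\beta^4(1-\beta)^2\sum_{i=0}^{t-2}\beta^{2i}
=(1-\beta^2)^2+\beta^4(1-\beta)^2\frac{1-\beta^{2(t-1)}}{1-\beta^2}.
\]
The identity $(1-\beta)^2/(1-\beta^2)=(1-\beta)/(1+\beta)$ then gives \eqref{eq:theory-nesterov-variance-factor}. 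For $t=1$ the sum is empty and the second term vanishes, consistent with $1-\beta^{0}=0$. The case $\beta=0$ is fine because $1-\beta^2>0$ for all $\beta\in[0,1)$.

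The last step is to reconcile this with \eqref{eq:nesterov-noise-factor} in the main text by factoring out $(1-\beta)^2$, using $(1-\beta^2)^2=(1-\beta)^2(1+\beta)^2$. Nothing here is deep; the step most prone to error is bookkeeping the index shift in the geometric sum, namely that the current gradient appears both directly and through $B_t$, and I will check it explicitly at $t=1,2$.
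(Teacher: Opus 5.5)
Your proposal is correct and follows the paper's proof essentially step for step. It uses the same centered half-means with covariance $(2s_{\mathrm{ex}}^2/b)I_{mn}$ and vanishing cross-covariance, the same trace argument for unbiasedness, and the same coefficients $1-\beta^2$ for $\varepsilon_t$ and $\beta^2(1-\beta)\beta^{t-1-j}$ for the past noises. The only cosmetic difference is that the paper first writes $N_t^M=(1-\beta^2)\varepsilon_t+\beta^2N_{t-1}^B$, with the past noises grouped in the buffer term, and then uses independence of $\varepsilon_t$ and $N_{t-1}^B$; you instead sum the squared coefficients of the fully unrolled expression directly.
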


\paragraph{Interpretation.}
The half-batch difference removes the unknown mean gradient.
Its covariance equals that of the full-minibatch mean; each half
alone has twice this variance.
The factor $\nu_t^2(\beta)$ accounts for the current gradient
entering the Nesterov input both directly and through the updated
buffer.
For $\beta>0$, the EMA-only multiplier
$\frac{1-\beta}{1+\beta}\left(1-\beta^{2t}\right)$
would therefore understate the noise in $M_t$.
The implementation in Section~\ref{sec:split-edge} uses
$G_t=\chi_tG_t^{\mathrm{raw}}$ in the momentum recursion, keeps
$\overline s_t^{\,2}$ as an EMA of $\chi_t^2\widehat s_t^{\,2}$,
substitutes $\sqrt{\overline s_t^{\,2}}$ for $s_b$, and applies
$\kappa_{\mathrm{edge}}$, which gives
\eqref{eq:estimated-noise-edge}.
Appendix~\ref{app:method-calibration} lists the approximations
this involves.
Neither the covariance identities nor the noise edge show that the
resulting split is the best one for optimization.

\begin{proof}
We first establish the half-batch identities and then
propagate the noise through the momentum recursion.

\paragraph{Step 1: half-batch covariance and variance estimation.}
The half-minibatch means are
\[
G^{(1)}=\frac{2}{b}\sum_{i=1}^{b/2}X_i,
\qquad
G^{(2)}=\frac{2}{b}\sum_{i=b/2+1}^{b}X_i.
\]
By the proposition's assumptions, they are independent,
have common mean $\overline G$, and satisfy
\[
\operatorname{Cov}(\operatorname{vec}G^{(1)})
=
\operatorname{Cov}(\operatorname{vec}G^{(2)})
=
\frac{2s_{\mathrm{ex}}^2}{b}I_{mn}.
\]
Let
\[
g_1=\operatorname{vec}(G^{(1)}-\overline G),
\qquad
g_2=\operatorname{vec}(G^{(2)}-\overline G).
\]
Then
\[
\operatorname{vec}(G-\overline G)=\frac{g_1+g_2}{2},
\qquad
\operatorname{vec}Z^{\mathrm{split}}=\frac{g_1-g_2}{2}.
\]
The cross-covariances vanish by independence, so
\begin{align*}
\operatorname{Cov}(\operatorname{vec}(G-\overline G))
&=
\frac14\bigl(\operatorname{Cov}(g_1)
             +\operatorname{Cov}(g_2)\bigr)\\
&=
\frac{s_{\mathrm{ex}}^2}{b}I_{mn}
=s_b^2I_{mn},
\end{align*}
and the identical calculation for the difference gives
\[
\operatorname{Cov}(\operatorname{vec}Z^{\mathrm{split}})
=s_b^2I_{mn}.
\]
This proves \eqref{eq:split-noise-covariance}.
Since $Z^{\mathrm{split}}$ has zero mean,
\begin{align*}
\mathbb E[\widehat s_b^{\,2}]
&=
\frac{1}{mn}\mathbb E\|Z^{\mathrm{split}}\|_F^2\\
&=
\frac{1}{mn}
\operatorname{tr}
\bigl(\operatorname{Cov}(\operatorname{vec}Z^{\mathrm{split}})\bigr)\\
&=
\frac{1}{mn}\operatorname{tr}(s_b^2I_{mn})
=s_b^2.
\end{align*}
This proves \eqref{eq:split-variance-unbiased}.

For completeness, the average and difference also satisfy
\[
\operatorname{Cov}
\bigl(\operatorname{vec}(G-\overline G),
      \operatorname{vec}Z^{\mathrm{split}}\bigr)
=
\frac14\bigl(\operatorname{Cov}(g_1)
             -\operatorname{Cov}(g_2)\bigr)
=0.
\]
If $(g_1,g_2)$ is jointly Gaussian, the average and difference
are therefore independent. Without Gaussianity, zero
cross-covariance does not by itself imply independence;
that independence is not needed for the variance identity.

\paragraph{Step 2: propagation through the Nesterov spectral input.}
Write $G_t=\overline G_t+\varepsilon_t$ and consider the
component obtained by linearly propagating the centered
sampling noises through the recursions in
\eqref{eq:theory-nesterov-input}.
The EMA noise component satisfies
\[
N_t^B=\beta N_{t-1}^B+(1-\beta)\varepsilon_t,
\qquad N_0^B=0.
\]
Unrolling the recursion gives, for $t\geq1$,
\[
N_{t-1}^B
=
(1-\beta)\sum_{j=1}^{t-1}
\beta^{t-1-j}\varepsilon_j,
\]
with an empty sum when $t=1$.
Because the noises are independent across time and have
the same covariance $s_b^2I_{mn}$,
\begin{align*}
\operatorname{Cov}(\operatorname{vec}N_{t-1}^B)
&=
(1-\beta)^2
\sum_{j=1}^{t-1}\beta^{2(t-1-j)}s_b^2I_{mn}\\
&=
\frac{1-\beta}{1+\beta}
\bigl(1-\beta^{2(t-1)}\bigr)s_b^2I_{mn}.
\end{align*}
The spectral-input noise is
\begin{align*}
N_t^M
&=(1-\beta)\varepsilon_t+\beta N_t^B\\
&=(1-\beta)\varepsilon_t
  +\beta\bigl(\beta N_{t-1}^B+(1-\beta)\varepsilon_t\bigr)\\
&=(1-\beta^2)\varepsilon_t+\beta^2N_{t-1}^B\\
&=(1-\beta^2)\varepsilon_t
  +\beta^2(1-\beta)\sum_{j=1}^{t-1}
    \beta^{t-1-j}\varepsilon_j.
\end{align*}
The pair $(\varepsilon_t,N_{t-1}^B)$ is independent, so
\begin{align*}
\operatorname{Cov}(\operatorname{vec}N_t^M)
&=
(1-\beta^2)^2
\operatorname{Cov}(\operatorname{vec}\varepsilon_t)
+
\beta^4\operatorname{Cov}(\operatorname{vec}N_{t-1}^B)\\
&=
\left[
(1-\beta^2)^2
+
\beta^4\frac{1-\beta}{1+\beta}
\bigl(1-\beta^{2(t-1)}\bigr)
\right]s_b^2I_{mn}\\
&=\nu_t^2(\beta)s_b^2I_{mn}.
\end{align*}
This proves~\eqref{eq:nesterov-noise-covariance}
and~\eqref{eq:theory-nesterov-variance-factor}.
The current gradient appears twice, directly and through the
buffer, so its two coefficients are combined before taking
covariances.

Finally,
\begin{align*}
\nu_t^2(\beta)
&=(1-\beta^2)^2
 +\beta^4\frac{1-\beta}{1+\beta}
  \bigl(1-\beta^{2(t-1)}\bigr)\\
&=(1-\beta)^2
\left[
(1+\beta)^2
+
\frac{\beta^4\bigl(1-\beta^{2(t-1)}\bigr)}{1-\beta^2}
\right],
\end{align*}
which is the method-section factor in
\eqref{eq:nesterov-noise-factor}.
The entrywise standard deviation of $N_t^M$ is
$s_b\nu_t(\beta)$, where $\nu_t(\beta)$ is the nonnegative
square root of the variance multiplier.
\end{proof}

\paragraph{The noise edge under the Gaussian model.}
The covariance calculation above does not need Gaussian entries,
and it does not determine a spectral edge by itself.
Assume, in addition, that the entries of each $\varepsilon_t$ are
independent centered Gaussian variables with variance $s_b^2$, and
that the noise matrices are independent across time.
Then the linear combination for $N_t^M$ has independent centered
Gaussian entries with variance $s_b^2\nu_t^2(\beta)$.

For fixed $t$ and $\beta\in[0,1)$, assume first that $s_b>0$
and set $E_{m,n}=N_t^M/(s_b\nu_t(\beta))$.
This is a standard Gaussian matrix.
Its largest singular value converges to the upper edge of the
Marchenko--Pastur law \citep{marchenko1967,geman1980limit}:
\[
\frac{\|E_{m,n}\|_2}{\sqrt m+\sqrt n}
\xrightarrow{\mathrm{a.s.}}1
\qquad
\text{as }m,n\to\infty,
\quad m/n\to\zeta\in(0,\infty).
\]
Rescaling gives
$\|N_t^M\|_2/\bigl(s_b\nu_t(\beta)(\sqrt m+\sqrt n)\bigr)
\to1$ almost surely, so the noise edge of $N_t^M$ is
\begin{equation}
\tau_t^M
=
s_b\nu_t(\beta)(\sqrt m+\sqrt n).
\label{eq:theory-mp-edge}
\end{equation}
If $s_b=0$, the centered noise vanishes almost surely and
the edge is zero.
This step needs the Gaussian model and an asymptotic result.
Isotropic covariance alone does not give it, and it is not a
finite-sample bound.
\newpage

\section{Omitted experiment results}
\label{app:experiment-result}

In this section, we give the detailed experimental results.

\subsection {Corpora adaptation.} \label{app:corpus-adaptation}

\begin{table}[t]
\centering\small
\setlength{\tabcolsep}{3.5pt}
\scalebox{0.8}{%
\begin{tabular}{@{}lrrrrrrrrrr@{}}
\toprule
 & \multicolumn{2}{c}{14M}
 & \multicolumn{2}{c}{31M}
 & \multicolumn{2}{c}{70M}
 & \multicolumn{2}{c}{160M}
 & \multicolumn{2}{c}{410M} \\
\cmidrule(lr){2-3}\cmidrule(lr){4-5}\cmidrule(lr){6-7}\cmidrule(lr){8-9}\cmidrule(lr){10-11}
Corpus & $\ell_0$ & $\Delta$ & $\ell_0$ & $\Delta$ & $\ell_0$ & $\Delta$ & $\ell_0$ & $\Delta$ & $\ell_0$ & $\Delta$ \\
\midrule
\WikiDe{} & \cellcolor{LossBlue!60}4.530 & \cellcolor{DeltaRed!60}$-1.247{\scriptstyle\,\pm\,0.001}$ & \cellcolor{LossBlue!52}4.009 & \cellcolor{DeltaRed!51}$-1.050{\scriptstyle\,\pm\,0.002}$ & \cellcolor{LossBlue!47}3.699 & \cellcolor{DeltaRed!50}$-1.019{\scriptstyle\,\pm\,0.003}$ & \cellcolor{LossBlue!39}3.171 & \cellcolor{DeltaRed!36}$-0.719{\scriptstyle\,\pm\,0.002}$ & \cellcolor{LossBlue!31}2.626 & \cellcolor{DeltaRed!24}$-0.445{\scriptstyle\,\pm\,0.001}$ \\
\WikiZh{} & \cellcolor{LossBlue!48}3.715 & \cellcolor{DeltaRed!47}$-0.970{\scriptstyle\,\pm\,0.001}$ & \cellcolor{LossBlue!44}3.465 & \cellcolor{DeltaRed!48}$-0.994{\scriptstyle\,\pm\,0.001}$ & \cellcolor{LossBlue!42}3.329 & \cellcolor{DeltaRed!54}$-1.118{\scriptstyle\,\pm\,0.005}$ & \cellcolor{LossBlue!38}3.090 & \cellcolor{DeltaRed!50}$-1.035{\scriptstyle\,\pm\,0.001}$ & \cellcolor{LossBlue!30}2.570 & \cellcolor{DeltaRed!35}$-0.691{\scriptstyle\,\pm\,0.002}$ \\
\FineMath{} & \cellcolor{LossBlue!45}3.572 & \cellcolor{DeltaRed!26}$-0.486{\scriptstyle\,\pm\,0.002}$ & \cellcolor{LossBlue!41}3.278 & \cellcolor{DeltaRed!26}$-0.502{\scriptstyle\,\pm\,0.001}$ & \cellcolor{LossBlue!38}3.088 & \cellcolor{DeltaRed!28}$-0.537{\scriptstyle\,\pm\,0.002}$ & \cellcolor{LossBlue!33}2.753 & \cellcolor{DeltaRed!25}$-0.481{\scriptstyle\,\pm\,0.003}$ & \cellcolor{LossBlue!28}2.421 & \cellcolor{DeltaRed!21}$-0.375{\scriptstyle\,\pm\,0.005}$ \\
\WikiAr{} & \cellcolor{LossBlue!36}2.936 & \cellcolor{DeltaRed!46}$-0.930{\scriptstyle\,\pm\,0.000}$ & \cellcolor{LossBlue!33}2.761 & \cellcolor{DeltaRed!44}$-0.906{\scriptstyle\,\pm\,0.001}$ & \cellcolor{LossBlue!33}2.747 & \cellcolor{DeltaRed!51}$-1.045{\scriptstyle\,\pm\,0.000}$ & \cellcolor{LossBlue!31}2.620 & \cellcolor{DeltaRed!49}$-1.012{\scriptstyle\,\pm\,0.000}$ & \cellcolor{LossBlue!24}2.150 & \cellcolor{DeltaRed!35}$-0.698{\scriptstyle\,\pm\,0.001}$ \\
\FineMathN{} & \cellcolor{LossBlue!36}2.930 & \cellcolor{DeltaRed!17}$-0.290{\scriptstyle\,\pm\,0.002}$ & \cellcolor{LossBlue!32}2.692 & \cellcolor{DeltaRed!15}$-0.261{\scriptstyle\,\pm\,0.002}$ & \cellcolor{LossBlue!30}2.566 & \cellcolor{DeltaRed!16}$-0.269{\scriptstyle\,\pm\,0.004}$ & \cellcolor{LossBlue!27}2.320 & \cellcolor{DeltaRed!15}$-0.254{\scriptstyle\,\pm\,0.002}$ & \cellcolor{LossBlue!22}2.050 & \cellcolor{DeltaRed!11}$-0.159{\scriptstyle\,\pm\,0.002}$ \\
\CodeParrot{} & \cellcolor{LossBlue!27}2.381 & \cellcolor{DeltaRed!25}$-0.479{\scriptstyle\,\pm\,0.004}$ & \cellcolor{LossBlue!22}2.040 & \cellcolor{DeltaRed!21}$-0.391{\scriptstyle\,\pm\,0.001}$ & \cellcolor{LossBlue!19}1.845 & \cellcolor{DeltaRed!20}$-0.361{\scriptstyle\,\pm\,0.003}$ & \cellcolor{LossBlue!14}1.508 & \cellcolor{DeltaRed!14}$-0.238{\scriptstyle\,\pm\,0.003}$ & \cellcolor{LossBlue!10}1.229 & \cellcolor{DeltaRed!9}$-0.118{\scriptstyle\,\pm\,0.002}$ \\
\bottomrule
\end{tabular}%
}

\caption{Corpus adaptation across five model scales.
$\ell_0$ is the held-out loss before continued pretraining.
$\Delta=\ell_{\mathrm{final}}-\ell_0$ is the change
after about 100M tokens with \muon{}, reported as mean $\pm$ standard
deviation over three seeds.
Negative values indicate improvement.
$\ell_0$ has no spread, because all seeds start from the same
checkpoint and use the same held-out windows.
Rows are ordered by decreasing $\ell_0$, which gives the same order at
every scale. Corpora are grouped into mathematics, code, and
multilingual Wikipedia.
}
\label{tab:datasets}
\end{table}

\paragraph{Models and datasets.} We conduct continued pretraining by adapting \emph{converged}, publicly available Pythia checkpoints
 on new corpora with the cross-entropy loss.
We use five scales, Pythia-14M, 31M, 70M, 160M, and 410M, with
hidden widths $\dmodel=128$, 256, 512, 768, and 1024, respectively.
For the matrix methods, \adamw{} updates the embeddings and the
language-model head.
Table~\ref{tab:models} in Appendix~\ref{app:experiment-result} lists
the widths, batch settings, and training budgets.
Pythia was pretrained on the Pile~\citep{Piledata}, a mainly English
mixture that includes GitHub code, English Wikipedia, and arXiv.
We adapt it to six corpora in three groups.
For mathematics, \FineMath{} and \FineMathN{}~\citep{finemath} come
from Common Crawl with different math-quality thresholds.
For code, \CodeParrot{} is CodeParrot-clean~\citep{codeparrotclean},
a filtered and deduplicated Python corpus from GitHub.
For multilingual text, \WikiDe{}, \WikiZh{}, and \WikiAr{} are
German, Chinese, and Arabic Wikipedia.

\paragraph{Initial loss reflects corpus shift.}
Table~\ref{tab:datasets} reports the pre-adaptation held-out loss
$\ell_0$ and the change $\Delta = \ell_{\mathrm{final}}-\ell_0$ after 100M tokens of
continued pretraining with \muon{}.
We use $\ell_0$ as a proxy for how far each corpus is from the Pythia
pretraining distribution.
A lower $\ell_0$ means that the pretrained model already assigns
higher probability to the corpus.
A higher $\ell_0$ suggests a larger distribution shift.
The corpora keep the same order at every scale.
\CodeParrot{} has the lowest $\ell_0$, as GitHub code is already part
of the Pile.
\WikiDe{} and \WikiZh{} have the highest $\ell_0$, as they are far
from the mainly English Pile.
The two math corpora, which overlap with mathematical text in the
Pile, lie in between.
\WikiAr{} also lies in between, below \FineMath{} and above
\FineMathN{} at every scale.

Continued pretraining with \muon{} lowers the validation loss in every setting, but a larger $\ell_0$ does not always give a larger
reduction. For example, \WikiAr{} starts below \FineMath{} yet gains about twice as much at every scale. \FineMath{} has higher $\ell_0$ than \FineMathN{} and gains more at every scale. \CodeParrot{} starts with the lowest $\ell_0$ and has the smallest
reductions at 160M and 410M, with $\Delta=-0.479$ at
14M and $-0.118$ at 410M. Among the multilingual corpora, \WikiDe{} starts with the highest $\ell_0$. \WikiZh{} and \WikiAr{} keep large gains through 160M and still improve by about $0.7$ at 410M.
Thus, $\ell_0$ describes how well the pretrained model already fits a corpus, while $\Delta$ records the change achieved
within the adaptation budget.

\subsection{ Ablation studies}
\label{app:omitted-ablation}

\paragraph{Fixed fraction versus noise calibration.}
Both variants use the same norm-matched two-level rule with $\rho=4$.
They differ in how they choose the split and in how they track the
spike subspace (Appendix~\ref{app:spike-tracking}).
Table~\ref{tab:fracsweep} shows that the best fixed fraction varies
across models and corpora.
\algoadaptive{} instead sets the split for each matrix and step from
the noise edge.
On average, it keeps $4.6\%$ of $\dmodel$ directions as spikes
(5th--95th percentile: $1.3$--$7.7\%$; per-corpus means:
$2.8$--$5.8\%$).
Without any fraction search, it performs on par with the shared
\algostatic{}$_{(5\%)}$.
Their aggregate losses are close (Table~\ref{tab:tokensaved}), and
\algoadaptive{} has lower loss in 10 settings, higher loss in 6, and
equal loss in 14.
\algostatic{}$_{(5\%)}$ has the lower per-step cost
(Table~\ref{tab:cost}).
Selecting the fraction for each setting gives further gains:
\algostatic{}$^{\dagger}$ has lower loss than \algoadaptive{} in 20
settings, higher loss in 4, and equal loss in 6
(Table~\ref{tab:main}).
Noise calibration thus removes the fraction search, a shared fraction
lowers the per-step cost, and per-setting selection lowers the loss
further at an extra tuning cost.

\paragraph{Adding a third band.}
We also test whether a finer split improves on two bands.
The three-band variant of \algostatic{} splits the singular directions of each matrix into the leading 5\%, the next 15\%, and the remaining 80\%. Its leading band is the same as in \algostatic{}$_{(5\%)}$, so the
comparison isolates the effect of splitting the bulk into two parts.
Averaged over all $30$ model--corpus settings, the three-band variant ends with a final loss $0.002\pm0.001$ higher than
\algostatic{}$_{(5\%)}$. The extra band therefore does not help, and two bands remain the better choice.
This is consistent with Theorem~\ref{thm:continuation-decomposition}, under which a finer split can improve on two bands only through utility differences within the bands.

\paragraph{Single-update allocation experiment.}
On Pythia-70M \muon{} trajectories for \FineMath{}, \CodeParrot{}, and
\WikiAr{}, we compare exact-SVD updates with $\rho=1.25$ and $1.5$
against the exact flat update $\rho=1$ near 25M and 50M tokens.
Each comparison fixes the current singular frames and noise-edge
partitions, applies a common $\rho$ across 24 hidden weight matrices,
and matches each matrix's Frobenius direction norm separately.
Both bulk-emphasizing gains lower the validation loss immediately after
the update in five of six corpus--checkpoint settings, despite paired
pre-update linearized predictions that favor less bulk weight.
These $K=0$ measurements illustrate the difference between pre-update
alignment and the post-update loss response studied in
Theorem~\ref{thm:continuation-decomposition}.
Appendix~\ref{app:post-update-loss} gives the protocol, and
Table~\ref{tab:post-update-loss} reports all six settings.

\paragraph{Gain sensitivity and numerical accuracy.}
For \algoadaptive{}, we sweep $\rho\in\{2,4,6,8,10\}$ across 30
model--corpus settings using 20M tokens, one seed, and the shared
learning rates (Table~\ref{tab:rho}).
Within a setting, the loss changes by a median of $0.009$ across the
tested gains, with larger changes at 70M.
The best tested gain improves on $\rho=4$ by about $0.0045$ on average.
The smallest gain, $\rho=2$, is best in 20 settings.
We keep $\rho=4$ as the shared default in all 100M-token runs.
The sweep covers bulk-emphasizing gains, and
Appendix~\ref{app:post-update-loss} studies gains near $\rho=1$.
At $\rho=4$, the per-scale median effective gains $\rho_{\mathrm{eff}}$
lie in $[3.963,4.005]$, and the per-scale median Rayleigh--Ritz
residuals stay below $10^{-5}$ (Table~\ref{tab:rhoeff}).
These diagnostics check the realized gain and the accuracy of the
retained spike subspaces.

\begin{table}[p]
\centering\footnotesize
\setlength{\tabcolsep}{2pt}
\scalebox{0.9}{
\begin{tabular}{@{}llrrrrrr@{}}
\toprule
Model & Method & \FineMath{} & \FineMathN{} & \CodeParrot{} & \WikiDe{} & \WikiZh{} & \WikiAr{} \\
\midrule
\multirow{5}{*}{14M} & \algostatic{}$_{(1.5\%)}$ & $3.088{\scriptstyle\,\pm\,0.003}$ & $2.642{\scriptstyle\,\pm\,0.002}$ & $1.903{\scriptstyle\,\pm\,0.003}$ & $\mathbf{3.278}{\scriptstyle\,\pm\,0.001}$ & $2.741{\scriptstyle\,\pm\,0.000}$ & $1.999{\scriptstyle\,\pm\,0.001}$ \\
 & \algostatic{}$_{(2\%)}$ & $3.086{\scriptstyle\,\pm\,0.002}$ & $2.639{\scriptstyle\,\pm\,0.001}$ & $1.899{\scriptstyle\,\pm\,0.003}$ & $3.279{\scriptstyle\,\pm\,0.000}$ & $2.739{\scriptstyle\,\pm\,0.001}$ & $\mathbf{1.998}{\scriptstyle\,\pm\,0.001}$ \\
 & \algostatic{}$_{(5\%)}$ & $3.085{\scriptstyle\,\pm\,0.003}$ & $2.639{\scriptstyle\,\pm\,0.002}$ & $1.900{\scriptstyle\,\pm\,0.003}$ & $\mathbf{3.278}{\scriptstyle\,\pm\,0.001}$ & $\mathbf{2.738}{\scriptstyle\,\pm\,0.001}$ & $2.000{\scriptstyle\,\pm\,0.000}$ \\
 & \algostatic{}$_{(15\%)}$ & $\mathbf{3.083}{\scriptstyle\,\pm\,0.002}$ & $2.639{\scriptstyle\,\pm\,0.002}$ & $\mathbf{1.898}{\scriptstyle\,\pm\,0.003}$ & $3.279{\scriptstyle\,\pm\,0.001}$ & $2.742{\scriptstyle\,\pm\,0.001}$ & $2.006{\scriptstyle\,\pm\,0.001}$ \\
 & \algostatic{}$_{(20\%)}$ & $\mathbf{3.083}{\scriptstyle\,\pm\,0.002}$ & $\mathbf{2.638}{\scriptstyle\,\pm\,0.002}$ & $1.899{\scriptstyle\,\pm\,0.004}$ & $3.282{\scriptstyle\,\pm\,0.002}$ & $2.745{\scriptstyle\,\pm\,0.001}$ & $2.009{\scriptstyle\,\pm\,0.001}$ \\
\midrule
\multirow{5}{*}{31M} & \algostatic{}$_{(1.5\%)}$ & $2.776{\scriptstyle\,\pm\,0.001}$ & $2.433{\scriptstyle\,\pm\,0.002}$ & $1.650{\scriptstyle\,\pm\,0.002}$ & $2.956{\scriptstyle\,\pm\,0.001}$ & $\mathbf{2.463}{\scriptstyle\,\pm\,0.001}$ & $\mathbf{1.848}{\scriptstyle\,\pm\,0.001}$ \\
 & \algostatic{}$_{(2\%)}$ & $2.776{\scriptstyle\,\pm\,0.002}$ & $2.433{\scriptstyle\,\pm\,0.002}$ & $1.649{\scriptstyle\,\pm\,0.001}$ & $2.956{\scriptstyle\,\pm\,0.002}$ & $2.464{\scriptstyle\,\pm\,0.001}$ & $\mathbf{1.848}{\scriptstyle\,\pm\,0.001}$ \\
 & \algostatic{}$_{(5\%)}$ & $2.773{\scriptstyle\,\pm\,0.002}$ & $2.431{\scriptstyle\,\pm\,0.003}$ & $\mathbf{1.648}{\scriptstyle\,\pm\,0.002}$ & $\mathbf{2.953}{\scriptstyle\,\pm\,0.001}$ & $2.464{\scriptstyle\,\pm\,0.001}$ & $1.849{\scriptstyle\,\pm\,0.001}$ \\
 & \algostatic{}$_{(15\%)}$ & $2.773{\scriptstyle\,\pm\,0.002}$ & $\mathbf{2.428}{\scriptstyle\,\pm\,0.002}$ & $\mathbf{1.648}{\scriptstyle\,\pm\,0.002}$ & $2.956{\scriptstyle\,\pm\,0.001}$ & $2.469{\scriptstyle\,\pm\,0.001}$ & $1.858{\scriptstyle\,\pm\,0.000}$ \\
 & \algostatic{}$_{(20\%)}$ & $\mathbf{2.772}{\scriptstyle\,\pm\,0.001}$ & $2.429{\scriptstyle\,\pm\,0.002}$ & $1.650{\scriptstyle\,\pm\,0.003}$ & $2.960{\scriptstyle\,\pm\,0.001}$ & $2.472{\scriptstyle\,\pm\,0.001}$ & $1.862{\scriptstyle\,\pm\,0.001}$ \\
\midrule
\multirow{5}{*}{70M} & \algostatic{}$_{(1.5\%)}$ & $2.549{\scriptstyle\,\pm\,0.004}$ & $2.295{\scriptstyle\,\pm\,0.006}$ & $1.478{\scriptstyle\,\pm\,0.003}$ & $2.677{\scriptstyle\,\pm\,0.002}$ & $2.204{\scriptstyle\,\pm\,0.000}$ & $\mathbf{1.690}{\scriptstyle\,\pm\,0.001}$ \\
 & \algostatic{}$_{(2\%)}$ & $2.547{\scriptstyle\,\pm\,0.002}$ & $2.294{\scriptstyle\,\pm\,0.004}$ & $1.478{\scriptstyle\,\pm\,0.002}$ & $2.678{\scriptstyle\,\pm\,0.002}$ & $2.205{\scriptstyle\,\pm\,0.002}$ & $\mathbf{1.690}{\scriptstyle\,\pm\,0.000}$ \\
 & \algostatic{}$_{(5\%)}$ & $2.541{\scriptstyle\,\pm\,0.004}$ & $2.290{\scriptstyle\,\pm\,0.003}$ & $1.472{\scriptstyle\,\pm\,0.003}$ & $2.673{\scriptstyle\,\pm\,0.002}$ & $\mathbf{2.203}{\scriptstyle\,\pm\,0.003}$ & $\mathbf{1.690}{\scriptstyle\,\pm\,0.001}$ \\
 & \algostatic{}$_{(15\%)}$ & $\mathbf{2.537}{\scriptstyle\,\pm\,0.002}$ & $\mathbf{2.287}{\scriptstyle\,\pm\,0.007}$ & $\mathbf{1.469}{\scriptstyle\,\pm\,0.003}$ & $\mathbf{2.670}{\scriptstyle\,\pm\,0.002}$ & $2.204{\scriptstyle\,\pm\,0.002}$ & $1.695{\scriptstyle\,\pm\,0.001}$ \\
 & \algostatic{}$_{(20\%)}$ & $\mathbf{2.537}{\scriptstyle\,\pm\,0.004}$ & $\mathbf{2.287}{\scriptstyle\,\pm\,0.003}$ & $\mathbf{1.469}{\scriptstyle\,\pm\,0.004}$ & $2.674{\scriptstyle\,\pm\,0.001}$ & $2.205{\scriptstyle\,\pm\,0.002}$ & $1.700{\scriptstyle\,\pm\,0.001}$ \\
\midrule
\multirow{5}{*}{160M} & \algostatic{}$_{(1.5\%)}$ & $\mathbf{2.281}{\scriptstyle\,\pm\,0.003}$ & $\mathbf{2.077}{\scriptstyle\,\pm\,0.002}$ & $\mathbf{1.280}{\scriptstyle\,\pm\,0.004}$ & $2.458{\scriptstyle\,\pm\,0.002}$ & $\mathbf{2.051}{\scriptstyle\,\pm\,0.001}$ & $\mathbf{1.597}{\scriptstyle\,\pm\,0.002}$ \\
 & \algostatic{}$_{(2\%)}$ & $\mathbf{2.281}{\scriptstyle\,\pm\,0.002}$ & $\mathbf{2.077}{\scriptstyle\,\pm\,0.003}$ & $1.281{\scriptstyle\,\pm\,0.003}$ & $\mathbf{2.457}{\scriptstyle\,\pm\,0.003}$ & $\mathbf{2.051}{\scriptstyle\,\pm\,0.000}$ & $1.598{\scriptstyle\,\pm\,0.002}$ \\
 & \algostatic{}$_{(5\%)}$ & $2.283{\scriptstyle\,\pm\,0.003}$ & $\mathbf{2.077}{\scriptstyle\,\pm\,0.003}$ & $1.283{\scriptstyle\,\pm\,0.004}$ & $2.459{\scriptstyle\,\pm\,0.002}$ & $2.054{\scriptstyle\,\pm\,0.001}$ & $1.606{\scriptstyle\,\pm\,0.001}$ \\
 & \algostatic{}$_{(15\%)}$ & $2.285{\scriptstyle\,\pm\,0.002}$ & $2.079{\scriptstyle\,\pm\,0.002}$ & $1.283{\scriptstyle\,\pm\,0.004}$ & $2.466{\scriptstyle\,\pm\,0.002}$ & $2.066{\scriptstyle\,\pm\,0.000}$ & $1.625{\scriptstyle\,\pm\,0.001}$ \\
 & \algostatic{}$_{(20\%)}$ & $2.284{\scriptstyle\,\pm\,0.003}$ & $2.080{\scriptstyle\,\pm\,0.002}$ & $1.284{\scriptstyle\,\pm\,0.003}$ & $2.470{\scriptstyle\,\pm\,0.003}$ & $2.072{\scriptstyle\,\pm\,0.001}$ & $1.632{\scriptstyle\,\pm\,0.001}$ \\
\midrule
\multirow{5}{*}{410M} & \algostatic{}$_{(1.5\%)}$ & $\mathbf{2.043}{\scriptstyle\,\pm\,0.005}$ & $\mathbf{1.890}{\scriptstyle\,\pm\,0.002}$ & $\mathbf{1.111}{\scriptstyle\,\pm\,0.002}$ & $\mathbf{2.174}{\scriptstyle\,\pm\,0.001}$ & $\mathbf{1.870}{\scriptstyle\,\pm\,0.001}$ & $\mathbf{1.446}{\scriptstyle\,\pm\,0.001}$ \\
 & \algostatic{}$_{(2\%)}$ & $\mathbf{2.043}{\scriptstyle\,\pm\,0.005}$ & $1.891{\scriptstyle\,\pm\,0.001}$ & $\mathbf{1.111}{\scriptstyle\,\pm\,0.002}$ & $\mathbf{2.174}{\scriptstyle\,\pm\,0.001}$ & $1.871{\scriptstyle\,\pm\,0.001}$ & $1.447{\scriptstyle\,\pm\,0.001}$ \\
 & \algostatic{}$_{(5\%)}$ & $\mathbf{2.043}{\scriptstyle\,\pm\,0.005}$ & $\mathbf{1.890}{\scriptstyle\,\pm\,0.001}$ & $\mathbf{1.111}{\scriptstyle\,\pm\,0.002}$ & $2.176{\scriptstyle\,\pm\,0.000}$ & $1.875{\scriptstyle\,\pm\,0.001}$ & $1.453{\scriptstyle\,\pm\,0.001}$ \\
 & \algostatic{}$_{(15\%)}$ & $2.044{\scriptstyle\,\pm\,0.005}$ & $\mathbf{1.890}{\scriptstyle\,\pm\,0.002}$ & $1.112{\scriptstyle\,\pm\,0.002}$ & $2.182{\scriptstyle\,\pm\,0.001}$ & $1.888{\scriptstyle\,\pm\,0.001}$ & $1.468{\scriptstyle\,\pm\,0.001}$ \\
 & \algostatic{}$_{(20\%)}$ & $2.044{\scriptstyle\,\pm\,0.005}$ & $\mathbf{1.890}{\scriptstyle\,\pm\,0.002}$ & $1.112{\scriptstyle\,\pm\,0.001}$ & $2.185{\scriptstyle\,\pm\,0.001}$ & $1.894{\scriptstyle\,\pm\,0.001}$ & $1.473{\scriptstyle\,\pm\,0.001}$ \\
\bottomrule
\end{tabular}
}

\caption{Fixed-fraction sweep for \algostatic{}: final validation loss
after about 100M training tokens, reported as mean $\pm$ standard
deviation over three seeds.
For $p\in\{0.015,0.02,0.05,0.15,0.20\}$, the leading band contains
$r_t=\lceil pd\rceil$ singular directions, where $d=\min\{m,n\}$ and
$p$ is held fixed within each run.
All fractions use $\rho=4$ and the shared learning rate of their scale,
with the same training budget, schedule, and seeds as
Table~\ref{tab:main}.
For each model--corpus setting, the fraction with the lowest
three-seed mean gives the \algostatic{}$^{\dagger}$ entry in
Table~\ref{tab:main}.
Table~\ref{tab:tokensaved} summarizes the shared configuration
$p=5\%$, and Figure~\ref{fig:grid-wiki_zh} shows $p=1.5\%$.
\textbf{Bold} marks the lowest mean among the fractions in each
setting, including ties at the printed precision. %
}
\label{tab:fracsweep}
\end{table}

\begin{table}[!ht]
\centering\small
\setlength{\tabcolsep}{2.5pt}
\begin{tabular}{@{}llrrrrrr@{}}
\toprule
Arm & Model & \FineMath{} & \FineMathN{} & \CodeParrot{} & \WikiDe{} & \WikiZh{} & \WikiAr{} \\
\midrule
\multirow{5}{*}{\freon{}} & 14M & $8.3{\scriptstyle\,\pm\,0.7}$ & $3.5{\scriptstyle\,\pm\,2.9}$ & $7.1{\scriptstyle\,\pm\,1.6}$ & $4.9{\scriptstyle\,\pm\,2.6}$ & $4.4{\scriptstyle\,\pm\,2.6}$ & -- \\
 & 31M & $7.8{\scriptstyle\,\pm\,1.8}$ & $8.7{\scriptstyle\,\pm\,3.5}$ & $3.3{\scriptstyle\,\pm\,2.9}$ & $4.4{\scriptstyle\,\pm\,1.7}$ & $4.7{\scriptstyle\,\pm\,0.7}$ & -- \\
 & 70M & $11.1{\scriptstyle\,\pm\,1.4}$ & $10.6{\scriptstyle\,\pm\,3.2}$ & $12.5{\scriptstyle\,\pm\,0.7}$ & $9.8{\scriptstyle\,\pm\,2.6}$ & $10.9{\scriptstyle\,\pm\,3.3}$ & $10.2{\scriptstyle\,\pm\,1.5}$ \\
 & 160M & -- & -- & -- & -- & -- & -- \\
 & 410M & $9.9{\scriptstyle\,\pm\,3.1}$ & $9.6{\scriptstyle\,\pm\,3.5}$ & -- & $6.0{\scriptstyle\,\pm\,5.1}$ & -- & -- \\
\midrule
\multirow{5}{*}{\algostatic{}$_{(5\%)}$} & 14M & $4.6{\scriptstyle\,\pm\,0.4}$ & $3.4{\scriptstyle\,\pm\,2.9}$ & $6.1{\scriptstyle\,\pm\,2.1}$ & $6.3{\scriptstyle\,\pm\,2.1}$ & $10.5{\scriptstyle\,\pm\,2.4}$ & $12.3{\scriptstyle\,\pm\,2.2}$ \\
 & 31M & $5.2{\scriptstyle\,\pm\,0.6}$ & -- & $3.1{\scriptstyle\,\pm\,2.8}$ & $8.2{\scriptstyle\,\pm\,2.2}$ & $9.5{\scriptstyle\,\pm\,0.6}$ & $9.8{\scriptstyle\,\pm\,2.1}$ \\
 & 70M & $8.4{\scriptstyle\,\pm\,2.9}$ & $9.2{\scriptstyle\,\pm\,3.3}$ & $11.2{\scriptstyle\,\pm\,1.6}$ & $7.0{\scriptstyle\,\pm\,1.7}$ & $7.7{\scriptstyle\,\pm\,1.7}$ & $13.2{\scriptstyle\,\pm\,0.6}$ \\
 & 160M & -- & -- & -- & -- & $2.6{\scriptstyle\,\pm\,0.8}$ & $4.6{\scriptstyle\,\pm\,2.1}$ \\
 & 410M & $10.0{\scriptstyle\,\pm\,2.9}$ & $6.9{\scriptstyle\,\pm\,5.5}$ & -- & $10.6{\scriptstyle\,\pm\,3.1}$ & $9.3{\scriptstyle\,\pm\,1.6}$ & -- \\
\midrule
\multirow{5}{*}{\algoadaptive{}} & 14M & $5.2{\scriptstyle\,\pm\,1.8}$ & $4.4{\scriptstyle\,\pm\,1.6}^{\,2/3}$ & $6.5{\scriptstyle\,\pm\,2.6}$ & $7.3{\scriptstyle\,\pm\,2.8}$ & $11.3{\scriptstyle\,\pm\,2.6}$ & $11.5{\scriptstyle\,\pm\,2.0}$ \\
 & 31M & $4.5{\scriptstyle\,\pm\,2.2}$ & $7.5{\scriptstyle\,\pm\,1.3}^{\,2/3}$ & $4.6{\scriptstyle\,\pm\,3.9}$ & $8.7{\scriptstyle\,\pm\,1.2}$ & $10.0{\scriptstyle\,\pm\,0.2}$ & $9.6{\scriptstyle\,\pm\,2.4}$ \\
 & 70M & $5.6{\scriptstyle\,\pm\,2.9}$ & $8.0{\scriptstyle\,\pm\,4.8}$ & $10.7{\scriptstyle\,\pm\,1.4}$ & $5.8{\scriptstyle\,\pm\,1.2}$ & $6.7{\scriptstyle\,\pm\,1.6}$ & $13.1{\scriptstyle\,\pm\,0.3}$ \\
 & 160M & -- & -- & -- & -- & $3.6{\scriptstyle\,\pm\,0.8}$ & $6.8{\scriptstyle\,\pm\,1.2}$ \\
 & 410M & $10.7{\scriptstyle\,\pm\,3.0}$ & $7.2{\scriptstyle\,\pm\,4.6}$ & -- & $11.3{\scriptstyle\,\pm\,1.8}$ & $9.7{\scriptstyle\,\pm\,1.9}$ & -- \\
\midrule
\multirow{5}{*}{\algostatic{}$^{\dagger}$} & 14M & $8.1{\scriptstyle\,\pm\,0.8}$ & $4.4{\scriptstyle\,\pm\,3.9}$ & $7.8{\scriptstyle\,\pm\,2.7}$ & $6.3{\scriptstyle\,\pm\,2.1}$ & $10.5{\scriptstyle\,\pm\,2.4}$ & $13.7{\scriptstyle\,\pm\,1.6}$ \\
 & 31M & $6.3{\scriptstyle\,\pm\,0.7}$ & $10.0{\scriptstyle\,\pm\,1.7}$ & $3.1{\scriptstyle\,\pm\,2.8}$ & $8.2{\scriptstyle\,\pm\,2.2}$ & $10.7{\scriptstyle\,\pm\,0.7}$ & $12.6{\scriptstyle\,\pm\,2.8}$ \\
 & 70M & $11.1{\scriptstyle\,\pm\,3.5}$ & $11.7{\scriptstyle\,\pm\,6.2}$ & $13.1{\scriptstyle\,\pm\,1.9}$ & $8.0{\scriptstyle\,\pm\,4.5}$ & $7.7{\scriptstyle\,\pm\,1.7}$ & $13.2{\scriptstyle\,\pm\,0.6}$ \\
 & 160M & -- & -- & -- & -- & $7.5{\scriptstyle\,\pm\,1.1}$ & $15.3{\scriptstyle\,\pm\,2.4}$ \\
 & 410M & $10.1{\scriptstyle\,\pm\,2.9}$ & $8.5{\scriptstyle\,\pm\,4.5}$ & -- & $13.9{\scriptstyle\,\pm\,1.6}$ & $15.3{\scriptstyle\,\pm\,1.2}$ & $12.8{\scriptstyle\,\pm\,1.0}$ \\
\bottomrule
\end{tabular}

\caption{Token savings against \muon{}, as a percentage of the
100M-token budget, for each method and model--corpus setting.
\muon{} is the reference, so it has no block of its own.
A cell is filled only where the method's three-seed mean final loss is
below \muon{}'s in Table~\ref{tab:main}, and ``--'' marks settings
where \muon{} is at least as good.
A filled cell reports $1-D^{\star}/100\mathrm{M}$, where $D^{\star}$
is the number of tokens at which the method's curve first reaches
\muon{}'s 100M-token loss in the same seed.
Values are mean $\pm$ standard deviation over the seeds in which the
method ends below \muon{}.
A superscript $k/3$ marks a setting won in only $k$ of the three seeds.
\deva{} and \spectra{} never beat \muon{}, so they are omitted.
The last block, \algostatic{}$^{\dagger}$, uses the best of the five
fractions in each setting (Table~\ref{tab:fracsweep}).
Because this fraction is chosen on the same runs, its savings are
optimistic.
Separately, intermediate points of a cosine schedule are evaluated
before the learning rate has fully decayed.
}
\label{tab:tokensaved-detail}
\end{table}

\paragraph{Effect of the spike fraction in \algostatic.}
Table~\ref{tab:fracsweep} presents the results of tuning the fixed fraction $p$ used by \algostatic{}. The optimal value varies substantially across model scales and corpora. Thus, no single value of $p$ is optimal across all experiments, and selecting one requires an expensive tuning of parameter $p$. In contrast, \algoadaptive{} determines the rank directly from the spectrum of each matrix using the Marchenko--Pastur noise edge. It selects, on average, $4.6\%$ of $\dmodel$ (5th--95th percentile: $1.3\%$--$7.7\%$; per-corpus means: $2.8\%$--$5.8\%$) and, without tuning, closely matches the best fixed fraction across scales. The shared $p=5\%$ configuration achieves aggregate
validation loss comparable to \algoadaptive{}
(Table~\ref{tab:tokensaved}).
Table~\ref{tab:main} instead reports
\algostatic{}$^{\dagger}$, selected separately for each
model--corpus setting from the five fractions in
Table~\ref{tab:fracsweep}. \emph{The benefit of the adaptive rule is therefore not improved accuracy over a tuned fixed fraction, but achieving comparable accuracy without the tuning}.

\paragraph{Frequency of improvement and token savings.}
Table~\ref{tab:tokensaved-detail} shows which settings each method wins
and its token savings in each, complementing the aggregate comparison
in Table~\ref{tab:tokensaved}.
\algostatic{}$_{(5\%)}$, \algoadaptive{}, and \algostatic{}$^{\dagger}$
have lower three-seed mean final loss than \muon{} in 23, 24, and 25 of
the 30 settings, respectively, compared with 19 for \freon{}.
These counts match the filled cells and exclude ties.
Within each winning setting, the savings are averaged over the seeds
that also finish below \muon{}'s same-seed final loss.
A superscript marks a setting won in fewer than three seeds, and an
entry without one is won in all three seeds.
For example, on Pythia-70M \CodeParrot{}, \algostatic{}$_{(5\%)}$ and
\algoadaptive{} save $11.2\pm1.6\%$ and $10.7\pm1.4\%$ of the budget,
and \algostatic{}$^{\dagger}$ saves $13.1\pm1.9\%$.
Table~\ref{tab:cost} reports the per-step cost.

\paragraph{Learning-rate tuning and the shared-rate policy.}
We distinguish each method's best multiplier in the 20M-token sweep
from the shared multiplier used in the 100M-token runs.
Figure~\ref{fig:lrsweep} and Table~\ref{tab:lrsweep} report the sweep
on \FineMath{}, run with one seed over multiples of the reference rate
of Section~\ref{sec:experiments}.
The swept methods are \muon{}, \freon{}, \deva{}, \spectra{},
\algostatic{}, and \algoadaptive{}.
In the sweep, \algostatic{} uses $p=20\%$, and both \bulkboost{}
variants use $\rho=4$.
Each method's best multiplier at a scale is the one with the lowest
validation loss after 20M tokens.

The best multipliers agree across all six methods at 14M, 31M, and
410M, and across five of the six at 70M and 160M.
At 70M, \muon{} does best at $0.1\times$, while the other methods do
best at $0.25\times$.
At 160M, \spectra{} does best at $0.05\times$, while the other methods
do best at $0.1\times$.
The shared multipliers are therefore $0.1$, $0.1$, $0.25$, $0.1$, and
$0.05$ for Pythia-14M, 31M, 70M, 160M, and 410M, and each is the best
multiplier for most of the six methods.
All 100M-token runs use the resulting peak learning rates for every
method, including \adamw{}, on all six corpora.
In particular, the 70M \muon{} runs use $5\times10^{-4}$ and the 160M
\spectra{} runs use $10^{-4}$, instead of their own best sweep rates.
In the sweep, this raises the loss from 2.779 to 2.783 for 70M
\muon{}, and from 2.552 to 2.553 for 160M \spectra{}.
Table~\ref{tab:lrsweep} marks each method's best multiplier, and its
caption lists the shared multipliers.

\label{sec:exp:tuning}
\begin{figure}[t]
\centering
\includegraphics[width=\linewidth]{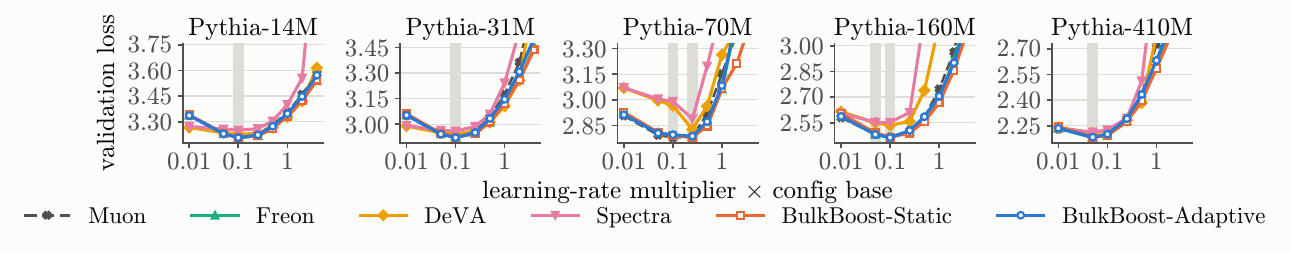}
\caption{Learning-rate sweeps on \FineMath{} with a 20M-token budget and one seed. Curves show final validation loss against multiples of each scale's reference rate. Grey bands mark each method's best multiplier. \algostatic{} uses $p=20\%$ in the sweep. Table~\ref{tab:lrsweep} lists the full results.} %

\label{fig:lrsweep}
\end{figure}

\begin{table}[t]
\centering\footnotesize
\setlength{\tabcolsep}{4pt}
\begin{tabular}{@{}llrrrrrrrrc@{}}
\toprule
Model & Method & 0.01$\times$ & 0.05$\times$ & 0.1$\times$ & 0.25$\times$ & 0.5$\times$ & 1$\times$ & 2$\times$ & 4$\times$ & best \\
\midrule
\multirow{6}{*}{14M} & \muon & 3.335 & 3.232 & \textbf{3.209} & \underline{3.227} & 3.281 & 3.354 & 3.459 & 3.581 & 0.1$\times$ \\
 & \freon & 3.338 & 3.231 & \textbf{3.207} & \underline{3.219} & 3.265 & 3.335 & 3.427 & 3.540 & 0.1$\times$ \\
 & \deva & 3.266 & 3.235 & \textbf{3.226} & \underline{3.232} & 3.267 & 3.328 & 3.421 & 3.615 & 0.1$\times$ \\
 & \spectra & 3.272 & \underline{3.257} & \textbf{3.250} & 3.259 & 3.304 & 3.400 & 3.555 & 4.458 & 0.1$\times$ \\
 & \algostatic{}$_{(20\%)}$ & 3.340 & 3.232 & \textbf{3.207} & \underline{3.217} & 3.259 & 3.332 & 3.422 & 3.543 & 0.1$\times$ \\
 & \algoadaptive{} & 3.336 & 3.230 & \textbf{3.207} & \underline{3.223} & 3.273 & 3.347 & 3.448 & 3.572 & 0.1$\times$ \\
\midrule
\multirow{6}{*}{31M} & \muon & 3.051 & \underline{2.945} & \textbf{2.922} & 2.958 & 3.042 & 3.176 & 3.363 & 3.733 & 0.1$\times$ \\
 & \freon & 3.055 & \underline{2.945} & \textbf{2.924} & 2.948 & 3.016 & 3.125 & 3.259 & 3.450 & 0.1$\times$ \\
 & \deva & 2.989 & \underline{2.950} & \textbf{2.942} & 2.963 & 3.011 & 3.106 & 3.257 & 3.747 & 0.1$\times$ \\
 & \spectra & 2.993 & \underline{2.965} & \textbf{2.960} & 2.987 & 3.060 & 3.242 & 3.547 & 4.848 & 0.1$\times$ \\
 & \algostatic{}$_{(20\%)}$ & 3.061 & 2.947 & \textbf{2.925} & \underline{2.944} & 3.012 & 3.122 & 3.261 & 3.436 & 0.1$\times$ \\
 & \algoadaptive{} & 3.053 & \underline{2.943} & \textbf{2.922} & 2.952 & 3.034 & 3.146 & 3.308 & 3.504 & 0.1$\times$ \\
\midrule
\multirow{6}{*}{70M} & \muon & 2.908 & 2.794 & \textbf{2.779} & \underline{2.783} & 2.919 & 3.152 & 3.386 & 3.986 & 0.1$\times$ \\
 & \freon & 2.918 & 2.808 & \underline{2.792} & \textbf{2.785} & 2.848 & 3.064 & 3.473 & 3.674 & 0.25$\times$ \\
 & \deva & 3.070 & 2.996 & 2.966 & \textbf{2.827} & \underline{2.963} & 3.263 & 3.415 & 4.610 & 0.25$\times$ \\
 & \spectra & 3.071 & 3.005 & \underline{2.991} & \textbf{2.890} & 3.198 & 3.547 & 5.466 & 5.936 & 0.25$\times$ \\
 & \algostatic{}$_{(20\%)}$ & 2.928 & 2.809 & \underline{2.789} & \textbf{2.782} & 2.845 & 3.065 & 3.212 & 3.467 & 0.25$\times$ \\
 & \algoadaptive{} & 2.913 & 2.805 & \underline{2.796} & \textbf{2.787} & 2.872 & 3.083 & 3.484 & 3.634 & 0.25$\times$ \\
\midrule
\multirow{6}{*}{160M} & \muon & 2.580 & \underline{2.483} & \textbf{2.466} & 2.503 & 2.579 & 2.747 & 2.964 & 3.434 & 0.1$\times$ \\
 & \freon & 2.590 & \underline{2.484} & \textbf{2.463} & 2.498 & 2.569 & 2.687 & 2.918 & 3.196 & 0.1$\times$ \\
 & \deva & 2.615 & \underline{2.550} & \textbf{2.535} & 2.558 & 2.739 & 3.130 & 3.545 & 4.937 & 0.1$\times$ \\
 & \spectra & 2.607 & \textbf{2.552} & \underline{2.553} & 2.610 & 3.182 & 3.940 & 5.445 & 6.425 & 0.05$\times$ \\
 & \algostatic{}$_{(20\%)}$ & 2.603 & \underline{2.490} & \textbf{2.466} & \underline{2.490} & 2.558 & 2.666 & 2.857 & 3.107 & 0.1$\times$ \\
 & \algoadaptive{} & 2.587 & \underline{2.481} & \textbf{2.463} & 2.505 & 2.585 & 2.706 & 2.901 & 3.180 & 0.1$\times$ \\
\midrule
\multirow{6}{*}{410M} & \muon & 2.232 & \textbf{2.182} & \underline{2.199} & 2.290 & 2.429 & 2.728 & 3.047 & 3.204 & 0.05$\times$ \\
 & \freon & 2.236 & \textbf{2.180} & \underline{2.196} & 2.278 & 2.412 & 2.598 & 2.799 & 2.989 & 0.05$\times$ \\
 & \deva & 2.239 & \textbf{2.208} & \underline{2.218} & 2.282 & 2.384 & 2.779 & 3.263 & 3.884 & 0.05$\times$ \\
 & \spectra & 2.235 & \textbf{2.212} & \underline{2.226} & 2.294 & 2.512 & 3.228 & 3.529 & 3.963 & 0.05$\times$ \\
 & \algostatic{}$_{(20\%)}$ & 2.246 & \textbf{2.182} & \underline{2.193} & 2.273 & 2.397 & 2.584 & 2.783 & 2.990 & 0.05$\times$ \\
 & \algoadaptive{} & 2.235 & \textbf{2.183} & \underline{2.200} & 2.291 & 2.433 & 2.632 & 2.849 & 3.060 & 0.05$\times$ \\
\bottomrule
\end{tabular}

\caption{Final held-out loss on \FineMath{} after 20M tokens with one
seed, at multiples of each scale's reference rate.
\textbf{Bold} marks each method's lowest loss, and \underline{underline}
marks its second lowest.
\algostatic{} uses $p=20\%$.
The 100M-token runs use one shared multiplier per scale, the best for
most methods: $0.1$, $0.1$, $0.25$, $0.1$, and $0.05$ from 14M to
410M.}%
\label{tab:lrsweep}
\end{table}

\begin{table}[p]
\centering\footnotesize
\begin{tabular}{@{}llrrrrrcr@{}}
\toprule
Model & Corpus & $\rho{=}2$ & $\rho{=}4$ & $\rho{=}6$ & $\rho{=}8$ & $\rho{=}10$ & best & range \\
\midrule
\multirow{6}{*}{14M} & \FineMath{} & \underline{3.209} & \underline{3.209} & \textbf{3.209} & \underline{3.209} & 3.210 & 6 & 0.000 \\
 & \FineMathN{} & 2.737 & 2.737 & \underline{2.736} & 2.737 & \textbf{2.735} & 10 & 0.002 \\
 & \CodeParrot{} & \underline{2.046} & \textbf{2.046} & 2.048 & 2.048 & 2.048 & 4 & 0.002 \\
 & \WikiDe{} & \textbf{3.518} & \underline{3.522} & 3.523 & 3.526 & 3.528 & 2 & 0.010 \\
 & \WikiZh{} & \textbf{2.960} & \underline{2.962} & 2.964 & 2.965 & 2.967 & 2 & 0.007 \\
 & \WikiAr{} & \textbf{2.184} & \underline{2.188} & 2.190 & 2.193 & 2.194 & 2 & 0.011 \\
\midrule
\multirow{6}{*}{31M} & \FineMath{} & \textbf{2.923} & \underline{2.925} & 2.927 & 2.927 & 2.928 & 2 & 0.005 \\
 & \FineMathN{} & \textbf{2.530} & \underline{2.534} & 2.535 & 2.536 & -- & 2 & 0.006 \\
 & \CodeParrot{} & \textbf{1.796} & \underline{1.797} & 1.798 & 1.800 & 1.802 & 2 & 0.006 \\
 & \WikiDe{} & \textbf{3.203} & \underline{3.205} & 3.208 & 3.210 & 3.212 & 2 & 0.009 \\
 & \WikiZh{} & \textbf{2.702} & \underline{2.705} & 2.709 & 2.711 & 2.712 & 2 & 0.011 \\
 & \WikiAr{} & \textbf{2.046} & \underline{2.051} & 2.053 & 2.056 & 2.058 & 2 & 0.012 \\
\midrule
\multirow{6}{*}{70M} & \FineMath{} & \textbf{2.780} & \underline{2.795} & 2.806 & 2.818 & 2.837 & 2 & 0.057 \\
 & \FineMathN{} & \underline{2.458} & \textbf{2.457} & 2.481 & 2.482 & 2.492 & 4 & 0.035 \\
 & \CodeParrot{} & \textbf{1.677} & \underline{1.691} & 1.714 & 1.712 & 1.731 & 2 & 0.054 \\
 & \WikiDe{} & \textbf{2.985} & \underline{3.020} & 3.030 & 3.034 & 3.043 & 2 & 0.057 \\
 & \WikiZh{} & \textbf{2.481} & \underline{2.493} & 2.503 & 2.516 & 2.528 & 2 & 0.047 \\
 & \WikiAr{} & \textbf{1.915} & \underline{1.919} & 1.923 & 1.926 & 1.931 & 2 & 0.016 \\
\midrule
\multirow{6}{*}{160M} & \FineMath{} & \underline{2.466} & \textbf{2.466} & 2.467 & \underline{2.466} & 2.468 & 4 & 0.002 \\
 & \FineMathN{} & 2.205 & \textbf{2.201} & 2.206 & \underline{2.203} & 2.210 & 4 & 0.009 \\
 & \CodeParrot{} & 1.408 & \textbf{1.406} & \underline{1.407} & \underline{1.407} & 1.409 & 4 & 0.002 \\
 & \WikiDe{} & \textbf{2.698} & \underline{2.702} & 2.704 & 2.706 & 2.707 & 2 & 0.010 \\
 & \WikiZh{} & \textbf{2.328} & \underline{2.332} & 2.337 & 2.339 & 2.340 & 2 & 0.012 \\
 & \WikiAr{} & \textbf{1.859} & \underline{1.863} & 1.864 & 1.866 & 1.870 & 2 & 0.011 \\
\midrule
\multirow{6}{*}{410M} & \FineMath{} & \underline{2.181} & \underline{2.181} & \textbf{2.180} & \underline{2.181} & \underline{2.181} & 6 & 0.001 \\
 & \FineMathN{} & 1.979 & 1.978 & 1.977 & \textbf{1.976} & \underline{1.976} & 8 & 0.002 \\
 & \CodeParrot{} & 1.194 & 1.192 & \underline{1.191} & \underline{1.191} & \textbf{1.190} & 10 & 0.003 \\
 & \WikiDe{} & \textbf{2.348} & \underline{2.348} & \underline{2.348} & 2.349 & 2.349 & 2 & 0.001 \\
 & \WikiZh{} & \textbf{2.093} & \underline{2.097} & 2.099 & 2.100 & 2.101 & 2 & 0.008 \\
 & \WikiAr{} & \textbf{1.668} & \underline{1.674} & 1.676 & 1.678 & 1.679 & 2 & 0.012 \\
\bottomrule
\end{tabular}

\caption{$\rho$ tuning across all 30 model--corpus pairs, measured by final held-out validation loss after 20M tokens. The best loss in each row is \textbf{bold}, and the second-best is \underline{underlined}. The \emph{range} column is the difference between the largest and
smallest loss, computed before rounding; its median is $0.009$. At each model scale, the sweep uses the same peak learning rate as the main 100M-token \algoadaptive{} runs (Table~\ref{tab:lrsweep}), held fixed across $\rho$ values and corpora. The sweep uses a different seed from the learning-rate sweep, so its
$\rho=4$ entries on \FineMath{} differ from the \algoadaptive{}
entries of Table~\ref{tab:lrsweep} by at most $0.008$.} %

\label{tab:rho}
\end{table}

\begin{table}[!ht]
\centering\small
\begin{tabular}{@{}lrrrrr@{}}
\toprule
Model & matrices & median $\rho_{\mathrm{eff}}$ & p5--p95 & $\rho_{\mathrm{eff}}-\rho$ & Ritz residual \\
\midrule
Pythia-14M & 144 & 3.980 & 3.88--4.07 & $-0.020$ & 4.1$\times10^{-7}$ \\
Pythia-31M & 144 & 3.980 & 3.91--4.04 & $-0.020$ & 4.7$\times10^{-7}$ \\
Pythia-70M & 144 & 4.005 & 3.95--4.06 & $+0.005$ & 6.2$\times10^{-7}$ \\
Pythia-160M & 288 & 3.963 & 3.92--4.05 & $-0.037$ & 1.5$\times10^{-6}$ \\
Pythia-410M & 576 & 3.965 & 3.93--4.02 & $-0.035$ & 8.2$\times10^{-6}$ \\
\bottomrule
\end{tabular}

\caption{Numerical diagnostics for \algoadaptive{} at
the target gain $\rho=4$, summarized by model scale.
``matrices'' counts the instrumented two-dimensional
weight matrices across the six corpora.
We report the median effective gain $\rho_{\mathrm{eff}}$,
its 5th--95th percentile interval, and the difference
between the median and the target gain.
The final column reports the median Rayleigh--Ritz
residual of the retained spike subspaces.}
\label{tab:rhoeff}
\end{table}

\paragraph{Computation-time overhead.}
Table~\ref{tab:cost} reports per-step runtimes relative
to \muon{} at each model scale, averaged over the six
corpora.
The measured fixed-rank configurations with $p=1.5\%$
and $5\%$ require $1.19$--$1.59\times$ \muon{}'s
step time, compared with $1.36$--$2.13\times$ for
\algoadaptive{}.
At 70M--410M, these ranges narrow to
$1.19$--$1.27\times$ and $1.36$--$1.50\times$,
respectively.
The two measured fixed fractions have nearly identical
runtimes and are faster per step than the noise-calibrated
variant at every scale.
For the other methods, \adamw{} and \spectra{} take
$0.92$--$0.98\times$ and $0.98$--$1.00\times$
\muon{}'s step time, respectively, while \freon{} and
\deva{} incur overheads of up to $20\%$ and $34\%$.
These timings compare the implemented variants, which
differ in both noise calibration and subspace tracking, so they do
not isolate the cost of rank selection.
The overhead of the adaptive variant is largest at 14M ($113\%$) and
lies between $36\%$ and $66\%$ at the larger scales.

\begin{table}[p]
\centering\small
\setlength{\tabcolsep}{4pt}
\begin{tabular}{@{}lrrrrr@{}}
\toprule
Arm & 14M & 31M & 70M & 160M & 410M \\
\midrule
\muon{} (ms/step) & $173{\scriptstyle\,\pm\,5}$ & $308{\scriptstyle\,\pm\,1}$ & $617{\scriptstyle\,\pm\,0}$ & $635{\scriptstyle\,\pm\,1}$ & $1748{\scriptstyle\,\pm\,10}$ \\
\midrule
\adamw{} & $0.93{\scriptstyle\,\pm\,0.05}\times$ & $0.92{\scriptstyle\,\pm\,0.01}\times$ & $0.98{\scriptstyle\,\pm\,0.01}\times$ & $0.98{\scriptstyle\,\pm\,0.00}\times$ & $0.98{\scriptstyle\,\pm\,0.01}\times$ \\
\freon{} & $1.17{\scriptstyle\,\pm\,0.01}\times$ & $1.10{\scriptstyle\,\pm\,0.00}\times$ & $1.06{\scriptstyle\,\pm\,0.00}\times$ & $1.17{\scriptstyle\,\pm\,0.00}\times$ & $1.20{\scriptstyle\,\pm\,0.00}\times$ \\
\deva{} & $1.13{\scriptstyle\,\pm\,0.00}\times$ & $1.10{\scriptstyle\,\pm\,0.00}\times$ & $1.11{\scriptstyle\,\pm\,0.00}\times$ & $1.27{\scriptstyle\,\pm\,0.00}\times$ & $1.34{\scriptstyle\,\pm\,0.00}\times$ \\
\spectra{} & $0.98{\scriptstyle\,\pm\,0.01}\times$ & $1.00{\scriptstyle\,\pm\,0.00}\times$ & $1.00{\scriptstyle\,\pm\,0.00}\times$ & $1.00{\scriptstyle\,\pm\,0.00}\times$ & $1.00{\scriptstyle\,\pm\,0.00}\times$ \\
\algostatic{}$_{(1.5\%)}$ & $1.59{\scriptstyle\,\pm\,0.05}\times$ & $1.32{\scriptstyle\,\pm\,0.01}\times$ & $1.19{\scriptstyle\,\pm\,0.00}\times$ & $1.27{\scriptstyle\,\pm\,0.00}\times$ & $1.27{\scriptstyle\,\pm\,0.01}\times$ \\
\algostatic{}$_{(5\%)}$ & $1.59{\scriptstyle\,\pm\,0.04}\times$ & $1.33{\scriptstyle\,\pm\,0.01}\times$ & $1.19{\scriptstyle\,\pm\,0.00}\times$ & $1.27{\scriptstyle\,\pm\,0.00}\times$ & $1.27{\scriptstyle\,\pm\,0.01}\times$ \\
\algoadaptive{} & $2.13{\scriptstyle\,\pm\,0.05}\times$ & $1.66{\scriptstyle\,\pm\,0.00}\times$ & $1.36{\scriptstyle\,\pm\,0.00}\times$ & $1.50{\scriptstyle\,\pm\,0.00}\times$ & $1.48{\scriptstyle\,\pm\,0.01}\times$ \\
\bottomrule
\end{tabular}

\caption{Per-step wall-clock time at each model scale.
For each run, the per-step time is its total wall-clock time divided
by its number of steps.
Entries are mean $\pm$ standard deviation over the runs at that
scale, across six corpora and three seeds.
The \muon{} row reports milliseconds per step;
all other rows report runtime relative to \muon{}
at the same scale.
Fixed-rank timings cover $p=1.5\%$ and $5\%$.
Measurements were collected on shared cluster nodes
and are subject to scheduling variability. The ratios within a scale are stable. %
}
\label{tab:cost}
\end{table}

\begin{table}[p]
\centering\small
\begin{tabular}{@{}lrcrrr@{}}
\toprule
Model & $\dmodel$ & micro-batch $\times$ accum & steps & tokens & \muon{} ms/step \\
\midrule
Pythia-14M & 128 & 32$\times$1 & 3051 & 100M & $173{\scriptstyle\,\pm\,5}$ \\
Pythia-31M & 256 & 32$\times$1 & 3051 & 100M & $308{\scriptstyle\,\pm\,1}$ \\
Pythia-70M & 512 & 32$\times$1 & 3051 & 100M & $617{\scriptstyle\,\pm\,0}$ \\
Pythia-160M & 768 & 32$\times$1 & 3051 & 100M & $635 {\scriptstyle\,\pm\,1}$ \\
Pythia-410M & 1024 & 32$\times$1 & 3051 & 100M & $1748{\scriptstyle\,\pm\,10}$ \\
\bottomrule
\end{tabular}

\caption{Pythia model configurations and training budgets.
$\dmodel$ denotes the hidden width, and
``micro-batch $\times$ accum'' gives the number of
sequences per microbatch and the number of gradient
accumulation steps.
All scales use an effective batch of 32 sequences of
length 1024, giving 32,768 tokens per step and
approximately 100M tokens over 3,051 steps.
The \muon{} timing entries reproduce those in
Table~\ref{tab:cost}: mean $\pm$ standard deviation over six corpora
and three seeds at each scale. %
}
\label{tab:models}
\end{table}

\begin{figure}[t]
\centering
\includegraphics[width=\linewidth]{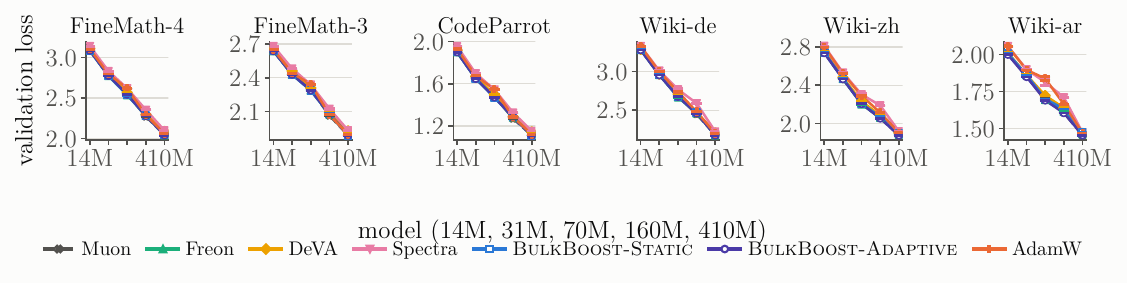}
\caption{Final held-out validation loss as a function of model size, with one panel per corpus and all seven optimization methods shown. Across all corpora, final loss decreases consistently as model size increases, matching the expected scaling trend. The rate of decrease differs by corpus, reflecting differences in adaptation difficulty, but the overall monotonic improvement with scale remains stable. The \algostatic{} curve uses $p=20\%$ (Table~\ref{tab:fracsweep}); the
other curves match Table~\ref{tab:main}.}
\label{fig:bycorpus}
\end{figure}

\clearpage
\subsection{Additional learning curves}
\label{app:curves}
\label{sec:exp:curvegrid}

\paragraph{Final loss against model size.}
Figure~\ref{fig:bycorpus} plots final loss against model size, with one
panel per corpus and all seven methods shown on an absolute loss axis.
The curves match Table~\ref{tab:main}, except that \algostatic{} is
shown at $p=20\%$ (Table~\ref{tab:fracsweep}).
This view makes the scaling trend clear: final loss decreases steadily
with model size on every corpus.

Figures~\ref{fig:grid}--\ref{fig:grid-wiki_ar} show learning curves
for the five remaining corpora---FineMath-4, FineMath-3, CodeParrot,
Wiki-de, and Wiki-ar.
Those figures use the same layout as Figure~\ref{fig:grid-wiki_zh}
(Wiki-zh).
All fixed-rank curves use $p=1.5\%$.
\begin{figure}[p]
\centering
\includegraphics[width=\linewidth]{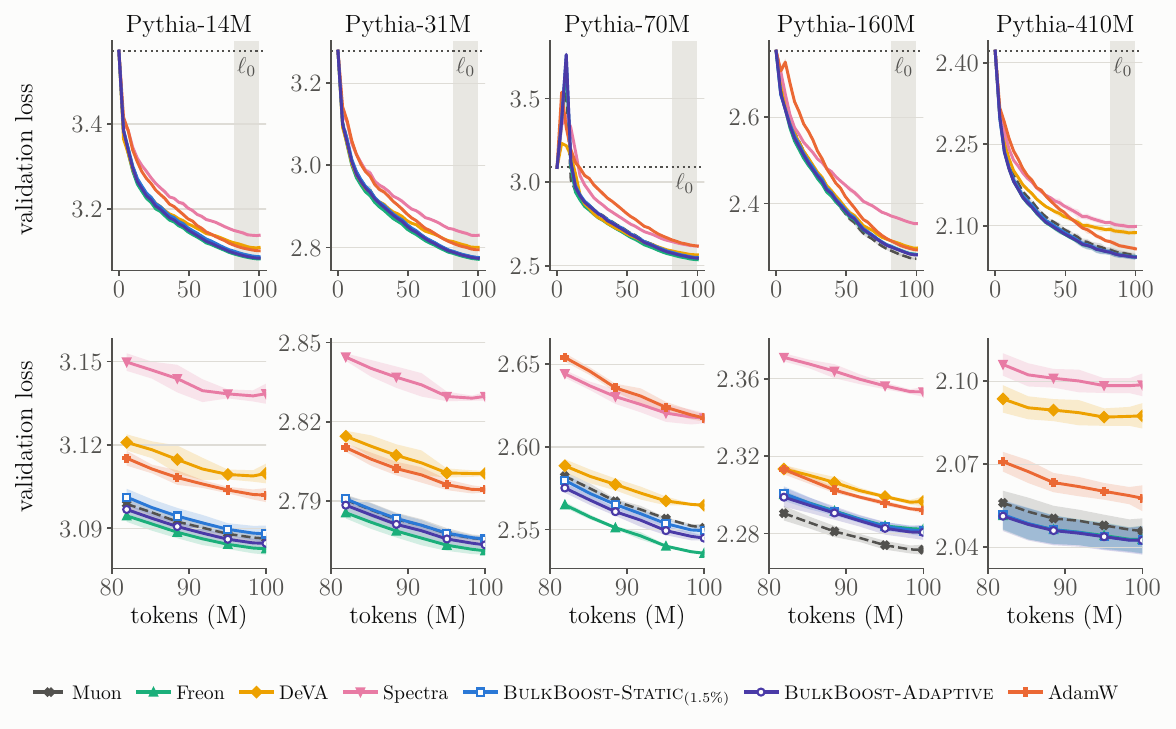}
\caption{Held-out loss vs.\ tokens on \FineMath{}, one column per scale. Top: the
full run, with the pre-adaptation loss $\ell_0$ dotted and the zoom window shaded.
Bottom: the shaded window, rescaled. Note the $y$-ranges differ per panel.}
\label{fig:grid}
\end{figure}

\begin{figure}[p]
\centering
\includegraphics[width=\linewidth]{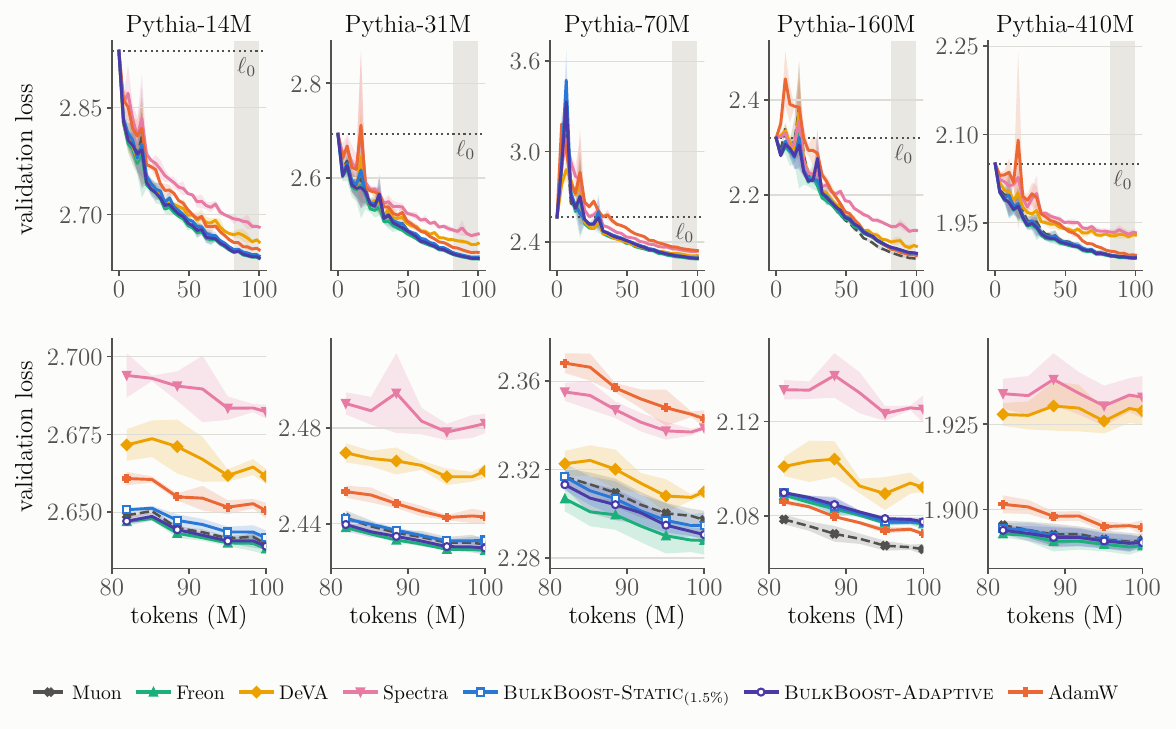}
\caption{Held-out loss vs.\ tokens on \FineMathN{}, one column per scale. Top: the
full run, with the pre-adaptation loss $\ell_0$ dotted and the zoom window shaded.
Bottom: the shaded window, rescaled. Note the $y$-ranges differ per panel.}
\label{fig:grid-finemath3}
\end{figure}

\begin{figure}[p]
\centering
\includegraphics[width=\linewidth]{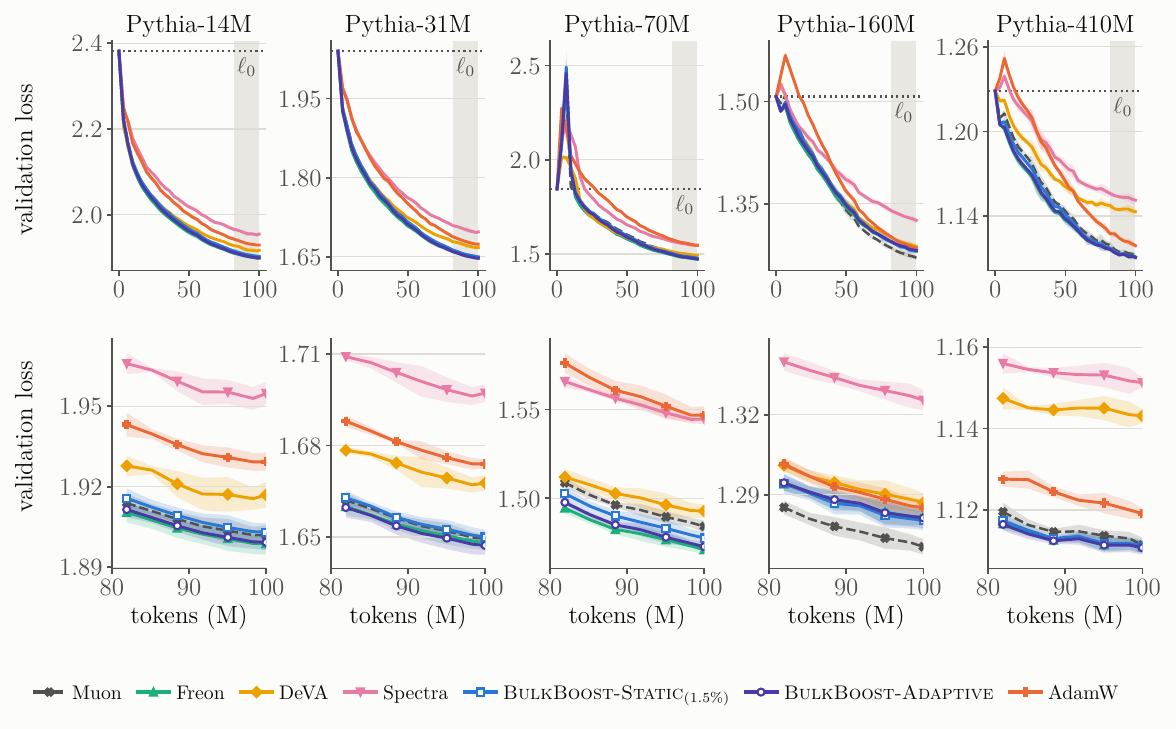}
\caption{Held-out loss vs.\ tokens on \CodeParrot{}, one column per scale. Top: the
full run, with the pre-adaptation loss $\ell_0$ dotted and the zoom window shaded.
Bottom: the shaded window, rescaled. Note the $y$-ranges differ per panel.}
\label{fig:grid-code_py}
\end{figure}

\begin{figure}[p]
\centering
\includegraphics[width=\linewidth]{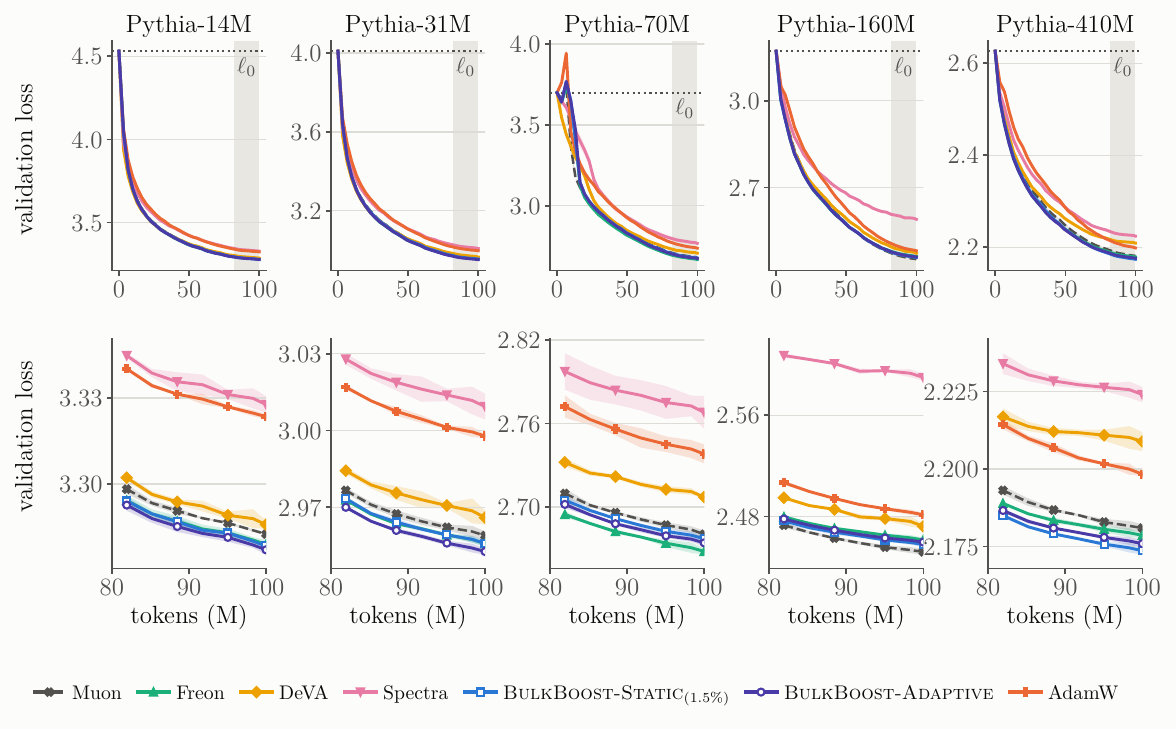}
\caption{Held-out loss vs.\ tokens on \WikiDe{}, one column per scale. Top: the
full run, with the pre-adaptation loss $\ell_0$ dotted and the zoom window shaded.
Bottom: the shaded window, rescaled. Note the $y$-ranges differ per panel.}
\label{fig:grid-wiki_de}
\end{figure}

\begin{figure}[p]
\centering
\includegraphics[width=\linewidth]{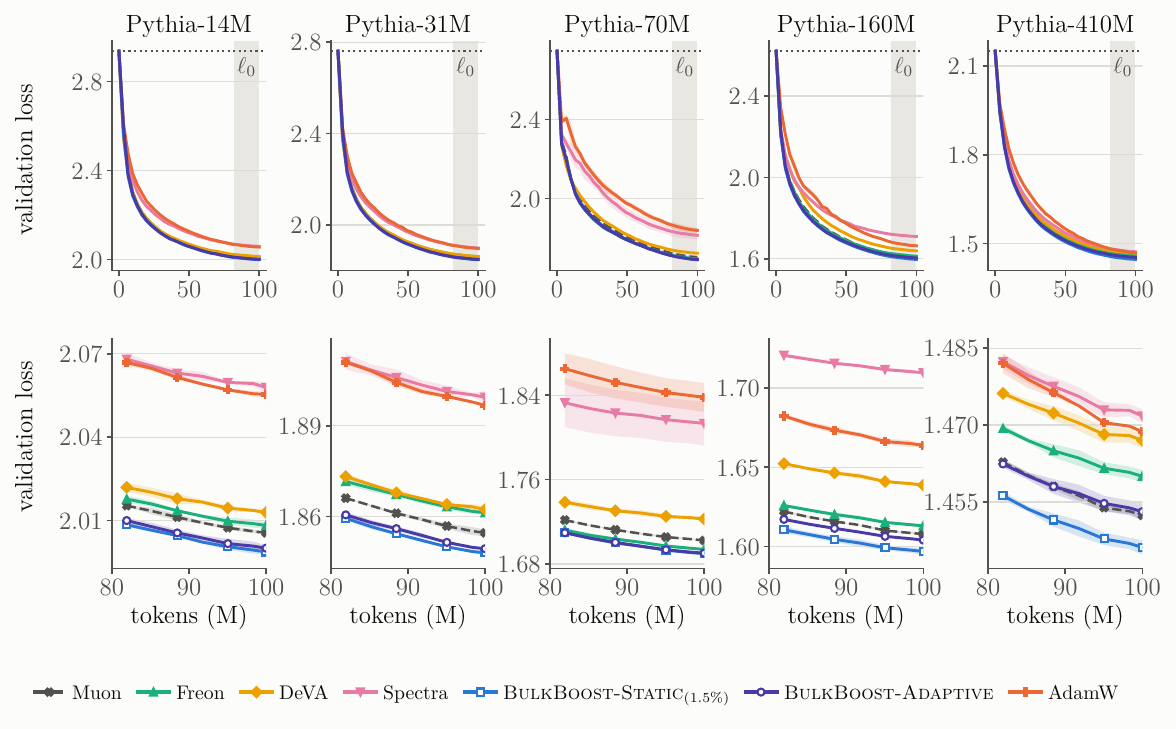}
\caption{Held-out loss vs.\ tokens on \WikiAr{}, one column per scale. Top: the
full run, with the pre-adaptation loss $\ell_0$ dotted and the zoom window shaded.
Bottom: the shaded window, rescaled. Note the $y$-ranges differ per panel.}
\label{fig:grid-wiki_ar}
\end{figure}

\end{document}